\numberwithin{theorem}{section}
\setlist{nosep}
\let\c@table\c@figure % make the table counter be the same as the figure counter
\let\ftype@table\ftype@figure % make the table float type be the same as the figure float type
\newcommand{\clSt}{{\tilde \clS}}
\newcommand{\clXt}{{\tilde \clX}}
\newcommand{\aseq}[1]{\overset{\text{\tiny $#1$}}{=}}
\newcommand{\assubseteq}[1]{\subseteq^{\text{$#1$}}}
\newcommand{\assupseteq}[1]{\supseteq^{\text{$#1$}}}
\let\origtimes\times
\renewcommand{\times}{\!\origtimes\!}
\let\origotimes\otimes
\renewcommand{\otimes}{\!\origotimes\!}
\newcommand{\subpm}[1]{\scriptsize{$\pm\!$#1}}
\newcommand{\ourmodel}{CyGen\xspace}
\newcommand{\ourmodelpt}{\ourmodel{}{\small{(PT)}}\xspace}
\titlespacing*{\section}{2pt}{4pt}{2pt} % {2pt}{6pt}{2pt}
\titlespacing*{\subsection}{1pt}{3pt}{1pt} % {2pt}{4pt}{1pt}
\titlespacing*{\subsubsection}{1pt}{2pt}{1pt} % {2pt}{2pt}{1pt}
\title{On the Generative Utility of Cyclic Conditionals}
\author{%
  Chang Liu$^1$\thanks{Correspondence to: Chang Liu <\texttt{changliu@microsoft.com}>.},
  Haoyue Tang$^2$\thanks{Work done during an internship at Microsoft Research Asia.},
  Tao Qin$^1$, Jintao Wang$^2$, Tie-Yan Liu$^1$ \\
  $^1$ Microsoft Research Asia, Beijing, 100080. \;
  % \texttt{\small \{changliu,taoqin,tyliu\}@microsoft.com} \\
  $^2$ Tsinghua University, Beijing, 100084.
  % \texttt{\small \{tanghaoyue13,wangjintao\}@tsinghua.\{org,edu\}.cn}
  % \texttt{tanghaoyue13@tsinghua.org.cn,wangjintao@tsinghua.edu.cn}
  % David S.~Hippocampus\thanks{Use footnote for providing further information
  %   about author (webpage, alternative address)---\emph{not} for acknowledging
  %   funding agencies.} \\
  % Department of Computer Science\\
  % Cranberry-Lemon University\\
  % Pittsburgh, PA 15213 \\
  % \texttt{hippo@cs.cranberry-lemon.edu} \\
  % examples of more authors
  % \And
  % Coauthor \\
  % Affiliation \\
  % Address \\
  % \texttt{email} \\
  % \AND
  % Coauthor \\
  % Affiliation \\
  % Address \\
  % \texttt{email} \\
  % \And
  % Coauthor \\
  % Affiliation \\
  % Address \\
  % \texttt{email} \\
  % \And
  % Coauthor \\
  % Affiliation \\
  % Address \\
  % \texttt{email} \\
}
\begin{document}
\abovedisplayskip=3pt
\belowdisplayskip=4pt
\abovedisplayshortskip=3pt
\belowdisplayshortskip=4pt

\vspace{-1pt}
\maketitle
\vspace{-1pt}

% We study whether and how can we model a joint distribution $p(x,z)$ using two conditional models $p(x|z)$ and $q(z|x)$ that form a cycle. This is motivated by the observation that deep generative models, in addition to a likelihood model $p(x|z)$, often also use an inference model $q(z|x)$ for extracting representation, but they rely on a usually uninformative prior distribution $p(z)$ to define a joint distribution, which may render problems like posterior collapse and manifold mismatch. To explore the possibility to model a joint distribution using only $p(x|z)$ and $q(z|x)$, we study their compatibility and determinacy, corresponding to the existence and uniqueness of a joint distribution whose conditional distributions coincide with them. We develop a general theory for operable equivalence criteria for compatibility, and sufficient conditions for determinacy. Based on the theory, we propose a novel generative modeling framework CyGen that only uses the two cyclic conditional models. We develop methods to achieve compatibility and determinacy, and to use the conditional models to fit and generate data. With the prior constraint removed, CyGen better fits data and captures more representative features, supported by both synthetic and real-world experiments.
\begin{abstract}
  We study whether and how can we model a joint distribution $p(x,z)$ using two conditional models $p(x|z)$ and $q(z|x)$ that form a cycle.
  % We study the utility for modeling a joint distribution $p(x,z)$ using two conditional models $p(x|z)$ and $q(z|x)$ that form a cycle.
  This is motivated by the observation that deep generative models, in addition to a likelihood model $p(x|z)$, often also use an inference model $q(z|x)$ for extracting representation, % a cycle-completing model $q(z|x)$ for inference and data representation,
  but they rely on a usually uninformative prior distribution $p(z)$ to define a joint distribution, which may render problems like posterior collapse and manifold mismatch.
  To explore the possibility to model a joint distribution using only $p(x|z)$ and $q(z|x)$, we study their \emph{compatibility} and \emph{determinacy}, corresponding to the existence and uniqueness of a joint distribution whose conditional distributions coincide with them. % taking them as conditionals.
  We develop a general theory for operable equivalence criteria for compatibility, % covering general cases,
  and sufficient conditions for determinacy. % , under the two common cases of absolutely continuous and Dirac conditionals.
  Based on the theory, we propose a novel generative modeling framework \ourmodel that only uses the two cyclic conditional models.
  We develop methods to achieve compatibility and determinacy, and to use the conditional models to fit and generate data. % for learning, and fit the determined distribution to data, as well as data generation.
  With the prior constraint removed, \ourmodel better fits data and captures more representative features, supported by both synthetic and real-world experiments. % showing better generation and downstream classification performance.
\end{abstract}

\vspace{-1pt}
\section{Introduction} \label{sec:intr}
\vspace{-1pt}

Deep generative models have achieved a remarkable success in the past decade for generating realistic complex data $x$ % like image, text and audio,
and extracting useful representations through their latent variable $z$.
Variational auto-encoders (VAEs)~\citep{kingma2014auto, rezende2014stochastic, bornschein2016bidirectional, bowman2016generating, kingma2016improved, van2018sylvester} follow the Bayesian framework and specify a prior distribution $p(z)$ and a likelihood model $p(x|z)$,
so that a joint distribution $p(x,z) = p(z) p(x|z)$ is defined for generative modeling (the joint induces a distribution $p(x)$ on data).
An inference model $q(z|x)$ is also used to approximate the posterior distribution $p(z|x)$ (derived from the joint $p(x,z)$), which serves for extracting representations. % (and learning).
% \chang{\ldots also an inference model $q(z|x)$ in the other direction that completes the cycle.}
Other frameworks like generative adversarial nets~\citep{goodfellow2014generative, donahue2017adversarial, dumoulin2017adversarially}, flow-based models~\citep{dinh2015nice, papamakarios2017masked, kingma2018glow, grathwohl2019ffjord} and diffusion-based models~\citep{sohl2015deep, ho2020denoising, song2021score, kong2020diffwave} follow the same structure, with different choices of the conditional models $p(x|z)$ and $q(z|x)$ and training objectives.
While for the prior $p(z)$, there is often not much knowledge % on the latent variable $z$
for complex data (like images, text, audio),
and these models widely adopt an uninformative prior such as a standard Gaussian.
% {\haoyue This assumption causes the following side effect on generative models}
This however, introduces some side effects:
\vspace{-1pt}
\begin{itemize}[leftmargin=18pt]
  \item \textbf{Posterior collapse}~\citep{bowman2016generating, he2019lagging, razavi2019preventing}: % , dai2020posterior}:
    % {\haoyue Training framework such as VAE requires separate posterior model $q(z|x)$ that for variational inference. However, it is observed that in practice the posterior collapse to uninformative prior during training.}
    The standard Gaussian prior tends to squeeze $q(z|x)$ towards the origin for all $x$, % , and when the likelihood model $p(x|z)$ is very flexible, it could well recover the data distribution from any latent code thus allows the inference model $q(z|x)$ to simply ignore its input.
    which degrades the representativeness of the inferred $z$ for $x$ and hurts downstream tasks in the latent space like classification and clustering.
  \item \textbf{Manifold mismatch}~\citep{davidson2018hyperspherical, falorsi2018explorations, kalatzis2020variational}:
    Typically the likelihood model is continuous (keeps topology), % so the modeled data space has the same topology as the latent space,
    so the standard Gaussian prior would restrict the modeled data distribution to a simply-connected support,
    which limits the capacity for fitting data from a non-(simply) connected support.
\end{itemize}
\vspace{-1pt}
While there are works trying to mitigate the two problems, they require either a strong domain knowledge~\citep{kocaoglu2018causalgan, kalatzis2020variational}, or additional cost to learn a complicated prior model~\citep{li2018graphical, dai2019diagnosing, vahdat2020nvae} sometimes even at the cost of inconvenient inference~\citep{pang2020learning, xiao2021vaebm}. % and data generation.
% Moreover, the prior is limited by the explicit parametric model anyway.

One question then naturally emerges: \emph{Can we model a joint distribution $p(x,z)$ only using the likelihood $p(x|z)$ and inference $q(z|x)$ models?}
% The inference model is there anyway, so instead of introducing an embarrassing prior and let it be an approximation, why not just let it define a joint?
% Both conditional models are desired for generation and inference.
If we can, the limitations from specifying or learning a prior are then removed from the root.
Also, the inference model $q(z|x)$ is then no longer a struggling approximation to a predefined posterior but participates in defining the joint distribution (avoid ``inner approximation'').
% We do not have to specify a prior and get rid of the restrictions of an uninformative choice.
Modeling conditionals is also argued to be much easier than modeling marginal or joint distributions directly % as the conditionals tend to be unimodal
\citep{alain2014regularized, bengio2013generalized, bengio2014deep}.
In many cases, one may even have better knowledge on the conditionals than on the prior, \eg shift/rotation invariance of image representations (CNNs~\citep{lecun1989backpropagation} / SphereNet~\citep{coors2018spherenet}), and rules to extract frequency/energy features for audio~\citep{ren2020fastspeech}. % shift/rotation invariance of a feature/representation, or a frequency-domain representations.
It is then more natural and effective to incorporate this knowledge into the conditionals than using an uninformative prior. %, and is easier than finding out the corresponding prior.

\vspace{-1pt}
In this paper, we explore such a possibility, and develop both a systematic theory and a novel generative modeling framework \ourmodel (\textbf{Cy}clic-conditional \textbf{Gen}erative model).

\vspace{-1pt}
\bfone Theoretical analysis on the question amounts to two sub-problems: can two given cyclic conditionals correspond to a common joint, and if yes, can they determine the joint uniquely.
We term them \emph{compatibility} and \emph{determinacy} of two conditionals, corresponding to the existence and uniqueness of a common joint.
% Note that this is also equivalent to determining a marginal on $x$ or $z$, since the joint determines the marginals, and a marginal determines the joint with the corresponding conditional.
For this, we develop novel compatibility criteria and sufficient conditions for determinacy. % in cases of absolutely continuous conditionals and Dirac conditionals.
Beyond existing results, ours are operable (vs. existential~\citep{berti2014compatibility}) and self-contained (vs. need a marginal~\citep{bengio2013generalized, bengio2014deep, lamb2017gibbsnet, grover2019uncertainty}), and are general enough to cover both continuous and discrete cases.
Our compatibility criteria are also equivalence (vs. unnecessary~\citep{abrahams1984note, arnold1989compatible, arnold2001conditionally, arnold2012conditionally}) conditions.
% We also develop a theory in the Dirac case (deterministic conditionals), where a connection to existing cyclic consistence loss is made.
% Generative nature of cyclic conditionals.
The seminal book~\citep{arnold2012conditionally} makes extensive analysis for various parametric families.
Besides the equivalence criteria, we % formalize and generalize its general description (Ch.~2.3) on determinacy beyond the product support case, besides the equivalence compatibility criteria.
also extend their general analysis % (Ch.~2)
beyond the product support case, and also cover the Dirac case.
% They also extend to multivariate case. (can we also extend in the same spirit?)
% Their parameter estimation method is based on the parametric form of the joint density, where the normalizing constant needs to be estimated using numerical integration.
% Their generation method (simulation) is based on reject sampling, which is regarded unacceptably inefficient for high-dimensional applications.

\vspace{-1pt}
\bftwo In addition to its independent contribution, the theory also enables generative modeling using only the two cyclic conditional models, \ie the \ourmodel framework.
We develop methods for achieving compatibility and determinacy to make an eligible generative model, and for fitting and generating data to serve as a generative model.
Efficient implementation techniques are designed.
% We also propose data generation methods for \ourmodel. % especially a dynamics-based data generation method using the conditionals that is more efficient and robust than the existing Gibbs-sampling-based methods.
Note \ourmodel also determines a prior implicitly; it just does not need an explicit model for the prior (vs.~\citep{li2018graphical, dai2019diagnosing, vahdat2020nvae, pang2020learning}).
We show the practical utility of \ourmodel in both synthetic and real-world tasks.
The improved performance in downstream classification and data generation demonstrates the advantage to mitigate the posterior collapse and manifold mismatch problems. % the side effects of using a predefined prior is mitigated, and that it is also possible to incorporate knowledge without the prior.

\vspace{-1pt}
\subsection{Related work} \label{sec:relw}
\vspace{-2pt}

\textbf{Dependency networks} (\citep{heckerman2000dependency}; similarly~\citep{hofmann1998nonlinear})
are perhaps the first to pursue the idea of modeling a joint by a set of conditionals.
They use Gibbs sampling to determine the joint % which exists uniquely for fully-supported conditionals (even when incompatible),
and are equivalent % (or more expressive when incompatible)
to undirected graphical models. % with the same adjacency.
% and can express the same (or larger when incompatible) set of joints as undirected graphical models. % with the same adjacency.
They do not allow latent variables, so compatibility is not a key consideration as the data already specifies a joint as the common target of the conditionals.
Beyond that, we introduce latent variables to better handle sensory data like images,
for which we analyze the conditions for compatibility and determinacy and design novel methods to solve this different task.
% We also design novel methods to determine the joint and fit data, as it is not a set of supervised tasks anymore.

\vspace{-1pt}
\textbf{Denoising auto-encoders} (DAEs).
AEs~\citep{rumelhart1986learning, baldi1989neural} aim to extract data features by enforcing reconstruction through its encoder and decoder, % , where the decoder is desired to reconstruct the input so as to enforce the encoder to preserve as representative information as possible.
which are deterministic hence insufficient determinacy (see Sec.~\ref{sec:determ-dirac}). % cannot determine a joint (insufficient determinacy).
DAEs~\citep{vincent2008extracting, bengio2013generalized, bengio2014deep} use a probabilistic encoder and decoder for robust reconstruction against random data corruption.
Their utility as a generative model is first noted through the equivalence to score matching (implies modeling $p(x)$) for a Gaussian RBM~\citep{vincent2011connection} or an infinitesimal Gaussian corruption~\citep{alain2014regularized}.
In more general cases, the utility to modeling the joint $p(x,z)$ is studied via the Gibbs chain, \ie the Markov chain with transition kernel $p(x'|z') q(z'|x)$.
Under a global~\citep{bengio2013generalized, lamb2017gibbsnet, grover2019uncertainty} or local~\citep{bengio2014deep} shared support condition, its stationary distribution $\pi(x,z)$ exists uniquely.
But this is \emph{not really determinacy}: even incompatible conditionals can have this unique existence, in which case $\pi(z|x) \ne q(z|x)$~\citep{heckerman2000dependency, bengio2013generalized}.
Moreover, % this approach to determining the joint
the Gibbs chain does not give an explicit expression of $\pi(x,z)$ (thus intractable likelihood evaluation), and requires many iterations % a large number of steps
to converge for data generation and even for training (Walkback~\citep{bengio2013generalized}, GibbsNet~\citep{lamb2017gibbsnet}), making the algorithms costly and unstable.

\vspace{-1pt}
As for \emph{compatibility}, it is not really covered in DAEs.
Existing results only consider the statistical consistency (unbiasedness under infinite data) of the $p(x|z)$ estimator by fitting $(x,z)$ data from $p^*(x) q(z|x)$~\citep{bengio2013generalized, bengio2014deep, lamb2017gibbsnet, grover2019uncertainty}, where $p^*(x)$ denotes the true data distribution.
Particularly, they require a marginal $p^*(x)$ in advance, so that the joint is already defined by $p^*(x) q(z|x)$ regardless of $p(x|z)$, while compatibility (as well as determinacy) is a matter only of the two conditionals.

\vspace{-1pt}
More crucially, the DAE loss is not proper for optimizing $q(z|x)$ as it promotes a mode-collapse behavior.
This hinders both compatibility and determinacy (Sec.~\ref{sec:meth-data}):
%$\pi(z|x)$ from the Gibbs chain unnecessarily coincides with $q(z|x)$~\citep{bengio2013generalized} so
one may not use $q(z|x)$ for inference, and % density ratio estimation in their methods.
%$\pi(x,z)$ may not be unique so
data generation may depend on initialization.
% Moreover, in a non-asymptotic regime, compatibility is unlikely due to the support-covering bias of the forward KL loss they optimize~\citep{huszar2015not, theis2016note}. % {\haoyue The aforementioned work fail to verify the obtained two conditionals are truely compatible.}% as done in~\citep{bengio2013generalized, alain2014regularized}.
% In contrast, \ourmodel does not hinder determinacy by design, and explicitly enforces compatibility. It also enables likelihood evaluation and more efficient data generation.
In contrast, \ourmodel explicitly enforces compatibility and guarantees determinacy, and enables likelihood evaluation and better generation.

% \textbf{Deep generative models with inference.} \hspace{4pt}
% Variational auto-encoders (VAEs)~\citep{kingma2014auto, rezende2014stochastic, bornschein2016bidirectional} are early successful attempts to deep Bayesian generative modeling.
% Following methods like BiGAN~\citep{donahue2017adversarial} / ALI~\citep{dumoulin2017adversarially}, flow-based models~\citep{dinh2015nice, papamakarios2017masked, kingma2018glow, grathwohl2019ffjord} and diffusion-based models~\citep{sohl2015deep, ho2020denoising, song2021score} adopt more general frameworks, but also rely on specifying a prior.
% CausalGAN~\citep{kocaoglu2018causalgan} and graphical-GAN~\citep{li2018graphical} use a structured and informative prior based on domain-specific knowledge.
% \citet{pang2020learning} learn a flexible prior using an energy-based model, at a cost of less efficient inference using MCMC.
% Moreover, our determinacy theory suggests that when using a probabilistic inference model, the prior is already defined and does not need a separate model for it.
% we: implicit thus flexible prior

\vspace{-1pt}
\textbf{Dual learning} \hspace{4pt}
considers conversion between two modalities in both directions, \eg, machine translation~\citep{he2016dual, xia2017duals, xia2017duali} and image style transfer~\citep{kim2017learning, zhu2017unpaired, yi2017dualgan, lin2019exploring}.
Although we also consider both directions, % (generation and inference),
the fundamental distinction is that in generative modeling there is no data of the latent variable $z$ (not even unpaired). % \footnote{Constructing $z$ data as samples from a prior returns to the discussion above.}.
Technically, they did not consider determinacy: they require a marginal % from either a model or data
to determine a joint. % or the other-side marginal.
We find their determinacy is actually insufficient (see Sec.~\ref{sec:determ-dirac}).
% In fact, determinacy is insufficient for deterministic conversions (see Sec.~\ref{sec:determ-dirac}).
Their cycle-consistency loss~\citep{kim2017learning, zhu2017unpaired, yi2017dualgan} is a version of our compatibility criterion in the Dirac case (see Sec.~\ref{sec:compt-dirac-crit}),
and we extend it to allow probabilistic conversion % extend the loss to the more general absolutely continuous case (allowing probabilistic conversion)
(see Sec.~\ref{sec:meth-compt}).

\vspace{-1pt}
\section{Compatibility and Determinacy Theory} \label{sec:thry}
\vspace{-2pt}

To be a generative model, a system needs to determine a distribution on the data variable $x$.
With latent variable $z$, this amounts to determining a joint distribution over $(x,z)$,
which calls for compatibility and determinacy analysis for cyclic conditionals.
% \ie the existence and uniqueness of a joint that induces the given conditionals. Compatibility means that the two conditionals do not conflict with each other.
In this section we build a general theory on the conditions for compatibility and determinacy.
% It also lays the foundation of our novel \ourmodel framework for generative modeling.
We begin with formalizing the problems.
% {\haoyue Apparently, not all conditionals $p(x|z)$ and $q(z|x)$ can describe a common joint distribution. We first establish the criterion for two conditionals to describe a common joint in Section 3 and then derive}

\vspace{-1pt}
\textbf{Setup.} \hspace{4pt}
% We consider the case of two random variables, since for any alternating conditionals, $\{p(x_j | x_{\backslash j})\}_{i=1}^d$, we can always first consider the compatibility and joint distribution of $p(x_1 | x_2, x_{3:d})$ and $p(x_2 | x_1, x_{3:d})$,
% and once this is shown, we go on with $p(x_1, x_2 | x_3, x_{4:d})$ and $p(x_3 | x_1, x_2, x_{4:d})$.
% Iterating this procedure until considering $p(x_{1:d-1} | x_d)$ and $p(x_d | x_{1:d-1})$ shows the compatibility and joint distribution of $\{p(x_j | x_{\backslash j})\}_{j=1}^d$.
% In this work we focus on cyclic conditionals on two random variables $x$ and $z$ (extension to more variables follows a similar way).
Denote the measure spaces of the two random variables $x$ and $z$ as $(\bbX, \scX, \xi)$ and $(\bbZ, \scZ, \zeta)$\footnote{
  The symbol $\bbZ$ overwrites the symbol for the set of integers, which is not involved in this paper.
}, % respectively.
where $\scX$, $\scZ$ are the respective sigma-fields, and the base measures $\xi$, $\zeta$ (\eg, Lebesgue measure on Euclidean spaces, counting measure on finite/discrete spaces) are sigma-finite. % (unnecessarily finite).
We use $\clX \in \scX$, $\clZ \in \scZ$ to denote measurable sets, and use ``{\small $\aseq{\xi}$}'', ``$\assubseteq{\xi}$'' as the extensions of ``$=$'', ``$\subseteq$'' up to a set of $\xi$-measure-zero (Def.~\ref{def:assame-assubseteq}).
Following the convention in machine learning, we call a ``probability measure'' as a ``distribution''. % , and call a ``probability density/mass function'' as a ``density function''.
We do not require any further structures such as topology, metric, or linearity,
for the interest of the most general conclusions that unify Euclidean/manifold and finite/discrete spaces and allow $\bbX$, $\bbZ$ to have different dimensions or types.

\vspace{-1pt}
Joint and conditional distributions are defined on the product measure space $(\bbX\times\bbZ, \scX\otimes\scZ, \xi\otimes\zeta)$,
where ``$\times$'' is the usual Cartesian product, $\scX\otimes\scZ := \sigma(\scX\times\scZ)$ is the sigma-field generated by measurable rectangles from $\scX\times\scZ$, and $\xi\otimes\zeta$ is the product measure~\citep[Thm.~18.2]{billingsley2012probability}.
Define the \emph{slice} of $\clW \in \scX\otimes\scZ$ at $z$ as $\clW_z := \{x \mid (x,z) \in \clW\} \in \scX$~\citep[Thm.~18.1(i)]{billingsley2012probability}, % , which lies in $\scX$,
and its \emph{projection} onto $\bbZ$ as $\clW^\bbZ := \{z \mid \exists x \in \bbX \st (x,z) \in \clW\} \in \scZ$ (Appx.~\ref{supp:meas-prod}).
In a similar style, denote the \emph{marginal} of a joint $\pi$ on $\bbZ$ as $\pi^\bbZ(\clZ) := \pi(\bbX \times \clZ)$. % , \forall \clZ \in \scZ$
To keep the same level of generality, we follow the general definition of conditionals (\citep[p.457]{billingsley2012probability}; see also Appx.~\ref{supp:meas-cond}):
% $\forall \clX \in \scX$,
the conditional $\pi(\clX|z)$ of a joint $\pi$ is the density function (R-N derivative) % (Radon-Nikodym derivative)
of $\pi(\clX \times \cdot)$ w.r.t $\pi^\bbZ$.
We highlight the key characteristic under this generality that $\pi(\cdot|z)$ can be arbitrary on a set of $\pi^\bbZ$-measure-zero, particularly, outside the support of $\pi^\bbZ$. % and it defines a distribution on $\bbX$~\citep[Thm.~33.2]{billingsley2012probability},
Appx.~\ref{supp:meas} and~\ref{supp:lemma} provide more background details and our technical preparations that are also of independent interest.
The goal of analysis can then be formalized below.
% \vspace{1pt}
\begin{definition}[compatibility and determinacy] \label{def:compt-determ}
  We say two conditionals $\mu(\clX|z)$, $\nu(\clZ|x)$ are \emph{compatible},
  if there exists a joint distribution $\pi$ on $(\bbX\times\bbZ, \scX\otimes\scZ)$ such that % $\pi(\clX|z) = \mu(\clX|z)$ and $\pi(\clZ|x) = \nu(\clZ|x)$, for any $\clX \in \scX$, $\clZ \in \scZ$ and $\pi^\bbZ$-a.e. $z$ and $\pi^\bbX$-a.e. $x$.
  $\mu(\clX|z)$ and $\nu(\clZ|x)$ are its conditional distributions. % :
  % $\pi(\clX\times\clZ) = \int_\clZ \mu(\clX|z) \pi^\bbZ(\ud z) = \int_\clX \nu(\clZ|x) \pi^\bbX(\ud x),
  % \forall \clX \in \scX, \forall \clZ \in \scZ$.
  % We also say that conditional $\mu(\clX|z)$ or $\nu(\clZ|x)$ is compatible with the joint $\pi$ in this sense.
  We say two compatible conditionals have \emph{determinacy} on a set $\clS \in \scX\otimes\scZ$, if there is only one joint distribution concentrated on $\clS$ that makes them compatible.
\end{definition}
\vspace{-3pt}
To put the concept into practical use, the analysis aims at operable conditions for compatibility and determinacy.
We consider two cases separately (still unifying continuous and discrete cases), as they correspond to different types of generative models, and lead to qualitatively different conclusions.
% Note the concepts only involve the two conditionals, so we desire self-contained conditions, meaning free of any marginal or joint distributions (vs. DAE results).

\vspace{-1pt}
\subsection{Absolutely Continuous Case} \label{sec:ac}
\vspace{-3pt}

We first consider the case where for any $z \in \bbZ$ and any $x \in \bbX$,% $\mu(\cdot|z) \ll \xi, \forall z \in \bbZ$ and $\nu(\cdot|x) \ll \zeta, \forall x \in \bbX$
\footnote{There may be problems if absolute continuity holds only for $\zeta$-a.e. $z$ and $\xi$-a.e. $x$; see Appx. Example~\ref{exmp:joint-ac}.}
the conditionals $\mu(\cdot|z)$ and $\nu(\cdot|x)$ are either absolutely continuous (w.r.t $\xi$ and $\zeta$, resp.)~\citep[p.448]{billingsley2012probability}, or zero in the sense of a measure.
Equivalently, they have density functions $p(x|z)$ and $q(z|x)$ (non-negative by definition; may integrate to zero).
This case include ``smooth'' distributions on Euclidean spaces or manifolds, and \emph{all} distributions on finite/discrete spaces.
% We also refer to such a conditional by its density.
Many generative modeling frameworks use density models thus count for this case, including VAEs~\citep{kingma2014auto, rezende2014stochastic, rezende2015variational, kingma2016improved, van2018sylvester} and diffusion-based models~\citep{sohl2015deep, ho2020denoising, song2021score}.
% As conditional models are often specified in the form of density, this case covers many generative model frameworks, notably VAEs~\citep{kingma2014auto, rezende2014stochastic, rezende2015variational, kingma2016improved, van2018sylvester} and diffusion-based models~\citep{sohl2015deep, ho2020denoising, song2021score}.

\vspace{-1pt}
\subsubsection{Compatibility criterion in the absolutely continuous case} \label{sec:compt-ac-crit}
\vspace{-2pt}

One may expect that when absolutely continuous conditionals $p(x|z)$ and $q(z|x)$ are compatible, their joint is also absolutely continuous (w.r.t $\xi\otimes\zeta$) with some density $p(x,z)$.
This intuition is verified by our Lem.~\ref{lem:joint-ac} in Appx.~\ref{supp:proofs-joint-ac}.
% For a sloppy inspiration,
One could then safely apply density function formulae and get $\frac{p(x|z)}{q(z|x)} = \frac{p(x,z)}{p(z)} / \frac{p(x,z)}{p(x)} = \frac{p(x)}{p(z)}$
factorizes into a function of $x$ and a function of $z$. % wherever $q(z|x) > 0$.
Conversely, if the ratio factorizes as such $\frac{p(x|z)}{q(z|x)} = a(x) b(z)$, one could get $p(x|z) \frac{1}{A b(z)} = q(z|x) \frac{a(x)}{A}$ where $A := \int_\bbX a(x) \xi(\ud x)$, which defines a joint density and compatibility is achieved.
This intuition leads to the classical compatibility criterion [\citealp{abrahams1984note}; \citealp[Thm.~4.1]{arnold1989compatible}; \citealp[Thm.~1]{arnold2001conditionally}; \citealp[Thm.~2.2.1]{arnold2012conditionally}].

However, the problem is more complicated than imagined.
\citet[Example~9]{berti2014compatibility} point out that the classical criterion is unfortunately \emph{not necessary}.
The subtlety is about on which region does this factorization have to hold.
The classical criterion requires it to be the positive region of $p(x|z)$ which also needs to coincide with that of $q(z|x)$.
But as mentioned, conditional $\mu(\cdot|z)$ can be arbitrary outside the support of the marginal $\pi^\bbZ$ (similarly for $\nu(\cdot|x)$),
which may lead to additional positive regions that violate the requirement.% on which there is no compatible joint.
\footnote{The flexibility of $p(x|z)$ on a $\xi$-measure-zero set for a given $z$ (similarly for $q(z|x)$) is not a vital problem, as one can adjust the conditions to hold only a.e.}
To address the problem, \citet{berti2014compatibility} give an equivalence criterion (Thm.~8),
but it is \emph{existential} thus less useful as the definition of compatibility itself is existential.
Moreover, these criteria are restricted to either Euclidean or discrete spaces.

Next we give our \emph{equivalence} criterion that is \emph{operable}. % constructive\chang{``deductive'' or ``instructive''?}.
% Beyond~\citep[Thm.~8]{berti2014compatibility}, in addition to being operable, we also extend to general measure spaces other than the Euclidean space with the Lebesgue base measure, so the criterion also holds on finite/discrete spaces with the counting base measure.
% Compared with DAE, our criterion is self-contained, \ie it does not need knowledge on any marginals.
In addressing the subtlety with regions, we first introduce a related concept that helps identify appropriate regions.

\begin{wrapfigure}{r}{.320\textwidth}
  \centering
  \vspace{-12pt}
  \includegraphics[width=.318\textwidth]{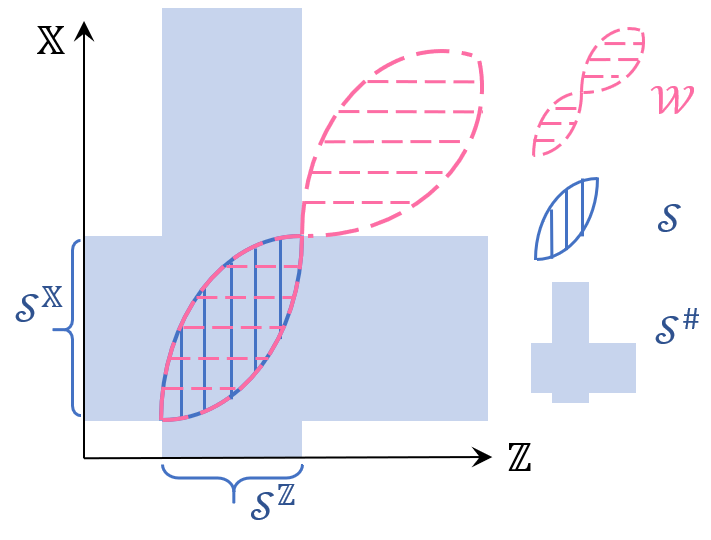}
  \vspace{-8pt}
  \caption{Illustration of a $\xi\otimes\zeta$-complete component $\clS$ of $\clW$.
  }
  \vspace{-8pt}
  \label{fig:irrcomp}
\end{wrapfigure}
\phantom{a}
\vspace{-14pt}

\begin{definition}[$\xi\otimes\zeta$-complete component] \label{def:irrcomp}
  For a set $\clW \in \scX\otimes\scZ$, we say that a set $\clS \in \scX\otimes\scZ$ is a \emph{$\xi\otimes\zeta$-complete component} of $\clW$, if
  $\clS^\sharp \cap \clW \aseq{\xi\otimes\zeta} \clS$, where $\clS^\sharp := \clS^\bbX \times \bbZ \cup \bbX \times \clS^\bbZ$ is the \emph{stretch} of $\clS$.
  % \begin{align} \label{eqn:def-irrcomp}
  %   (\clS^\bbX \times \bbZ \cup \bbX \times \clS^\bbZ) \cap \clW \aseq{\xi\otimes\zeta} \clS.
  % \end{align}
\end{definition}
\vspace{-1pt}
Fig.~\ref{fig:irrcomp} illustrates the concept.
Roughly, the stretch $\clS^\sharp$ of $\clS$ represents the region where the conditionals are a.s. determined if $\clS$ is the \emph{support}\footnote{
  While the typical definition of support requires a topological structure which is absent under our generality,
  Def.~\ref{def:support} in Appx.~\ref{supp:lemma-general} defines such a concept for absolutely continuous distributions.
} of the joint.
% Intuitively, such an $\clS$ contains $\xi\otimes\zeta$-a.s. all of $\clW$ when stretching along $\bbX$ or $\bbZ$ from $\clS$.
If $\clS$ is a complete component of $\clW$, it is complete under stretching and intersecting with $\clW$.
% There are two trivial $\xi\otimes\zeta$-complete components: $\emptyset$ and $\clW$.
Such a set $\clS$ is an a.s. subset of $\clW$ (Lem.~\ref{lem:irrcomp-assubseteq}), while has a.s. the same slice as $\clW$ does for almost all $z \in \clS^\bbZ$ and $x \in \clS^\bbX$ (Lem.~\ref{lem:irrcomp-inteq}). % a.s. covers $\clW$ in the strips $\bbX\times\clS^\bbZ$ and $\clS^\bbX\times\bbZ$.
This is critical for the normalizedness of distributions in our criterion.
Appx.~\ref{supp:lemma-irrcomp} shows more facts. % about the concept.
With this concept, our compatibility criterion is presented below.

% For necessity, conditionals are only determined on the stretch: if $p(x,z)$ has support $\clS$, then $p(x|z)$ and $q(z|x)$ are determined on $\clS^\sharp$.
% For sufficiency, the stretch covers all region that the conditionals take control.
% Integral on $\bigcup_{x \in \clS^\bbX} \{x\} \times \clW_x$ is the same on $\bigcup_{x \in \clS^\bbX} \{x\} \times \clS_x$ (for normalization; for sufficiency).
% Contains all mass that two conditionals would define (for sufficiency, related to the ``z in support -> Pz in support of piX -> q should also put mass'' thing).
% On the other hand, for necessity, the two conditionals are not aware of how the joint distributes mass over different complete components.

\begin{wrapfigure}{r}{.470\textwidth}
  \centering
  \vspace{-18pt}
  \includegraphics[width=.460\textwidth]{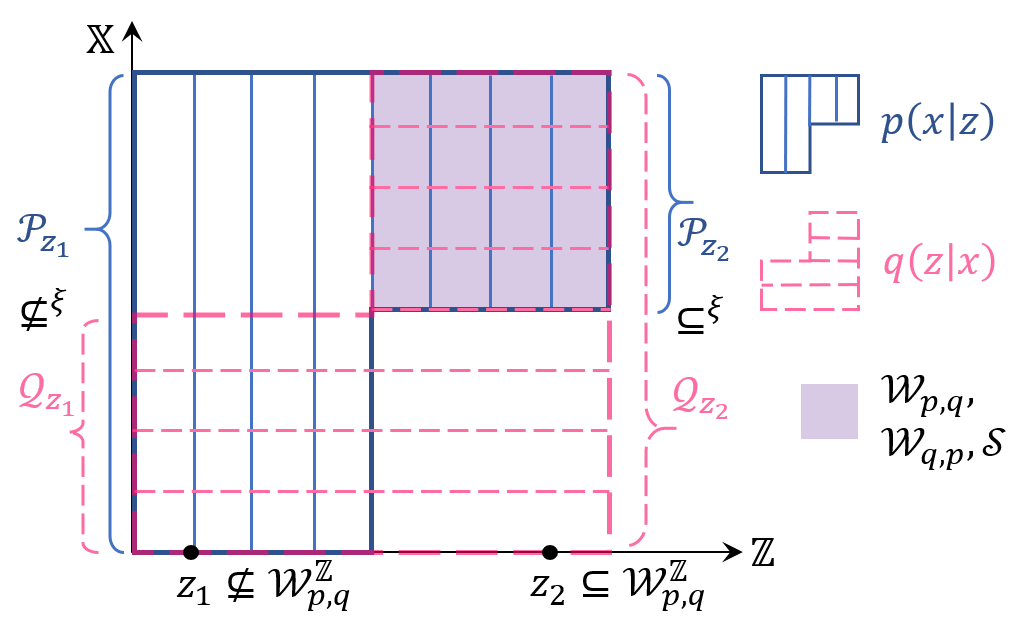}
  \vspace{0pt}
  \caption{Illustration of our compatibility criterion in the absolutely continuous case (Thm.~\ref{thm:compt-ac}).
    The conditionals are uniform on the respective depicted slices. % , satisfying conditions \bfiv, \bfv.
    For condition \bfi, $\clP_z \assubseteq{\xi} \clQ_z$ is \emph{not} satisfied on the left half, \eg $z_1$, so $\clW_{p,q}$ does not cover the left half;
    it is satisfied on the right half, \eg $z_2$,
    % Any $z$ on the right half, \eg $z_2$, satisfies the condition,
    so $\clW_{p,q}$ is composed of slices $\clP_z$ on the right half, making the top-right quadrant (shaded).
    Similarly, $\clW_{q,p}$ is the same region,
    % As the region is squared, it is a $\xi\otimes\zeta$-complete component of itself, and satisfies other requirements as a complete support $\clS$ of the two conditionals.
    and it is a $\xi\otimes\zeta$-complete component of itself.
    It also satisfies other conditions thus is a complete support $\clS$.
  }
  \vspace{-16pt}
  \label{fig:compt-ac}
\end{wrapfigure}
\phantom{a}
\vspace{-14pt}

\begin{theorem}[compatibility criterion, absolutely continuous] \label{thm:compt-ac}
  % Let $\mu(\clX|z)$ and $\nu(\clZ|x)$ be two conditional distributions\footnote{
  %   They are also allowed to be zero measures.
  % } such that $\mu(\cdot|z) \ll \xi$ and $\nu(\cdot|x) \ll \zeta$ for \emph{any} $z \in \bbZ$ and $x \in \bbX$.
  % Let $p(x|z)$ and $q(z|x)$ be their respective density functions\footnote{
  %   They are only $\xi\otimes\pi^\bbZ$-unique and $\pi^\bbX\otimes\zeta$-unique, respectively.
  % }, and define
  Let $p(x|z)$ and $q(z|x)$ be the density functions of two everywhere absolutely continuous (or zero) conditional distributions, and define:
  \begin{align}
    & \clP_z := \{x \mid p(x|z) > 0\}, \clP_x := \{z \mid p(x|z) > 0\}, \\ % \text{ and} \\
    & \clQ_z := \{x \mid q(z|x) > 0\}, \clQ_x := \{z \mid q(z|x) > 0\}.
  \end{align}
  Then they are compatible, if and only if they have a \emph{complete support} $\clS$,
  % \footnote{
  %   It is the support of any compatible joint in the sense of Lemma~\ref{lem:support-asunique}.
  % }
  defined as a
  \bfi $\xi\otimes\zeta$-complete component of both
  \begin{align}
    % \clW_{p,q} :={} & \{(x,z) \mid \clP_z \assubseteq{\xi} \clQ_z, x \in \clP_z \}, \\ % \text{ and} \\
    % \clW_{q,p} :={} & \{(x,z) \mid \clQ_x \assubseteq{\zeta} \clP_x, z \in \clQ_x \},
    \clW_{p,q} \! := \!\!\!\!\! \bigcup_{z: \clP_z \assubseteq{\xi} \clQ_z} \!\!\!\! \clP_z \times \{z\}, \;
    \clW_{q,p} \! := \!\!\!\!\! \bigcup_{x: \clQ_x \assubseteq{\zeta} \clP_x} \!\!\!\! \{x\} \times \clQ_x,
    \label{eqn:def-pqsupp}
  \end{align}
  such that:
  \bfii $\clS^\bbX \assubseteq{\xi} \clW_{q,p}^\bbX$, $\clS^\bbZ \assubseteq{\zeta} \clW_{p,q}^\bbZ$,
  \bfiii $(\xi\otimes\zeta)(\clS) > 0$, and
  \bfiv $\frac{p(x|z)}{q(z|x)}$ factorizes as $a(x) b(z)$, $\xi\otimes\zeta$-a.e. on $\clS$,\footnote{
  Formally, there exist functions $a$ on $\clS^\bbX$ and $b$ on $\clS^\bbZ$ s.t. $(\xi\otimes\zeta)\{(x,z) \in \clS \mid \frac{p(x|z)}{q(z|x)} \ne a(x) b(z)\} = 0$.} where
  \bfv $a(x)$ is $\xi$-integrable on $\clS^\bbX$.
  For sufficiency,
  \begin{align} \label{eqn:def-suff-pi-unroll}
    \pi(\clW) := \frac{\int_{\clW \cap \clS} q(z|x) \lrvert{a(x)} (\xi\otimes\zeta)(\ud x \ud z)}{\int_{\clS^\bbX} \lrvert{a(x)} \xi(\ud x)},
  \end{align}
  $\forall \clW \in \scX\otimes\scZ$, is a compatible joint of them. % the two conditionals.
\end{theorem}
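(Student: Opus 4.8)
The plan is to prove the two implications separately, starting with sufficiency since it simultaneously certifies the explicit formula \eqref{eqn:def-suff-pi-unroll}. Throughout I rely on the complete-component machinery: for a $\xi\otimes\zeta$-complete component $\clS$ of a set $\clW$, Lem.~\ref{lem:irrcomp-assubseteq} gives $\clS \assubseteq{\xi\otimes\zeta} \clW$, and Lem.~\ref{lem:irrcomp-inteq} gives $\clS_z \aseq{\xi} \clW_z$ for $\zeta$-a.e.\ $z \in \clS^\bbZ$ and $\clS_x \aseq{\zeta} \clW_x$ for $\xi$-a.e.\ $x \in \clS^\bbX$.

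\textbf{Sufficiency --- $\pi$ is a distribution.} Assume a complete support $\clS$ satisfying \bfi--\bfv. First I would normalize the factorization: since $\clS \assubseteq{\xi\otimes\zeta} \clW_{p,q}$ and $\clS \assubseteq{\xi\otimes\zeta} \clW_{q,p}$, both $p(x|z)>0$ and $q(z|x)>0$ hold $\xi\otimes\zeta$-a.e.\ on $\clS$, so $a(x)b(z) = p(x|z)/q(z|x) > 0$ a.e.\ on $\clS$; replacing $a,b$ by $|a|,|b|$ preserves \bfiv, so I may assume $a \ge 0$, $b \ge 0$. Then $A := \int_{\clS^\bbX} a\,\xi(\ud x)$ is finite by \bfv and positive (if $a = 0$ a.e.\ on $\clS^\bbX$ then $p(x|z)=0$ a.e.\ on $\clS$, contradicting \bfiii with $\clS \assubseteq{\xi\otimes\zeta} \clW_{p,q}$), so $\pi$ in \eqref{eqn:def-suff-pi-unroll} is a well-defined nonnegative measure. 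The recurring identity is $\int_{\clS_x} q(z|x)\,\zeta(\ud z) = 1$ for $\xi$-a.e.\ $x \in \clS^\bbX$: by Lem.~\ref{lem:irrcomp-inteq} with \bfii, $\clS_x \aseq{\zeta} (\clW_{q,p})_x = \clQ_x$ (and \bfii also forces $\clQ_x \ne \emptyset$, i.e.\ $q(\cdot|x)$ is a genuine probability, on $\clS^\bbX$), and $\clQ_x$ carries all the mass of $q(\cdot|x)$. Tonelli then yields $\pi(\bbX\times\bbZ) = A^{-1}\int_{\clS^\bbX} a(x)\,\xi(\ud x) = 1$, and marginal densities $p^\bbX(x) = a(x)/A$ on $\clS^\bbX$ and $p^\bbZ(z) = A^{-1}\int_{\clS_z} a(x)q(z|x)\,\xi(\ud x)$ on $\clS^\bbZ$.

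\textbf{Sufficiency --- conditionals match.} To show $q(z|x)$ and $p(x|z)$ are the conditionals of $\pi$ in the sense of Def.~\ref{def:compt-determ}, I would verify, for all $\clX \in \scX$, $\clZ \in \scZ$, that $\pi(\clX\times\clZ) = \int_\clX \big(\int_\clZ q(z|x)\,\zeta(\ud z)\big)\,p^\bbX(x)\,\xi(\ud x)$ and $\pi(\clX\times\clZ) = \int_\clZ \big(\int_\clX p(x|z)\,\xi(\ud x)\big)\,p^\bbZ(z)\,\zeta(\ud z)$. For the first, $\int_\clZ q(z|x)\,\zeta(\ud z) = \int_{\clZ \cap \clS_x} q(z|x)\,\zeta(\ud z)$ since $q(\cdot|x)$ vanishes off $\clQ_x \aseq{\zeta} \clS_x$, and Tonelli collapses the right side to $A^{-1}\int_{(\clX\times\clZ)\cap\clS} a(x)q(z|x)\,(\xi\otimes\zeta)(\ud x\,\ud z) = \pi(\clX\times\clZ)$. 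For the second, using $\clS_z \aseq{\xi} (\clW_{p,q})_z = \clP_z$ (Lem.~\ref{lem:irrcomp-inteq} with \bfii) and $p(x|z) = a(x)b(z)q(z|x)$ on $\clS_z$, one has $\int_\clX p(x|z)\,\xi(\ud x) = b(z)\int_{\clX\cap\clS_z} a(x)q(z|x)\,\xi(\ud x)$, and the right side reduces to $\pi(\clX\times\clZ)$ once we use the per-$z$ identity $b(z)\int_{\clS_z} a(x)q(z|x)\,\xi(\ud x) = 1$ for $\zeta$-a.e.\ $z \in \clS^\bbZ$ --- which is precisely $\int_\bbX p(x|z)\,\xi(\ud x) = 1$ (again a genuine probability on $\clS^\bbZ$ by \bfii). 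This completes sufficiency, and the formula \eqref{eqn:def-suff-pi-unroll} is the exhibited compatible joint.

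\textbf{Necessity.} Let $\pi$ be a compatible joint. By Lem.~\ref{lem:joint-ac} it is absolutely continuous with density $p(x,z)$; take $\clS$ to be its support (Def.~\ref{def:support}), with marginal densities $p^\bbX, p^\bbZ$. Where the marginals are positive --- which is $\xi\otimes\zeta$-a.e.\ on $\clS$ given the way Def.~\ref{def:support} ties $\clS^\bbX$ to $\{p^\bbX>0\}$ and $\clS^\bbZ$ to $\{p^\bbZ>0\}$ --- the quotient formulas $p(x|z) = p(x,z)/p^\bbZ(z)$, $q(z|x) = p(x,z)/p^\bbX(x)$ give $p(x|z)/q(z|x) = p^\bbX(x)\cdot 1/p^\bbZ(z)$, i.e.\ \bfiv with $a = p^\bbX$, $b = 1/p^\bbZ$; \bfv holds since $p^\bbX$ is a density, \bfiii since $\pi$ is a probability. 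The substantive part is showing $\clS$ is a $\xi\otimes\zeta$-complete component of both $\clW_{p,q}$ and $\clW_{q,p}$, together with \bfii. For $\clS \assubseteq{\xi\otimes\zeta} \clW_{p,q}$: $p(x,z)>0$ a.e.\ on $\clS$ forces, for $\zeta$-a.e.\ $z\in\clS^\bbZ$, both $\clS_z \assubseteq{\xi} \clP_z$ and $\clP_z \assubseteq{\xi} \clQ_z$, so such $z$ indexes the union in \eqref{eqn:def-pqsupp}. For the reverse, $\clS^\sharp \cap \clW_{p,q} \assubseteq{\xi\otimes\zeta} \clS$: a point $(x,z) \in \clS^\sharp \cap \clW_{p,q}$ with $z \in \clS^\bbZ$ has $p(x,z) = p(x|z)p^\bbZ(z) > 0$ (as $x \in \clP_z$), hence lies in $\clS$; the complementary configuration ($x \in \clS^\bbX$, $z \notin \clS^\bbZ$) is $\xi\otimes\zeta$-null precisely by the $\clS^\bbZ \aseq{\zeta} \{p^\bbZ > 0\}$ property, which also delivers \bfii. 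The $\clW_{q,p}$ side and the other half of \bfii are symmetric.

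\textbf{Expected main obstacle.} I expect the decisive difficulty to be the necessity direction's region bookkeeping: proving the complete-component property and \bfii while carefully separating ``positive on a slice'' from ``positive marginal'' and accounting for the conditionals being unconstrained off the support. This is exactly the gap that makes the classical Arnold-type criterion merely sufficient (and that \citet{berti2014compatibility} flagged), and it is what the $\xi\otimes\zeta$-complete-component apparatus (Def.~\ref{def:irrcomp}, Lems.~\ref{lem:irrcomp-assubseteq} and~\ref{lem:irrcomp-inteq}) is engineered to close; the remaining ingredients --- Fubini/Tonelli, the quotient formulas for conditionals, and reduction to measurable rectangles --- are routine once those lemmas are in hand.
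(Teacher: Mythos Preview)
Your sufficiency argument is correct and tracks the paper's proof closely; the use of Lem.~\ref{lem:irrcomp-inteq} together with condition~\bfii to identify $\clS_x \aseq{\zeta} \clQ_x$ and $\clS_z \aseq{\xi} \clP_z$, and hence recover the normalization $\int_{\clS_x} q(z|x)\,\zeta(\ud z)=1$, is exactly the mechanism the paper exploits.

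The necessity argument has a genuine gap. You take $\clS$ to be ``the support'' via Def.~\ref{def:support} and then assert that Def.~\ref{def:support} ``ties $\clS^\bbX$ to $\{p^\bbX>0\}$ and $\clS^\bbZ$ to $\{p^\bbZ>0\}$''. It does not: the support is only determined up to $\xi\otimes\zeta$-null sets, and as Example~\ref{exmp:irrcomp-projsubset} and Lemma~\ref{lem:irrcomp-assame}/Example~\ref{exmp:irrcomp-assame} make explicit, $\xi\otimes\zeta$-a.s.\ equal sets need \emph{not} have $\xi$-a.s.\ equal projections. Concretely, if one modifies $p(\cdot|z_0)$ on a single $z_0$ in the marginal support so that $\clP_{z_0}$ spills outside $\{p^\bbX>0\}$, then $\{p(x,z)>0\}$ acquires an extra $\zeta$-null slice whose $\bbX$-projection can have positive $\xi$-measure; with such a representative your condition~\bfii, $\clS^\bbX \assubseteq{\xi} \clW_{q,p}^\bbX$, can fail outright. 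Your Case~2 justification (``null precisely by the $\clS^\bbZ \aseq{\zeta} \{p^\bbZ>0\}$ property'') is also the wrong reason even when the conclusion happens to hold: the relevant inequality comes from the $q(z|x)\,p^\bbX(x)$ representation of the joint density, not from $p^\bbZ$.

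The paper repairs this by \emph{choosing} a specific representative, $\clS := (\clU\times\clV)\cap\clW_{p,q}$ with $\clU=\{p^\bbX>0\}$, $\clV=\{p^\bbZ>0\}$, which forces $\clS^\bbX\subseteq\clU$ and $\clS^\bbZ\subseteq\clV$ by construction. The verification that this $\clS$ is a complete component of $\clW_{p,q}$ is then a direct measure computation, and the ``symmetric'' half (complete component of $\clW_{q,p}$) is \emph{not} symmetric on the nose: the paper introduces $\clSt := (\clU\times\clV)\cap\clW_{q,p}$, shows it is a complete component of $\clW_{q,p}$, and then transfers the property to $\clS$ via Lemma~\ref{lem:irrcomp-assame} by checking $\clS\aseq{\xi\otimes\zeta}\clSt$, $\clS^\bbX\aseq{\xi}\clSt^\bbX$, $\clS^\bbZ\aseq{\zeta}\clSt^\bbZ$ through Lemma~\ref{lem:support-asunique}. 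Your proposal skips both the explicit choice of representative and this transfer step, and those are precisely where the projection pathologies bite.
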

Fig.~\ref{fig:compt-ac} shows an illustration of the conditions.
To understand the criterion, conditions \bfiv and \bfv stem from the starting inspiration, which also shows a hint for \eqref{eqn:def-suff-pi-unroll}. % may be the most obvious conditions.
Other conditions handle the subtlety to find a region $\clS$ where \bfiv and \bfv must hold.
This is essentially the support of a compatible joint $\pi$ as there is no need and no way to control conditionals outside the support.

% Construction of $\clW$: (nece) $P_z \subseteq Q_z$, and no need to be equal.
For necessity, informally, if $z$ % \footnote{Informally imagined as a value in the discrete case and a small interval around the value in the Euclidean case}
is in the support of $\pi^\bbZ$, then $p(x|z)$ determines the distribution on $\bbX \times \{z\}$;
particularly, the joint $\pi$ should be a.e. positive on $\clP_z$, which in turn asks $q(z|x)$ to be so.
This means $\clP_z \assubseteq{\xi} \clQ_z$
(unnecessary equal, since $q(z|x)$ is ``out of control'' outside the joint support),
which leads to the definitions of $\clW_{p,q}$ and $\clW_{q,p}$.
The joint support should be contained within the two sets in order to avoid \emph{support conflict}
(\eg, although the bottom-left quadrant in Fig.~\ref{fig:compt-ac} is part of the intersection of positive regions of the conditionals, % has both conditionals positive,
a joint on it is required by $p(x|z)$ to also cover the top-left, on which $q(z|x)$ does not agree).
Condition \bfi indicates $\clS \assubseteq{\xi\otimes\zeta} \clW_{p,q}$ and $\clW_{q,p}$
so $\clS$ satisfies this requirement and also makes the ratio in \bfiv a.e. well-defined. % and the compatible joint $\pi$ nonzero, so we define the positive slices $\clP_z, \clP_x, \clQ_z, \clQ_x$. % and choose the smaller one in constructing the $\clW$ sets.
% Being a compl-compn: (nece) no requirement on conditionals outside $\clS^\sharp$, (suff) covers all of $\clW$s on the strips for normalization.
The complete-component condition in \bfi also makes the conditionals \emph{normalized} on $\clS$:
as mentioned, such an $\clS$ has a.s. the same slice as $\clW_{p,q}$ does for a given $z$ in support $\clS^\bbZ$,
so the integral of $p(x|z)$ on $\clS_z$ is the same as that on $(\clW_{p,q})_z = \clP_z$ which is 1 by construction; similarly for $q(z|x)$.
% Fig.~\ref{fig:exmp-no-irrcomp}
In contrast, Appx. Example~\ref{exmp:no-irrcomp} % in Appx.~\ref{supp:proofs-compt-ac}
shows $\clS = \clW_{p,q} \cap \clW_{q,p}$ is inappropriate. % an appropriate region as a joint support.
Conditions \bfii and \bfiii cannot be guaranteed by condition \bfi (Appx. Example~\ref{exmp:irrcomp-projsubset}), while are needed % for the transitivity of being a $\xi\otimes\zeta$-complete component
to rule out special cases (Appx. Lem.~\ref{lem:irrcomp-assame}, Example~\ref{exmp:irrcomp-assame}).
% Condition \bfiii is obviously necessary, and is not indicated by other conditions.
% The sufficiency of all these conditions is shown by \eqref{eqn:def-suff-pi-unroll}.
% For sufficiency, such a complete support $\clS$ may not be unique.
Appx.~\ref{supp:proofs-compt-ac} gives a formal proof. % rigorous proof following this intuition.
Finally, although the criterion relies on the \emph{existence} of such a complete support, candidates are few (if any), so it is \emph{operable}. % it involves a much smaller search space.

\vspace{-1pt}
\subsubsection{Determinacy in the absolutely continuous case} \label{sec:determ-ac}
\vspace{-1pt}

When compatible, absolutely continuous cyclic conditionals are very likely to have determinacy. % determine a joint, \ie they tend to have determinacy.
\begin{theorem}[determinacy, absolutely continuous] \label{thm:determ-ac}
  Let $p(x|z)$ and $q(z|x)$ be two compatible conditional densities, and $\clS$ be a complete support that makes them compatible (necessarily exists due to Thm.~\ref{thm:compt-ac}).
  Suppose that $\clS_z \aseq{\xi} \clS^\bbX$, for $\zeta$-a.e. $z$ on $\clS^\bbZ$, or $\clS_x \aseq{\zeta} \clS^\bbZ$, for $\xi$-a.e. $x$ on $\clS^\bbX$.
  Then their compatible joint supported on $\clS$ is unique,
  which is given by \eqref{eqn:def-suff-pi-unroll}.
\end{theorem}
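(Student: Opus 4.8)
The plan is to show that under either of the two symmetric "full-slice" hypotheses, the complete support $\clS$ forces any compatible joint supported on $\clS$ to be absolutely continuous with a density uniquely reconstructible from $p(x|z)$ and $q(z|x)$. I would work with the first hypothesis, $\clS_z \aseq{\xi} \clS^\bbX$ for $\zeta$-a.e.\ $z$ on $\clS^\bbZ$ (the other being entirely symmetric after swapping the roles of $x$ and $z$). Let $\pi$ be any compatible joint concentrated on $\clS$. By Lem.~\ref{lem:joint-ac} (Appx.), $\pi$ has a density $p(x,z)$ w.r.t.\ $\xi\otimes\zeta$, and on (a.e.\ of) its support the conditionals are the genuine Radon--Nikodym quotients, so $p(x,z) = p(x|z)\,\pi^\bbZ(z) = q(z|x)\,\pi^\bbX(x)$ wherever $p(x,z)>0$, with $\pi^\bbX$, $\pi^\bbZ$ the marginal densities. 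Dividing, $p(x|z)/q(z|x) = \pi^\bbX(x)/\pi^\bbZ(z) =: a(x)b(z)$ with $a = \pi^\bbX$, $b = 1/\pi^\bbZ$; this is just the forward direction of Thm.~\ref{thm:compt-ac}, and it already pins down the factorization up to the usual scalar ambiguity $(a,b)\mapsto(ca, b/c)$.

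The heart of the argument is to promote this "up to a constant" statement to genuine uniqueness, and this is where the full-slice hypothesis enters. First I would argue the factorization holds a.e.\ on all of $\clS$, not merely on the positive region of $\pi$: since $\clS$ is a $\xi\otimes\zeta$-complete component of $\clW_{p,q}$, for $\zeta$-a.e.\ $z\in\clS^\bbZ$ the slice $\clS_z$ equals $\clP_z$ up to $\xi$-null sets, so $p(x|z)>0$ a.e.\ on $\clS_z$; combined with $\clS_z \aseq{\xi} \clS^\bbX$ this gives $p(x|z)>0$ for a.e.\ $(x,z)\in\clS$, hence (by the symmetric fact for $q$, using that $\clS$ is also a complete component of $\clW_{q,p}$ via condition \bfi) the ratio is a.e.\ finite and positive on $\clS$. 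Then $\pi^\bbZ$ must be a.e.\ positive on $\clS^\bbZ$: if $\pi^\bbZ$ vanished on a positive-measure subset $\clZ_0\subseteq\clS^\bbZ$, then $\pi$ would vanish on $\clS^\bbX\times\clZ_0$, but by the full-slice hypothesis $\clS^\bbX\times\clZ_0 \aseq{\xi\otimes\zeta} \bigcup_{z\in\clZ_0}\clS_z\times\{z\}$, which is all of $\clS$ over $\clZ_0$ — so $\pi$ would miss a genuine chunk of $\clS$, consistent with $\pi$ being supported on $\clS$ but forcing its support's projection to be $\assubseteq{}\clS^\bbZ\setminus\clZ_0$; I'd then need to rule this out, which is exactly where determinacy of the \emph{support} itself (not just the joint given the support) is being used — the theorem only claims uniqueness among joints supported on $\clS$, so in fact $\pi^\bbZ>0$ a.e.\ on $\clS^\bbZ$ is what "supported on $\clS$" should be taken to mean, or follows from the normalization argument below. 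With $\pi^\bbZ$ a.e.\ positive on $\clS^\bbZ$, the relation $p(x,z)=p(x|z)\pi^\bbZ(z)$ holds a.e.\ on $\clS$, and integrating over the slice $\clS_z \aseq{\xi}\clS^\bbX$ gives $\pi^\bbZ(z)\int_{\clS^\bbX}p(x|z)\,\xi(\ud x) = \pi^\bbZ(z)\cdot 1 = \int_{\clS^\bbX} p(x,z)\,\xi(\ud x)=\pi^\bbX$-nothing — rather, $\pi^\bbZ(z) = \pi^\bbZ(z)$ after using $\int_{\clS_z}p(x|z)\xi(\ud x)=1$ (Lem.~\ref{lem:irrcomp-inteq}), so the slice integral is automatically normalized and gives no new info; instead I reconstruct $\pi^\bbX$ directly: $\pi^\bbX(x) = a(x)\cdot(\text{const})$, and the constant is fixed by $\int_{\clS^\bbX}\pi^\bbX(x)\xi(\ud x)=1$, which (using $\clS_x\aseq{\zeta}\clS^\bbZ$? — no, we don't have that) requires knowing $\int_{\clS^\bbX}a(x)\xi(\ud x)=A<\infty$, which is condition \bfv. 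Thus $\pi^\bbX(x)=|a(x)|/A$ is forced, $\pi^\bbZ$ is then forced via $p(x,z)=q(z|x)\pi^\bbX(x)$, and $p(x,z)=q(z|x)|a(x)|/A$ a.e.\ on $\clS$, which is precisely the density in \eqref{eqn:def-suff-pi-unroll}.

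To finish I would verify: (i) the reconstructed $p(x,z)$ does not depend on the choice of factorization $(a,b)$, since rescaling $a$ by $c$ rescales $A$ by $c$ and cancels; (ii) the candidate $\pi$ of \eqref{eqn:def-suff-pi-unroll} indeed has both conditionals equal to $p(x|z)$ and $q(z|x)$ and is supported on $\clS$ — but this is exactly the sufficiency half of Thm.~\ref{thm:compt-ac}, already available; (iii) therefore \eqref{eqn:def-suff-pi-unroll} is the unique compatible joint supported on $\clS$.

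The main obstacle I anticipate is step two: carefully justifying that the full-slice hypothesis forces $\pi^\bbZ>0$ a.e.\ on $\clS^\bbZ$ — equivalently, that $\pi$ cannot "retreat" to a proper sub-support $\clS'\subsetneq\clS$ with $(\xi\otimes\zeta)(\clS\setminus\clS')>0$. Geometrically, if $\pi$ avoids a positive-measure vertical strip $\clS^\bbX\times\clZ_0$, the full-slice condition says this strip is "rectangular-full" inside $\clS$, so avoiding it shrinks $\clS^\bbZ$; but then the complementary piece $\clS\setminus(\clS^\bbX\times\clZ_0)$ may fail to be a complete component (the stretch leaks back), so $\pi$ restricted there cannot have the right conditionals — this is the contradiction, and pinning it down rigorously (likely via Lem.~\ref{lem:irrcomp-assame} and the projection lemmas in Appx.~\ref{supp:lemma-irrcomp}) is the delicate part. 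Without the full-slice hypothesis this genuinely fails — a "block-diagonal" pair of conditionals admits a one-parameter family of compatible joints — so the hypothesis must be used in an essential way exactly here, and I would make sure the proof does so transparently rather than hiding it in an a.e.\ manipulation.
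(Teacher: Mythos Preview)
Your overall plan is the same as the paper's --- recover the marginal densities from the factorization $p(x|z)/q(z|x)=a(x)b(z)$ and fix the scalar ambiguity by integrating over a full slice --- but you fumble the one step where the full-slice hypothesis actually does work, and you spend your effort on a non-issue.

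The paper's argument is short: take \emph{two} compatible joints $\pi,\tilde\pi$ supported on $\clS$, with marginal densities $(u,v)$ and $(\tilde u,\tilde v)$. Both satisfy $p(x|z)/q(z|x)=u(x)/v(z)=\tilde u(x)/\tilde v(z)$ a.e.\ on $\clS$. For $\zeta$-a.e.\ $z\in\clS^\bbZ$ this holds for $\xi$-a.e.\ $x\in\clS_z\aseq{\xi}\clS^\bbX$, so integrating both sides in $x$ over $\clS^\bbX$ gives
\[
\frac{1}{v(z)}\int_{\clS^\bbX} u\,\ud\xi \;=\; \frac{1}{\tilde v(z)}\int_{\clS^\bbX}\tilde u\,\ud\xi,
\]
and since both integrals equal $1$ (the marginals $\pi^\bbX,\tilde\pi^\bbX$ are concentrated on $\clS^\bbX$), one gets $v=\tilde v$ a.e.\ on $\clS^\bbZ$, hence the joint densities $p(x|z)v(z)$ and $p(x|z)\tilde v(z)$ coincide. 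Your misstep is integrating the joint density $p(x|z)\pi^\bbZ(z)$ over the slice --- as you yourself notice, that just returns $\pi^\bbZ(z)$ and is empty. The quantity to integrate is the \emph{ratio} $u(x)/v(z)$, so that the full-slice hypothesis converts the $x$-integral into the fixed constant $\int_{\clS^\bbX}u\,\ud\xi=1$. Your fallback route (``$\pi^\bbX=|a|/A$ is forced'') would also work, but only after proving that any two factorizations on $\clS$ differ by a scalar --- and that lemma is established by exactly the same slice-integration trick, which you never carry out.

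Your final paragraph's worry --- that $\pi$ might ``retreat'' to a strict sub-support with $\pi^\bbZ$ vanishing on part of $\clS^\bbZ$ --- is a red herring. The paper's Definition~\ref{def:support} of support already requires the density to be positive $\xi\otimes\zeta$-a.e.\ on $\clS$, so ``supported on $\clS$'' rules this out by fiat (and via Lemma~\ref{lem:slice-ae} gives $v>0$ $\zeta$-a.e.\ on $\clS^\bbZ$, so dividing by $v$ is licit). The elaborate contradiction-via-complete-components argument you sketch is unnecessary; the whole proof is five lines once you integrate the right thing.
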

\vspace{-1pt}
%\chang{Tentative:
%  Let $p(x|z)$ and $q(z|x)$ be two conditional densities and $\clS$ be a complete support that makes them compatible.
%  Suppose that $\clS$ does not contain a $\xi\otimes\zeta$-proper subset that is also a complete support.
%  Then their compatible joint supported on $\clS$ is unique.
%}
Proof is given in Appx.~\ref{supp:proofs-determ-ac}.
The condition in the theorem roughly means that the complete support $\clS$ is ``rectangular''.
From the perspective of Markov chain, this corresponds to the \emph{irreducibility} of the Gibbs chain % ($z_1 \sim q(z|x_0)$, $x_1 \sim p(x|z_1)$, \ldots) % the ergodicity condition that any $(x,z)$ transitions are possible.
for the unique existence of a stationary distribution.
% Note that one can only expect determinacy on a complete support $\clS$, as we mentioned conditionals are ``out of control'' outside the joint support.
When the conditionals have multiple such complete supports, on each of which the compatible joint is unique,
while globally on $\bbX\times\bbZ$, they may have multiple compatible joints. % as combinations of those on each complete support.
In general, determinacy in the absolutely continuous case is \emph{sufficient}, particularly we have the following strong conclusion in a common case (\eg, for VAEs).
\begin{corollary} \label{cor:determ-ac-full-supp}
  We call two conditional densities \emph{have a.e.-full supports}, if $p(x|z) > 0, q(z|x) > 0$ for $\xi\otimes\zeta$-a.e. $(x,z)$. % $\clP_z \aseq{\xi} \bbX$ for $\zeta$-a.e. $z$ and $\clQ_x \aseq{\zeta} \bbZ$ for $\xi$-a.e. $x$,
  If they are compatible, then their compatible joint is unique,
  since $\bbX\times\bbZ$ is the $\xi\otimes\zeta$-unique complete support (Prop.~\ref{prop:full-supp-irrcomp} in Appx.~\ref{supp:proofs-full-supp-irrcomp}),
  which satisfies the condition in Thm.~\ref{thm:determ-ac}.
\end{corollary}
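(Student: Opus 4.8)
The plan is to reduce Corollary~\ref{cor:determ-ac-full-supp} to two ingredients: (1) the statement (referenced as Prop.~\ref{prop:full-supp-irrcomp}) that, under a.e.-full supports, $\bbX\times\bbZ$ is --- up to a $(\xi\otimes\zeta)$-null set --- the \emph{only} complete support; and (2) the observation that $\bbX\times\bbZ$ trivially satisfies the ``rectangular'' hypothesis of Thm.~\ref{thm:determ-ac}. Granting these, compatibility together with Thm.~\ref{thm:compt-ac} guarantees the conditionals have a complete support; by the proof of Thm.~\ref{thm:compt-ac} the support of any compatible joint is itself a complete support; by (1) every complete support equals $\bbX\times\bbZ$ $(\xi\otimes\zeta)$-a.e.; hence every compatible joint is supported on $\bbX\times\bbZ$. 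Then (2) plus Thm.~\ref{thm:determ-ac} leaves exactly one such joint, namely \eqref{eqn:def-suff-pi-unroll}, which is the claimed uniqueness.

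The substance is ingredient (1), which I would argue as follows. Put $N := \{(x,z)\mid p(x|z)=0\}\cup\{(x,z)\mid q(z|x)=0\}$; by hypothesis $N$ is $(\xi\otimes\zeta)$-null, so by Tonelli the slice $N_z$ is $\xi$-null for $\zeta$-a.e.\ $z$, whence for such $z$ we have $\clP_z\aseq{\xi}\bbX\aseq{\xi}\clQ_z$ and in particular $\clP_z\assubseteq{\xi}\clQ_z$. Thus the index set defining $\clW_{p,q}$ is $\zeta$-co-null and $\clW_{p,q}\aseq{\xi\otimes\zeta}\bbX\times\bbZ$; symmetrically $\clW_{q,p}\aseq{\xi\otimes\zeta}\bbX\times\bbZ$. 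Now let $\clS$ be any complete support. By condition~(i) it is a $\xi\otimes\zeta$-complete component of $\clW_{p,q}$, hence of $\bbX\times\bbZ$, so $\clS^\sharp\aseq{\xi\otimes\zeta}\clS$. Since $\clS\subseteq\clS^\bbX\times\clS^\bbZ\subseteq\clS^\sharp$, this sandwiches $\clS^\bbX\times\clS^\bbZ\aseq{\xi\otimes\zeta}\clS$, and condition~(iii) forces $\xi(\clS^\bbX)>0$ and $\zeta(\clS^\bbZ)>0$. The ``excess'' $\clS^\sharp\setminus(\clS^\bbX\times\clS^\bbZ)$ has measure $\xi(\clS^\bbX)\,\zeta(\bbZ\setminus\clS^\bbZ)+\xi(\bbX\setminus\clS^\bbX)\,\zeta(\clS^\bbZ)$; since it must vanish and both leading factors are positive, $\clS^\bbX\aseq{\xi}\bbX$ and $\clS^\bbZ\aseq{\zeta}\bbZ$, so $\clS\aseq{\xi\otimes\zeta}\bbX\times\bbZ$. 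Conversely, $\bbX\times\bbZ$ itself qualifies as a complete support: its stretch is itself, so it is a $\xi\otimes\zeta$-complete component of $\clW_{p,q}$ and $\clW_{q,p}$; conditions~(ii),~(iii) are immediate; and~(iv),~(v) transfer from whichever complete support compatibility supplies via Thm.~\ref{thm:compt-ac}, because that set is $(\xi\otimes\zeta)$-a.e.\ equal to $\bbX\times\bbZ$.

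Ingredient (2) is then trivial: for $\clS=\bbX\times\bbZ$ we have $\clS_z=\bbX=\clS^\bbX$ for \emph{every} $z$, so the hypothesis of Thm.~\ref{thm:determ-ac} holds outright. The main obstacle is the null-set bookkeeping in ingredient (1): one must be careful that the Tonelli step legitimately converts ``positive $(\xi\otimes\zeta)$-a.e.\ on the product'' into ``for $\zeta$-a.e.\ $z$, positive $\xi$-a.e.\ on the slice,'' that the union defining $\clW_{p,q}$ is measurable (covered by the appendix lemmas on complete components), and that the excess-measure computation is valid under $\sigma$-finiteness of $\xi,\zeta$ (so $\xi(\clS^\bbX)$ etc.\ may be infinite yet the products behave correctly). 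Beyond this routine care the proof is a direct appeal to Thms.~\ref{thm:compt-ac} and~\ref{thm:determ-ac}.
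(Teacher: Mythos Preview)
Your proposal is correct and follows the same high-level plan as the paper: reduce the corollary to Prop.~\ref{prop:full-supp-irrcomp} (that $\bbX\times\bbZ$ is the $\xi\otimes\zeta$-unique complete support) and then invoke Thm.~\ref{thm:determ-ac} with $\clS=\bbX\times\bbZ$, for which the rectangularity hypothesis is trivial. Your reduction argument---that every compatible joint is supported on $\bbX\times\bbZ$ and hence falls under the uniqueness of Thm.~\ref{thm:determ-ac}---is exactly right (and in fact even simpler than you make it: \emph{any} distribution on the product space is supported on $\bbX\times\bbZ$, so you need not appeal to the necessity part of Thm.~\ref{thm:compt-ac} at all once you know $\bbX\times\bbZ$ is a complete support).

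Where you differ slightly from the paper is in the mechanics of proving Prop.~\ref{prop:full-supp-irrcomp}. The paper argues iteratively: from $\clS^\sharp\cap\clW_{q,p}\aseq{\xi\otimes\zeta}\clS$ and $\clW_{q,p}\aseq{\xi\otimes\zeta}\bbX\times\bbZ$ it extracts $\clS\assupseteq{\xi\otimes\zeta}\clS^\bbX\times\bbZ$, slices to deduce $\clS^\bbZ\aseq{\zeta}\bbZ$, and then repeats the step with $\bbX\times\clS^\bbZ$. Your route is more direct: you observe $\clS^\sharp\aseq{\xi\otimes\zeta}\clS$, decompose the excess $\clS^\sharp\setminus(\clS^\bbX\times\clS^\bbZ)$ as the disjoint union $\big(\clS^\bbX\times(\bbZ\setminus\clS^\bbZ)\big)\cup\big((\bbX\setminus\clS^\bbX)\times\clS^\bbZ\big)$, and use positivity of $\xi(\clS^\bbX),\zeta(\clS^\bbZ)$ to force both complementary factors to vanish. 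This is a cleaner one-shot computation that avoids the back-and-forth of the paper's version; both rely on the same ingredients (Tonelli, the $0\cdot\infty=0$ convention for product measures under $\sigma$-finiteness, and the complete-component identity). Your caveats about measurability and $\sigma$-finiteness are well placed and are handled by the paper's appendix lemmas.
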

\vspace{-1pt}

\vspace{-1pt}
\subsection{Dirac Case} \label{sec:dirac}
\vspace{-1pt}

% For some generative model frameworks, the likelihood model $p(x|z)$ is a deterministic map.
Many other prevailing generative models, including generative adversarial networks (GANs)~\citep{goodfellow2014generative} and flow-based models~\citep{dinh2015nice, papamakarios2017masked, kingma2018glow, grathwohl2019ffjord},
use a deterministic function $x = f(z)$ as the likelihood model.
In such cases, the conditional $\mu(\clX|z) = \delta_{f(z)}(\clX) := \bbI[f(z) \in \clX], \forall \clX \in \scX$ is a Dirac measure.
Note it does not have a density function when $\xi$ assigns zero to all single-point sets, \eg the Lebesgue measure on Euclidean spaces, so we keep the measure notion. % , so we stick with the usage of measure for generality.
This case is not exclusive to the absolutely continuous case: a Dirac conditional on a discrete space is also absolutely continuous.

\vspace{-1pt}
\subsubsection{Compatibility criterion in the Dirac case} \label{sec:compt-dirac-crit}
\vspace{-1pt}

\begin{wrapfigure}{r}{.328\textwidth}
  \centering
  \vspace{-34pt}
  \includegraphics[width=.308\textwidth]{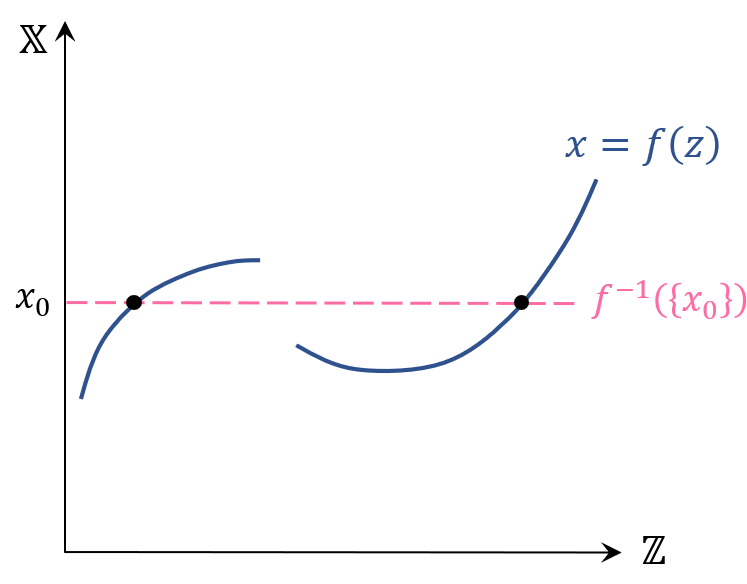}
  \vspace{-8pt}
  \caption{Illustration of our compatibility criterion in the Dirac case (Thm.~\ref{thm:compt-dirac}).
  }
  \vspace{-24pt}
  \label{fig:compt-dirac}
\end{wrapfigure}
\phantom{a}
\vspace{-16pt}

Compatibility criterion is easier to imagine in this case.
As illustrated in Fig.~\ref{fig:compt-dirac}, it is roughly that the other-way conditional $\nu(\cdot|x)$ could find a way to put its mass only on the curve;
otherwise support conflict is rendered.
\begin{theorem}[compatibility criterion, Dirac] \label{thm:compt-dirac}
  Suppose that $\scX$ contains all the single-point sets: $\{x\} \in \scX, \forall x \in \bbX$.
  Conditional distribution $\nu(\clZ|x)$ is compatible with $\mu(\clX|z) := \delta_{f(z)}(\clX)$ where function $f: \bbZ \to \bbX$ is $\scX$/$\scZ$-measurable\footnote{
    For meaningful discussion, we require $f$ to be $\scX$/$\scZ$-measurable, which includes any function between discrete sets and continuous functions when $\scX$ and $\scZ$ are the Borel sigma-fields.
  }, if and only if there exists $x_0 \in \bbX$ such that $\nu(f^{-1}(\{x_0\}) | x_0) = 1$.
\end{theorem}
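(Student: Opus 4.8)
The plan is to argue directly from the defining property of compatibility. Since $\{x\}\in\scX$ and $f$ is $\scX/\scZ$-measurable, $f^{-1}(\clX)\in\scZ$ for every $\clX\in\scX$; and a joint $\pi$ (with marginals $\mu_X:=\pi^\bbX$, $\mu_Z:=\pi^\bbZ$) makes $\mu=\delta_{f(\cdot)}$ and $\nu$ compatible exactly when $\pi(\clX\times\clZ)=\mu_Z(\clZ\cap f^{-1}(\clX))$ and $\pi(\clX\times\clZ)=\int_\clX\nu(\clZ|x)\,\mu_X(\ud x)$ hold for all $\clX\in\scX,\clZ\in\scZ$ (the first equality is $\int_\clZ\delta_{f(z)}(\clX)\,\mu_Z(\ud z)$ in closed form). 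I only need these on rectangles, which avoids invoking regular conditional distributions.

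For sufficiency I would exhibit a witness. Given $x_0$ with $\nu(f^{-1}(\{x_0\})|x_0)=1$, let $\pi$ be the image of $\nu(\cdot|x_0)$ under $z\mapsto(x_0,z)$, i.e.\ $\pi(\clW):=\nu(\clW_{x_0}|x_0)$ for $\clW\in\scX\otimes\scZ$, using that the slice $\clW_{x_0}:=\{z:(x_0,z)\in\clW\}$ is measurable. Then $\pi^\bbX=\delta_{x_0}$, $\pi^\bbZ=\nu(\cdot|x_0)$, and $\pi(\clX\times\clZ)=\bbI[x_0\in\clX]\,\nu(\clZ|x_0)$. That $\nu$ is an $x$-conditional of $\pi$ is immediate, since $\int_\clX\nu(\clZ|x)\,\delta_{x_0}(\ud x)$ also equals $\bbI[x_0\in\clX]\,\nu(\clZ|x_0)$. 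For the $z$-conditional I would check that $\int_\clZ\delta_{f(z)}(\clX)\,\nu(\ud z|x_0)=\nu(\clZ\cap f^{-1}(\clX)|x_0)$ equals $\bbI[x_0\in\clX]\,\nu(\clZ|x_0)$ by splitting on whether $x_0\in\clX$: if $x_0\in\clX$ then $f^{-1}(\clX)\supseteq f^{-1}(\{x_0\})$ has full $\nu(\cdot|x_0)$-mass, so both sides equal $\nu(\clZ|x_0)$; if $x_0\notin\clX$ then $f^{-1}(\clX)\subseteq\bbZ\setminus f^{-1}(\{x_0\})$ is $\nu(\cdot|x_0)$-null, so both sides vanish. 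Hence $\delta_{f(\cdot)}$ is a valid $z$-conditional of $\pi$, so the two conditionals are compatible; this is the only place the hypothesis $\nu(f^{-1}(\{x_0\})|x_0)=1$ enters.

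For necessity, take any compatible $\pi$. Equating the two expressions for $\pi(\clX\times\clZ)$ gives $\int_\clX\nu(\clZ|x)\,\mu_X(\ud x)=\mu_Z(\clZ\cap f^{-1}(\clX))$ for all $\clX,\clZ$; with $\clZ=\bbZ$ this reads $\mu_X=f_*\mu_Z$, and feeding $\clZ=f^{-1}(\clX)$ back in then yields $\int_\clX\bigl(1-\nu(f^{-1}(\clX)|x)\bigr)\mu_X(\ud x)=0$, so for every fixed $\clX\in\scX$ one has $\nu(f^{-1}(\clX)|x)=1$ for $\mu_X$-a.e.\ $x\in\clX$. The crux is to \emph{localize} this family of ``macroscopic'' identities down to singletons, i.e.\ to get $\nu(f^{-1}(\{x\})|x)=1$ for $\mu_X$-a.e.\ $x$. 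I would do this by fixing a countable subfamily $\{\clX_n\}\subseteq\scX$ that separates the points of $\bbX$, so that $\{x\}=\bigcap_n\clX_n^{\epsilon_n(x)}$ with $\clX_n^1=\clX_n,\clX_n^0=\clX_n^c$, $\epsilon_n(x)=\bbI[x\in\clX_n]$, and hence $f^{-1}(\{x\})=\bigcap_n f^{-1}(\clX_n^{\epsilon_n(x)})$; applying the a.e.\ statement to every $\clX_n$ and every $\clX_n^c$ produces one $\mu_X$-null set $N$ off which, for each $x$, every $f^{-1}(\clX_n^{\epsilon_n(x)})$ has full $\nu(\cdot|x)$-mass, whence so does the countable intersection $f^{-1}(\{x\})$. (Equivalently, such a separating family is exactly what puts the graph $G=\{(x,z):f(z)=x\}$ in $\scX\otimes\scZ$, after which this step is the one-liner $\int_\bbX\nu(f^{-1}(\{x\})|x)\,\mu_X(\ud x)=\pi(G)=\int_\bbZ\delta_{f(z)}(\{f(z)\})\,\mu_Z(\ud z)=1$.) Finally, because $\mu_X$ is a probability measure, the set $\{x:\nu(f^{-1}(\{x\})|x)=1\}$ has $\mu_X$-measure $1$ and is therefore nonempty, and any of its points is the desired $x_0$.

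I expect the localization step — together with the attendant joint measurability of the graph of $f$, which is automatic in the discrete and separable-Borel settings the paper targets — to be the only genuine subtlety; the algebra of the two consistency identities and the explicit construction in the sufficiency direction are routine.
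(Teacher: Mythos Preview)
Your proof is correct and its core matches the paper's. For sufficiency, the paper routes through an auxiliary existential lemma (there is a distribution $\beta$ on $\bbZ$ with $\nu(\clZ|x)=\frac{\ud\beta(\clZ\cap f^{-1}(\cdot))}{\ud\beta(f^{-1}(\cdot))}(x)$) and then takes $\beta(\clZ):=\nu(f^{-1}(\{x_0\})\cap\clZ\,|\,x_0)$; your direct construction $\pi(\clW)=\nu(\clW_{x_0}\,|\,x_0)$ builds the same joint but verifies the two conditionals by hand, which is shorter and avoids the detour. For necessity, your parenthetical ``one-liner'' via the graph $G=\{(x,z):f(z)=x\}$ is exactly the paper's argument: it computes $\pi(G)$ two ways, obtains $\int_\bbX\nu(f^{-1}(\{x\})\,|\,x)\,\pi^\bbX(\ud x)=1$, and concludes that some $x_0$ attains the value $1$. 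Your caution about the measurability of $G$ is well placed: the paper simply asserts it from $f$ being measurable, but measurability of the graph (equivalently, of the diagonal in $\scX\otimes\scX$) requires $\scX$ to be countably separated, which is \emph{not} implied by merely containing all singletons. Your separating-family localization makes this hypothesis explicit and in fact delivers the stronger conclusion $\nu(f^{-1}(\{x\})\,|\,x)=1$ for $\pi^\bbX$-a.e.\ $x$; in the Euclidean and discrete settings the paper targets, the hypothesis is of course automatically satisfied.
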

See Appx.~\ref{supp:proofs-compt-dirac} for proof.
% This criterion is also operable.
Note such $x_0$ must be in the image set $f(\bbZ)$, otherwise $\nu(f^{-1}(\{x_0\}) | x_0) = \nu(\emptyset | x_0) = 0$.
For a typical GAN generator, the preimage set $f^{-1}(\{x_0\})$ is discrete, so a compatible inference model must not be absolutely continuous.
What may be counter-intuitive is that $\nu(\cdot|x)$ is not required to concentrate on the curve for \emph{any} $x$;
one $x_0$ is sufficient as $\delta_{(x_0, f(x_0))}$ is a compatible~joint.

Nevertheless, in practice one often desires the compatibility to hold over a set $\clX$ to make a useful model. % (\ie, allowing compatible joints supported on $\clX$).
When $\nu(\cdot|x)$ is also chosen in the Dirac form $\delta_{g(x)}$, this can be achieved by minimizing $\bbE_{p(x)} \ell \big( x, f(g(x)) \big)$,
where $p(x)$ is a distribution on $\clX$ and $\ell$ is a metric on $\bbX$.
This is the \emph{cycle-consistency loss} used in dual learning~\citep{kim2017learning, zhu2017unpaired, yi2017dualgan, lin2019exploring}.
When $f$ is invertible, minimizing the loss (\ie, $g = f^{-1}$ a.e. on $\clX$) is also necessary, as $f^{-1}(x)$ only has one element. % for any $x \in \clX$.
Particularly, flow-based models are naturally compatible (so are their injective variants~\citep{ardizzone2019analyzing, xiao2020invertible, brehmer2020flows} by Thm.~\ref{thm:compt-dirac}). % (but insufficient determinacy; see below).

\vspace{-1pt}
\subsubsection{Determinacy in the Dirac case} \label{sec:determ-dirac}
\vspace{-1pt}

As mentioned, for any $x_0$ satisfying the condition, two compatible conditionals have the determinacy on this point $\{x_0\}$ with the unique joint $\delta_{(x_0, f(x_0))}$.
But when such $x_0$ is not unique, the distribution over these $x_0$ values is not determined, so the two conditionals do not have the determinacy globally on $\bbX\times\bbZ$. % there are multiple compatible joints globally on $\bbX\times\bbZ$.
This is similar to the absolutely continuous case with multiple complete supports;
particularly, each $\{(x_0, f(x_0))\}$ is a complete support for discrete $\bbX$ and $\bbZ$.
% This is however often the case, as compatibility is usually desired to hold on a set $\clX$ (with multiple elements) to be useful,
% but a joint distribution on $\clX$ cannot be determined uniquely.
This meets one's intuition: compatible Dirac conditionals can only determine a curve in $\bbX\times\bbZ$, % from two sides,
but cannot determine a distribution on the curve.
One exception is when $f(z) \equiv x_0$ is constant, so this $x_0$ is the only candidate. % in the theorem.
The joint then degenerates to a distribution on $\bbZ$, which is fully determined by $\nu(\cdot|x_0)$.

In general, determinacy in the Dirac case is \emph{insufficient},
and this type of generative models (GANs, flow-based models) have to specify a prior to define a joint.

\vspace{-1pt}
\section{Generative Modeling using Cyclic Conditionals} \label{sec:meth}
\vspace{-1pt}

The theory suggests it is possible that cyclic conditionals achieve compatibility and a sufficient determinacy,
so that they can determine a useful joint without specifying a prior.
Note a certain prior is implicitly determined by the conditionals; we find we just do not need an explicit model for it. % (vs.~\citep{pang2020learning}).
This inspires \ourmodel, a novel framework that only uses \textbf{Cy}clic conditionals for \textbf{Gen}erative modeling.

For the eligibility as a generative model, compatibility and a sufficient determinacy are required.
For the latter, we just shown a deterministic likelihood or inference model is not suitable, so we use absolutely continuous conditionals as the theory suggests. % due to its insufficient determinacy.
The conditionals can then be modeled by parameterized densities $p_\theta(x|z)$, $q_\phi(z|x)$.
We consider the common case where $\bbX = \bbR^{d_\bbX}$, $\bbZ = \bbR^{d_\bbZ}$, and $p_\theta(x|z)$, $q_\phi(z|x)$ have a.e.-full supports and are differentiable. % as in VAEs.
% We take $\bbX$, $\bbZ$ as Euclidean spaces $\bbR^{d_\bbX}$, $\bbR^{d_\bbZ}$, which is common in generative modeling, and assume the conditional densities have a.e.-full supports and are differentiable, which hold for common architectures.
Determinacy is then exactly guaranteed by Cor.~\ref{cor:determ-ac-full-supp}.
For compatibility, we develop an effective loss in Sec.~\ref{sec:meth-compt} to enforce it.

For the usage as a generative model, we develop methods to fit the model-determined data distribution $p_{\theta,\phi}(x)$ to the true data distribution $p^*(x)$ in Sec.~\ref{sec:meth-data},
and to generate data from $p_{\theta,\phi}(x)$ in Sec.~\ref{sec:meth-gen}.

\vspace{-1pt}
\subsection{Enforcing Compatibility} \label{sec:meth-compt}
\vspace{-1pt}

In this a.e.-full support case, the entire product space $\bbX\times\bbZ$ is the only possible complete support (Prop.~\ref{prop:full-supp-irrcomp} in Appx.~\ref{supp:proofs-full-supp-irrcomp}),
so for compatibility, condition~\bfiv in Thm.~\ref{thm:compt-ac} is the most critical one. % one only needs to check conditions \bfiv and \bfv on it.
For this, we do not have to find functions $a(x)$, $b(z)$ in Thm.~\ref{thm:compt-ac}, but only need to enforce such a factorization.
So we propose the following loss function to enforce compatibility:
\begin{align} \label{eqn:compt-obj-jac}
  (\min_{\theta,\phi}) \;\;
  % C(\theta,\phi) := \bbE_{\rho(x,z)} \lrVert{\nabla_x \nabla_z\trs \log \big( p_\theta(x|z) / q_\phi(z|x) \big)}_F^2,
  C(\theta,\phi) := \bbE_{\rho(x,z)} \lrVert{\nabla_x \nabla_z\trs r_{\theta,\phi}(x,z)}_F^2, \text{where }
  r_{\theta,\phi}(x,z) := \log \big( p_\theta(x|z) / q_\phi(z|x) \big).
\end{align}
Here, $\rho$ is some absolutely continuous reference distribution on $\bbX\times\bbZ$, % $(\bbX\times\bbZ, \scX\otimes\scZ, \xi\otimes\zeta)$ that covers $\clS$,\footnote{This means that any density function of $\rho$ is positive $\xi\otimes\zeta$-a.e. on $\clS$.} the set constructed from the two conditionals.
% In practice, $\rho(x,z)$ can be taken as the data distribution $p^*(x,z)$ if supervised data is available, or $p^*(x) q_\phi(z|x)$ if only samples of $x$ are observed.\chang{Is a score-matching like objective possible? It is ideal if (a) is also equivalent to some expectation-formed scalar function taking constant value 0.}
which can be taken as $p^*(x) q_\phi(z|x)$ in practice as it gives samples to estimate the expectation.
When $C(\theta,\phi) = 0$, we have $\nabla_x \nabla_z\trs r_{\theta,\phi}(x,z) = 0$, $\xi\otimes\zeta$-a.e. % on $\clS$
\citep[Thm.~15.2(ii)]{billingsley2012probability}.
By integration, this means $\nabla_z r_{\theta,\phi}(x,z) = V(z)$ hence $r_{\theta,\phi}(x,z) = v(z) + u(x)$, $\xi\otimes\zeta$-a.e.,
for some functions $V(z)$, $v(z)$, $u(x)$ s.t. $V(z) = \nabla v(z)$.
So the ratio $p_\theta(x|z) / q_\phi(z|x) = \exp\{ r_{\theta,\phi}(x,z) \} = \exp\{u(x)\} \exp\{v(z)\}$ factorizes, $\xi\otimes\zeta$-a.s. % so condition~\bfiv holds.

In the sense of enforcing compatibility, this loss generalizes the cycle-consistency loss to probabilistic conditionals.
% Appx.~\ref{supp:meth-aereg} describes the relation to other AE regularizations~\citep{vincent2008extracting, rifai2011contractive, alain2014regularized}.
Also, the loss is different from the Jacobian-norm regularizers in contractive AE~\citep{rifai2011contractive} and DAE~\citep{rifai2011contractive, alain2014regularized}, and explains the ``tied weights'' trick for AEs~\citep{ranzato2007sparse, vincent2008extracting, vincent2011connection, rifai2011contractive, alain2014regularized} (see Appx.~\ref{supp:meth-aereg}).

\textbf{Implication on Gaussian VAE} \hspace{4pt}
which uses additive Gaussian conditional models, $p_\theta(x|z) := \clN(x | f_\theta(z), \sigma_d^2 I_{d_\bbX})$ and $q_\phi(z|x) := \clN(z | g_\phi(x), \sigma_e^2 I_{d_\bbZ})$.
It is the vanilla and the most common form of VAE~\citep{kingma2014auto}.
As its ELBO objective drives $q_\phi(z|x)$ to meet the joint $p(z) p_\theta(x|z)$, compatibility is enforced.
Under our view, this amounts to minimizing the compatibility loss \eqref{eqn:compt-obj-jac}, which then enforces the match of Jacobians: $(\nabla_z f_\theta\trs(z))\trs = (\sigma_d^2 / \sigma_e^2) \nabla_x g_\phi\trs(x)$.
% we have $\nabla_x \nabla_z\trs \log p_\theta(x|z) = \frac{2}{\sigma_d^2} \big( \nabla_z f_\theta\trs(z) \big)\trs$ and $\nabla_x \nabla_z\trs \log q_\phi(z|x) = \frac{2}{\sigma_e^2} \nabla_x g_\phi\trs(x)$.
As the two sides indicate the equation is constant of both $x$ and $z$, it must be a constant, so \emph{$f_\theta(z)$ and $g_\phi(x)$ must be affine},
and the joint is also a Gaussian~[\citealp{bhattacharyya1943some}; \citealp[Thm.~3.3.1]{arnold2012conditionally}].
% More analysis for non-constant variance and parametric form of the joint are shown in~\citep[Ch.~3]{arnold2012conditionally}.
This conclusion coincides with the theory on additive noise models in causality~\citep{zhang2009identifiability, peters2014causal}, and explains the empirical observation that the latent space of such VAEs is quite linear~\citep{shao2018riemannian}.
It is also the root of recent analyses that the latent space coordinates the data manifold~\citep{dai2019diagnosing}, and the inference model learns an isometric embedding after a proper rescaling~\citep{nakagawa2021quantitative}.

This finding reveals that the expectation to use deep neural networks for learning a flexible nonlinear representation will be disappointed in Gaussian VAE.
So we use a non-additive-Gaussian model, \eg a flow-based model~\citep{rezende2015variational, kingma2016improved, van2018sylvester, grathwohl2019ffjord}, for at least one of $p_\theta(x|z)$ and $q_\phi(z|x)$ (often the latter).

\textbf{Efficient implementation.} \hspace{4pt}
Direct Jacobian evaluation for \eqref{eqn:compt-obj-jac} is of complexity $O(d_\bbX d_\bbZ)$, which is often prohibitively large.
We thus propose a stochastic but unbiased and much cheaper method based on Hutchinson's trace estimator~\citep{hutchinson1989stochastic}:
$\tr(A) = \bbE_{p(\eta)}[\eta\trs A \eta]$, where $\eta$ is any random vector with zero mean and identity covariance (\eg, a standard Gaussian).
As the function within expectation is % $\lrVert{\nabla_x \nabla_z\trs r}_F^2 = \tr\big( (\nabla_x \nabla_z\trs r)\trs \nabla_x \nabla_z\trs r \big)$, which is also $\lrVert{\nabla_z \nabla_x\trs r}_F^2 = \tr\big( (\nabla_z \nabla_x\trs r)\trs \nabla_z \nabla_x\trs r \big)$,
$\lrVert{\nabla_x \nabla_z\trs r}_F^2 = \lrVert{\nabla_z \nabla_x\trs r}_F^2 = \tr\big( (\nabla_z \nabla_x\trs r)\trs \nabla_z \nabla_x\trs r \big)$,
applying the estimator yields a formulation that reduces gradient evaluation complexity to $O(d_\bbX + d_\bbZ)$:
{\belowdisplayskip=1pt
\begin{align} \label{eqn:compt-obj-jac-hut}
  (\min_{\theta,\phi}) \;\;
  C(\theta,\phi)
  % ={} & \bbE_{\rho(x,z)} \bbE_{p(\eta_z)} \lrVert{\big( \nabla_x \nabla_z\trs r_{\theta,\phi}(x,z) \big) \eta_z}_2^2 \\
  % ={} & \bbE_{\rho(x,z)} \bbE_{p(\eta_z)} \lrVert{\nabla_x \big( \eta_z\trs \nabla_z r_{\theta,\phi}(x,z) \big)}_2^2 \\
  % ={} & \bbE_{\rho(x,z)} \bbE_{p(\eta_x)} \lrVert{\big( \nabla_z \nabla_x\trs r_{\theta,\phi}(x,z) \big) \eta_x}_2^2 \\
  ={} & \bbE_{\rho(x,z)} \bbE_{p(\eta_x)} \lrVert{\nabla_z \big( \eta_x\trs \nabla_x r_{\theta,\phi}(x,z) \big)}_2^2, \text{where }
  \bbE[\eta_x] = 0, \Var[\eta_x] = I_{d_\bbX}.
\end{align} }%
% For $p_\theta(x|z) = \clN(x | f_\theta(z), \sigma_d^2 I_{d_\bbX})$, we have $\nabla_z \big( \eta_x\trs \nabla_x \log p_\theta(x|z) \big) = \frac{2}{\sigma_d^2} \nabla_z (\eta_x\trs f_\theta(z))$.
%
As concluded from the above analysis on Gaussian VAE, we use a flow-based model for the inference model $q_\phi(z|x)$.
But in common instances evaluating the inverse of the flow is intractable~\citep{rezende2015variational, kingma2016improved, van2018sylvester} or costly~\citep{grathwohl2019ffjord}.
% This makes density evaluation only possible for generated points but not for arbitrarily inputs.
This however, disables the use of automatic differentiation tools for estimating the gradients in the compatibility loss.
Appx.~\ref{supp:meth-flownoinv} explains this problem in detail and shows our solution.

\vspace{-1pt}
\subsection{Fitting Data} \label{sec:meth-data}
\vspace{-1pt}

% In this a.e.-full support case, compatible conditionals enjoy a sufficient determinacy % that the compatible joint is unique on $\bbX\times\bbZ$
% (Cor.~\ref{cor:determ-ac-full-supp}).
After achieving compatibility, Cor.~\ref{cor:determ-ac-full-supp} guarantees the a.e.-fully supported conditional models uniquely determine a joint, hence a data distribution $p_{\theta,\phi}(x)$.
To fit $p_{\theta,\phi}(x)$ to the true data distribution $p^*(x)$, an explicit expression is required.
For this, \eqref{eqn:def-suff-pi-unroll} is not helpful as we do not have explicit expressions of $a(x)$, $b(z)$. % in Thm.~\ref{thm:compt-ac}.
But when compatibility is given, we can safely use density function formulae: % for deduction:
{\abovedisplayskip=2pt
\begin{align} \label{eqn:px-model}
  \textstyle
  p_{\theta,\phi}(x) = 1 / \frac{1}{p_{\theta,\phi}(x)}
  = 1 / \int_\bbZ \frac{p_{\theta,\phi}(z')}{p_{\theta,\phi}(x)} \zeta(\ud z')
  = 1 / \int_\bbZ \frac{q_\phi(z'|x)}{p_\theta(x|z')} \zeta(\ud z')
  = 1 / \bbE_{q_\phi(z'|x)} [1 / p_\theta(x|z')],
\end{align} }%
which is an explicit expression in terms of the two conditionals.
Although other expressions are possible, this one has a simple form, and the Monte-Carlo expectation estimation in $\bbZ$ has a lower variance than in $\bbX$ since usually $d_\bbZ \ll d_\bbX$. % in generative modeling.
We can thus fit data by maximum likelihood estimation:
{\abovedisplayskip=4pt
\belowdisplayskip=2pt
\begin{align} \label{eqn:mle}
  (\min_{\theta,\phi}) \;\;
  \bbE_{p^*(x)} [-\log p_{\theta,\phi}(x)] = \bbE_{p^*(x)} [ \log \bbE_{q_\phi(z'|x)} [1 / p_\theta(x|z')] ].
\end{align} }%
The loss function can be estimated using the reparameterization trick~\citep{kingma2014auto} to reduce variance, and the \texttt{logsumexp} trick is adopted for numerical stability.
This expression can also serve for data likelihood evaluation.
The final training process of \ourmodel is the joint optimization with the compatibility loss. % of Eqs.~(\ref{eqn:compt-obj-jac-hut},~\ref{eqn:mle}).

\textbf{Comparison with DAE.} \hspace{4pt}
% An interesting observation is that, this loss function \eqref{eqn:mle} is an \emph{upper bound} of the DAE loss~\citep{vincent2008extracting, bengio2013generalized} $\bbE_{p^*(x) q_\phi(z'|x)} [-\log p_\theta(x|z')]$ by Jensen's inequality.
% So minimizing the DAE loss may not serve for maximizing data likelihood $\bbE_{p^*(x)} [\log p_{\theta,\phi}(x)]$, % minimizing our loss, which is more fundamental as a maximum likelihood estimate.
% but would instead collapse $q_\phi(z|x)$ to be peaked at $\argmin_{z'} p_\theta(x|z')$ for every $x$ in the support of $p^*(x)$,
% which makes $q_\phi(z|x)$ incompatible with $p_\theta(x|z)$ and weakens the determinacy.
% In fact, the DAE loss is proper for optimizing $p_\theta(x|z)$ to match the joint $p^*(x) q_\phi(z|x)$, but not proper for optimizing $q_\phi(z|x)$.
%
We note that the DAE loss~\citep{vincent2008extracting, bengio2013generalized} $\bbE_{p^*(x) q_\phi(z'|x)} [-\log p_\theta(x|z')]$ is a \emph{lower bound} of \eqref{eqn:mle} due to Jensen's inequality,
so it is not suitable for maximizing likelihood.
In fact, the DAE loss minimizes $\bbE_{q_\phi(z)} \KL(q_\phi(x|z) \Vert p_\theta(x|z))$ for $p_\theta(x|z)$ to match $q_\phi(x|z)$, where $q_\phi(z)$ and $q_\phi(x|z)$ are induced from the joint $p^*(x) q_\phi(z|x)$,
but it is not a proper loss for $q_\phi(z|x)$ as a \emph{mode-collapse} behavior is promoted:
the optimal $q_\phi(z|x)$ only concentrates on the point(s) of $\argmin_{z'} p_\theta(x|z')$, % for every $x$ in the support of $p^*(x)$,
and an additional entropy term $-\bbE_{q_\phi(z)} \bbH[q_\phi(x|z)]$ is required to optimize the same $\KL$ loss.
This behavior \emph{hurts determinacy}, as $q_\phi(z|x)$ tends to be a (mixture of) Dirac measure (Sec.~\ref{sec:determ-dirac}).
The resulting Gibbs chain may also converge differently % have multiple stationary distributions
depending on initialization, as ergodicity is broken.
This behavior also hurts compatibility, as $q_\phi(x|z)$ deviates from $p_\theta(x|z)$ (not Dirac), and does not match the Gibbs stationary distribution~\citep{heckerman2000dependency, bengio2013generalized}.
In contrast, \ourmodel follows a more fundamental logic: enforce compatibility explicitly and follow the maximum likelihood principle faithfully.
It leads to a proper loss for both conditionals that does not hinder determinacy. % impose mode-collapse hence guards determinacy.

\vspace{-1pt}
\subsection{Data Generation} \label{sec:meth-gen}
\vspace{-1pt}

Generating samples from the learned data distribution $p_{\theta,\phi}(x)$ % determined by the two conditionals $p_\theta(x|z)$ and $q_\phi(z|x)$
is not as straightforward as typical models that specify a prior, % the typical prior-likelihood generative models.
since ancestral sampling is not available.
But it is still tractable via Markov chain Monte Carlo methods (MCMCs).
%
% \textbf{Gibbs sampling} \hspace{4pt}
% has been considered~\citep{bengio2013generalized, lamb2017gibbsnet} to draw samples from a joint distribution represented by two conditionals.
% The sampling procedure is to alternately draw from $p_\theta(x|z)$ and $q_\phi(z|x)$, \ie, $z_t \sim q_\phi(z|x_{t-1})$, $x_t \sim p_\theta(x|z_t)$, $t = 1, 2, \cdots$.
% In the a.e.-full support case, the chain has a unique stationary distribution since ergodicity is satisfied,
% and given compatibility, this is exactly the joint that they determine.
% Gibbs sampling is easy to implement, but often converges insufficiently fast. % due to the block-wise update, and is sensitive to compatibility.
% % When not exactly compatible, this joint is also not compatible with $p_\theta(x|z)$.
%
We propose using \emph{dynamics-based MCMCs}, which are often more efficient than Gibbs sampling (used in DAE~\citep{bengio2013generalized} and GibbsNet~\citep{lamb2017gibbsnet}). %, since they leverage gradient information.
They only require an \emph{unnormalized} density function of the target distribution, which is readily available in \ourmodel when compatible:
% \begin{align} \label{eqn:px-model-unnorm}
$p_{\theta,\phi}(x) = \frac{p_{\theta,\phi}(x)}{p_{\theta,\phi}(z)} p_{\theta,\phi}(z)
= \frac{p_\theta(x|z)}{q_\phi(z|x)} p_{\theta,\phi}(z) \propto \frac{p_\theta(x|z)}{q_\phi(z|x)}$
% \end{align}
for any $z \in \bbZ$.
In practice, this $z$ can be taken as a sample from $q_\phi(z|x)$ to lie in a high probability region for a confident estimate.

Stochastic gradient Langevin dynamics (SGLD)~\citep{welling2011bayesian} is a representative instance, which has been shown to produce complicated realistic samples in energy-based~\citep{du2019implicit}, score-based~\citep{song2019generative} and diffusion-based~\citep{ho2020denoising, song2021score} models.
It gives the following transition:
{\abovedisplayskip=2pt
\begin{align} \label{eqn:langevin-px}
  \textstyle \!
  x^{(t+1)} = x^{(t)} + \varepsilon \nabla_{x^{(t)}} \log \frac{p_\theta(x^{(t)}|z^{(t)})}{q_\phi(z^{(t)}|x^{(t)})} + \sqrt{2 \varepsilon} \, \eta^{(t)}_x, \text{where }
  z^{(t)} \sim q_\phi(z|x^{(t)}),
  \eta^{(t)}_x \sim \clN(0,I_{d_\bbX}), \!
\end{align} }%
and $\varepsilon$ is a step size parameter.
Method to draw $z \sim p_{\theta,\phi}(z)$ can be developed symmetrically (see Appx. \eqref{eqn:langevin-pz}).
Also applicable are other dynamics-based MCMCs~\citep{chen2014stochastic, ding2014bayesian, ma2021there}, and particle-based variational inference methods~\citep{liu2016stein, chen2018unified, liu2019understanding_a, liu2019understanding_b, taghvaei2019accelerated} which are more sample-efficient.

\vspace{-2pt}
\section{Experiments} \label{sec:expm}
\vspace{-2pt}

\newcommand{\daefigs}{pinwheel-dae-pvar1e-02/res-PTelbo1000-ginit0_01}
\newcommand{\vaefigs}{pinwheel-elbo/figs-pvar1e-2-2}
\newcommand{\biganfigs}{pinwheel-bigan-pvar1e-02/js_07}
\newcommand{\ourfigspt}{figs-nllhmarg-jnlitex-PTelbo1000-wcm1e-5-pvar1e-2-1}
\newcommand{\ourfigsptnocompt}{figs-nllhmarg-jnlitex-PTelbo1000-wcm0-pvar1e-2-1}
\newcommand{\ourfigsnopt}{figs-nllhmarg-jnlitex-wcm1e-5-pvar1e-2-1}
\newcommand{\elemfigwidth}{1.74cm}
\newcommand{\elemfigwidthamend}{0.12cm}

We demonstrate the power of \ourmodel for data generation and representation learning. % on both synthetic and real-world datasets.
Baselines include DAE, and generative models using Gaussian prior \eg VAE and BiGAN (Appx.~\ref{supp:expm-baselines}).
For a fair comparison, all methods use the same architecture, which is an additive Gaussian $p_\theta(x|z)$ and a Sylvester flow (Householder version)~\citep{van2018sylvester} for $q_\phi(z|x)$ (Appx.~\ref{supp:expm-arch}), as required by \ourmodel (Sec.~\ref{sec:meth-compt}).
It is necessarily probabilistic for determinacy, so we exclude flow-based generative models and common BiGAN/GibbsNet architectures, which are deterministic.
We also considered GibbsNet~\citep{lamb2017gibbsnet} which also aims at the prior issue, but it does not produce reasonable results using the same architecture, due to its unstable training process (see Appx.~\ref{supp:expm-baselines}).
Codes: {\small \url{https://github.com/changliu00/cygen}}.

\vspace{-2pt}
\subsection{Synthetic Experiments} \label{sec:expm-synth}
\vspace{-2pt}

% \begin{wraptable}{r}{8.74cm}
\begin{SCtable}[][t]
  \vspace{-4pt}
  \centering
  \small
  \setlength{\tabcolsep}{0.6pt}
  \begin{tabular}{cccc@{\hspace{1.4pt}}|@{\hspace{1pt}}cc}
    data & DAE & VAE & BiGAN & \textbf{\ourmodel} & \textbf{\ourmodelpt} \\
    % \makecell[r]{generated \\ data} &
    \begin{minipage}{\elemfigwidth}
      \centering
      \raisebox{-.5\height}{\includegraphics[width=\elemfigwidth]{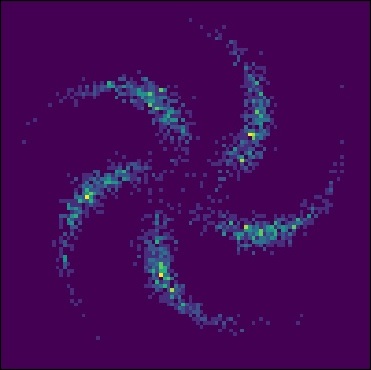}}
    \end{minipage}
    &
    \begin{minipage}{\elemfigwidth}
      \centering
      \raisebox{-.5\height}{\includegraphics[width=\elemfigwidth]{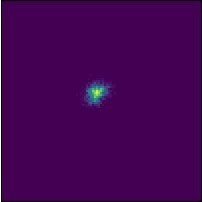}}
    \end{minipage}
    &
    \begin{minipage}{\elemfigwidth}
      \centering
      \raisebox{-.5\height}{\includegraphics[width=\elemfigwidth]{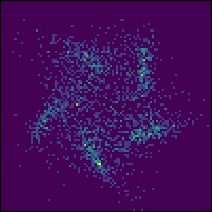}}
    \end{minipage}
    &
    \begin{minipage}{\elemfigwidth}
      \centering
      \raisebox{-.5\height}{\includegraphics[width=\elemfigwidth]{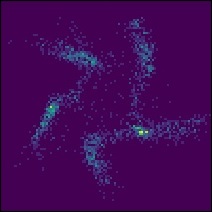}}
    \end{minipage}
    &
    \begin{minipage}{\elemfigwidth}
      \centering
      \raisebox{-.5\height}{\includegraphics[width=\elemfigwidth]{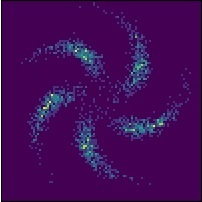}}
    \end{minipage}
    &
    \begin{minipage}{\elemfigwidth}
      \centering
      \raisebox{-.5\height}{\includegraphics[width=\elemfigwidth]{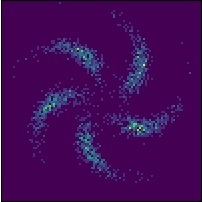}}
    \end{minipage}
    \\ \addlinespace[3pt]
    \makecell[r]{class-wise \\ aggregated \\ posterior} &
    \begin{minipage}{\elemfigwidth}
      \centering
      \raisebox{-.5\height}{\includegraphics[width=\elemfigwidth+\elemfigwidthamend]{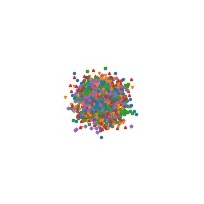}}
    \end{minipage}
    &
    \begin{minipage}{\elemfigwidth}
      \centering
      \raisebox{-.5\height}{\includegraphics[width=\elemfigwidth+\elemfigwidthamend]{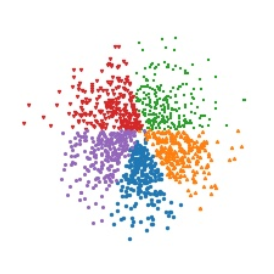}}
    \end{minipage}
    &
    \begin{minipage}{\elemfigwidth}
      \centering
      \raisebox{-.5\height}{\includegraphics[width=\elemfigwidth+\elemfigwidthamend]{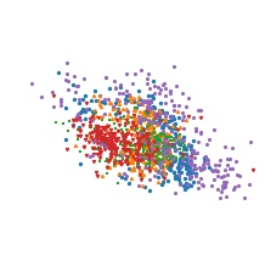}}
    \end{minipage}
    &
    \begin{minipage}{\elemfigwidth}
      \centering
      \raisebox{-.5\height}{\includegraphics[width=\elemfigwidth+\elemfigwidthamend]{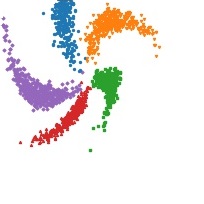}}
    \end{minipage}
    &
    \begin{minipage}{\elemfigwidth}
      \centering
      \raisebox{-.5\height}{\includegraphics[width=\elemfigwidth+\elemfigwidthamend]{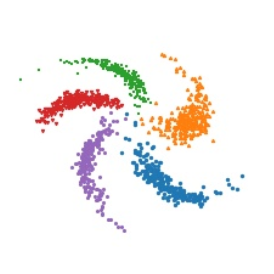}}
    \end{minipage}
  \end{tabular}
  \captionsetup[table]{name=Figure,font=small,skip=-2pt}
  \captionof{table}{Generated data (DAE and \ourmodel use SGLD) and class-wise aggregated posteriors of DAE, VAE, BiGAN and \ourmodel.
    Also shows results of \ourmodelpt that is PreTrained as a VAE.
    (Best view in color.)
  }
  \label{tab:pinw-gen-post}
\end{SCtable}
% \end{wraptable}

% \begin{wraptable}{r}{8.92cm}
\begin{SCtable}[][t]
  % \vspace{-12pt}
  \centering
  \small
  \setlength{\tabcolsep}{0.6pt}
  \begin{tabular}{r@{\hspace{4pt}}cccc@{\hspace{1.4pt}}|@{\hspace{1pt}}c}
    iteration & 1100 & 1200 & 1300 & 1400 & 30000
    \\
    \makecell[r]{\textbf{\ourmodelpt}, \\ using SGLD} &
    \begin{minipage}{\elemfigwidth}
      \centering
      \raisebox{-.5\height}{\includegraphics[width=\elemfigwidth]{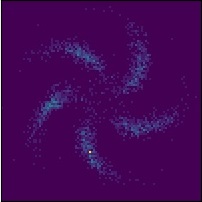}}
    \end{minipage}
    &
    \begin{minipage}{\elemfigwidth}
      \centering
      \raisebox{-.5\height}{\includegraphics[width=\elemfigwidth]{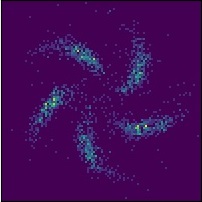}}
    \end{minipage}
    &
    \begin{minipage}{\elemfigwidth}
      \centering
      \raisebox{-.5\height}{\includegraphics[width=\elemfigwidth]{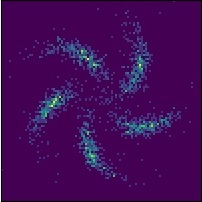}}
    \end{minipage}
    &
    \begin{minipage}{\elemfigwidth}
      \centering
      \raisebox{-.5\height}{\includegraphics[width=\elemfigwidth]{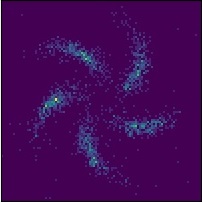}}
    \end{minipage}
    &
    \begin{minipage}{\elemfigwidth}
      \centering
      \raisebox{-.5\height}{\includegraphics[width=\elemfigwidth]{synth/\ourfigspt/xhist-langv-z-30000.jpg}}
    \end{minipage}
    \\ \addlinespace[1pt]
    \makecell[r]{\textbf{\ourmodelpt}, \\ using Gibbs} &
    \begin{minipage}{\elemfigwidth}
      \centering
      \raisebox{-.5\height}{\includegraphics[width=\elemfigwidth]{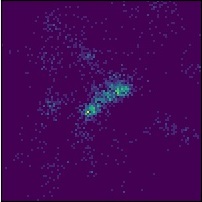}}
    \end{minipage}
    &
    \begin{minipage}{\elemfigwidth}
      \centering
      \raisebox{-.5\height}{\includegraphics[width=\elemfigwidth]{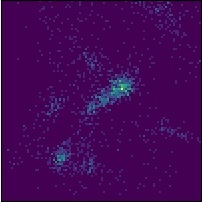}}
    \end{minipage}
    &
    \begin{minipage}{\elemfigwidth}
      \centering
      \raisebox{-.5\height}{\includegraphics[width=\elemfigwidth]{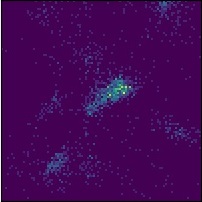}}
    \end{minipage}
    &
    \begin{minipage}{\elemfigwidth}
      \centering
      \raisebox{-.5\height}{\includegraphics[width=\elemfigwidth]{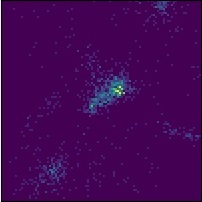}}
    \end{minipage}
    &
    \begin{minipage}{\elemfigwidth}
      \centering
      \raisebox{-.5\height}{\includegraphics[width=\elemfigwidth]{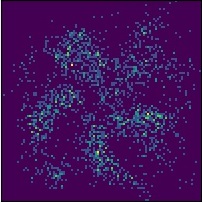}}
    \end{minipage}
    \\ \addlinespace[2pt]
    compt. loss &
    $7.0\e{3}$ & $5.4\e{3}$ & $7.7\e{3}$ & $6.2\e{3}$ & $4.6\e{3}$
    \\[-2pt] \midrule \addlinespace[2pt]
    \makecell[r]{\textbf{\ourmodelpt} \\ w/o compt., \\ using SGLD} &
    \begin{minipage}{\elemfigwidth}
      \centering
      \raisebox{-.5\height}{\includegraphics[width=\elemfigwidth]{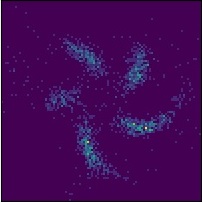}}
    \end{minipage}
    &
    \begin{minipage}{\elemfigwidth}
      \centering
      \raisebox{-.5\height}{\includegraphics[width=\elemfigwidth]{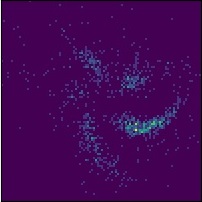}}
    \end{minipage}
    &
    \begin{minipage}{\elemfigwidth}
      \centering
      \raisebox{-.5\height}{\includegraphics[width=\elemfigwidth]{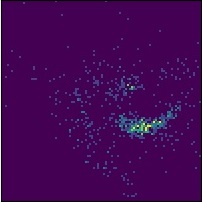}}
    \end{minipage}
    &
    \begin{minipage}{\elemfigwidth}
      \centering
      \raisebox{-.5\height}{\includegraphics[width=\elemfigwidth]{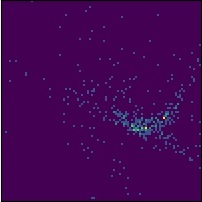}}
    \end{minipage}
    &
    \begin{minipage}{\elemfigwidth}
      \centering
      \raisebox{-.5\height}{\includegraphics[width=\elemfigwidth]{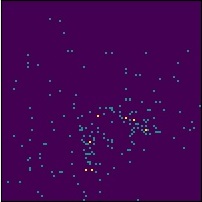}}
    \end{minipage}
    \\ \addlinespace[2pt]
    compt. loss &
    $1.1\e{5}$ & $1.6\e{5}$ & $2.6\e{5}$ & $8.6\e{5}$ & $1.2\e{8}$
  \end{tabular}
  \hspace{-5pt}
  \captionsetup[table]{name=Figure,font=small,skip=10pt,width=76pt}
  \captionof{table}{Generated data along the training process of \ourmodel after VAE pretraining (iteration 1000),
    using SGLD (rows~1,3) and Gibbs sampling (row~2) for generation, and with (rows~1,2) and without the compatibility loss (row~3) for training.
    See also Appx. Fig.~\ref{tab:pinw-compt-dae}.
  }
  \label{tab:pinw-gibbs-compt}
  \vspace{-8pt}
\end{SCtable}
% \end{wraptable}

For visual verification of the claims, we first consider a 2D toy dataset (Fig.~\ref{tab:pinw-gen-post} top-left). % illustrated in Fig.~\ref{fig:synth-data}, in the form of the histogram of data samples.
Appx.~\ref{supp:expm-synth} shows more details and results, including the investigation on another similar dataset.

\textbf{Data generation.} % \hspace{4pt}
The learned data distributions (as the histogram of generated data samples) are shown in Fig.~\ref{tab:pinw-gen-post} (row~1).
We see the five clusters are blurred to overlap in VAE's distribution and are still connected in BiGAN's, due to the specified prior.
% This is the manifold mismatch problem from specifying a prior. % that makes it hard to fit this distribution with a non-connected support.
In contrast, our \ourmodel fits this distribution much better; particularly it clearly separates the five non-connected clusters. % shows a clear boundary of each cluster.
This verifies the advantage to overcome the \emph{manifold mismatch} problem. %, by removing the constraint on the prior.
As for DAE, % (similar result using Gibbs sampling),
it cannot capture the data distribution due to collapsed inference model and insufficient determinacy (Sec.~\ref{sec:meth-data}).
% Its training process even always ends up with a numerical problem.

\textbf{Representation.} % \hspace{4pt}
Class-wise aggregated posteriors (as the scatter plot of $z$ samples from $q_\phi(z|x) p^*(x|y)$ for each class/cluster $y$) in Fig.~\ref{tab:pinw-gen-post} (row~2)
show that \ourmodel mitigates the \emph{posterior collapse} problem,
as the learned inference model $q_\phi(z|x)$ better separates the classes with a margin in the latent space. % posteriors of different clusters are farther separated,
This more informative and representative feature would benefit downstream tasks like classification or clustering in the latent space.
In contrast, the specified Gaussian prior squeezes the VAE latent clusters to touch, and the BiGAN latent clusters even to mix.
The mode-collapsed inference model of DAE locates all latent clusters in the same place.

\textbf{Incorporating knowledge into conditionals.} % \hspace{4pt}
\ourmodel alone (without pretraining) already performs well. % as shown in Fig.~\ref{tab:pinw-gen-post}. %, but its prior is completely arbitrary.
When knowledge is available, we can further incorporate it into the conditional models. % , \eg by using a VAE-pretrained likelihood model.
Fig.~\ref{tab:pinw-gen-post} shows pretraining \ourmodel's likelihood model as in a VAE (\ourmodelpt) embodies VAE's knowledge that the prior is centered and centrosymmetric, as the (all-class) aggregated posterior ($\approx$ prior) is such.
Note its data generation quality is not sacrificed.
Appx. Fig.~\ref{tab:pinw-prior} verifies this directly via the priors.

\textbf{Comparison of data generation methods.}
We then make more analysis on \ourmodel.
Fig.~\ref{tab:pinw-gibbs-compt} (rows~1,2) shows generated data of \ourmodel using SGLD and Gibbs sampling.
We see SGLD better recovers the true distribution, and is more robust to slight incompatibility. % , as it leverages the gradient information and the resulting faster mixing of the chain.

\textbf{Impact of the compatibility loss.} % \hspace{4pt}
Fig.~\ref{tab:pinw-gibbs-compt} (rows~1,3) also shows the comparison with training \ourmodel without the compatibility loss. %, initialized by a pretrained VAE.
We see the compatibility is then indeed out of control, % without the loss,
which invalidates the likelihood estimation \eqref{eqn:mle} for fitting data and the gradient estimation in \eqref{eqn:langevin-px} for data generation,
leading to the failure in row~3.
Along the training process of the normal \ourmodel, we also find a smaller compatibility loss makes better generation (esp. using Gibbs sampling).
% indicating a better fit to data.
% See Appx.~\ref{supp:expm-synth} for more details.

\vspace{-2pt}
\subsection{Real-World Experiments} \label{sec:expm-real}
\vspace{-2pt}

\begin{wraptable}{r}{9.1cm}
  \vspace{-27pt}
  \centering
  \setlength{\tabcolsep}{1.6pt}
  \begin{tabular}{cc|c}
    DAE & VAE & \textbf{\ourmodelpt} \\
    \includegraphics[width=.21\textwidth]{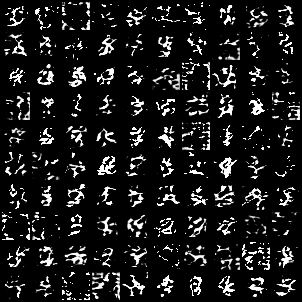} &
    \includegraphics[width=.21\textwidth]{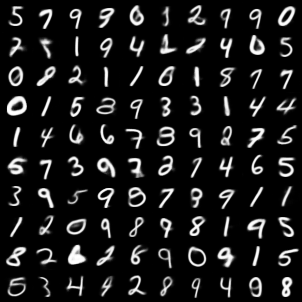} &
    \includegraphics[width=.21\textwidth]{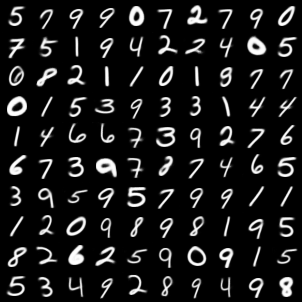}
    \\ \addlinespace[-1pt]
    \includegraphics[width=.21\textwidth]{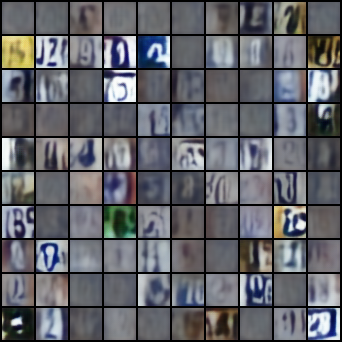} &
    \includegraphics[width=.21\textwidth]{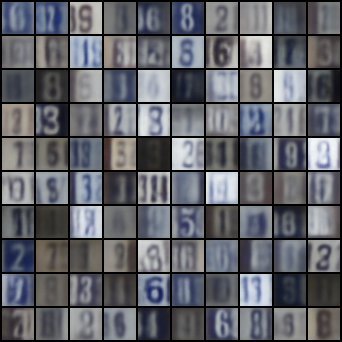} &
    \includegraphics[width=.21\textwidth]{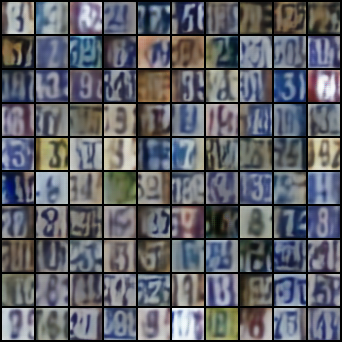}
  \end{tabular}
  \vspace{-6pt}
  \captionsetup[table]{name=Figure}
  \captionof{table}{Generated data on the MNIST and SVHN datasets.}
  \label{tab:realdata}
  \vspace{-12pt}
\end{wraptable}

We test the performance of \ourmodel on real-world image datasets MNIST and SVHN. % against VAE and DAE using the same model architecture.
We consider the VAE-pretrained version, \ourmodelpt, for more stable training. % and incorporating certain knowledge. (see also Appx.~\ref{supp:expm-real}).
Appx.~\ref{supp:expm-real} shows more details.
On these datasets, even BiGAN cannot produce reasonable results using the same architecture, similar to GibbsNet. % , due to that the stochastic conditionals make the mini-max optimization even more unstable and gradient back-propagation involved.

\textbf{Data generation.} % \hspace{4pt}
From Fig.~\ref{tab:realdata}, % shows the generated samples of these methods.
We see that \ourmodelpt generates both sharp and diverse samples, as a sign to mitigate \emph{manifold mismatch}.
DAE samples are mostly imperceptible, due to the mode-collapsed $q_\phi(z|x)$ and the subsequent lack of determinacy (Sec.~\ref{sec:meth-data}). \vspace{-\parskip}

\begin{wraptable}{r}{7.82cm}
  \vspace{-24pt}
  \centering
  \caption{Downstream classification accuracy (\%) using learned representation by various models. \\
    $^\dagger$: Results from~\citep{lamb2017gibbsnet} using a different, deterministic architecture (not suitable for \ourmodel).
    % They make random guess using the flow architecture.
  }
  \vspace{-4pt}
  % \small
  \setlength{\tabcolsep}{1.6pt}
  \begin{tabular}{ccc@{\hspace{0pt}}c@{\hspace{1pt}}c@{\hspace{1pt}}|@{\hspace{1pt}}c}
    \toprule
    & DAE & VAE & BiGAN$^\dagger$ & GibbsNet$^\dagger$ & \textbf{{\small{\ourmodel}}{}{\scriptsize{(PT)}}} \\
    \midrule
    MNIST & 98.0\subpm{0.1} & 94.5\subpm{0.3} & 91.0 & 97.7 & \textbf{98.3\subpm{0.1}} \\
    SVHN  & 74.5\subpm{1.0} & 30.8\subpm{0.2} & 66.7 & \textbf{79.6} & \textbf{75.8\subpm{0.5}} \\
    \bottomrule
  \end{tabular}
  \vspace{-10pt}
  \label{tab:downstream}
\end{wraptable}
% and the difficulty in capturing the data distribution.
VAE samples are a little blurry as a typical behavior due to the simply-connected prior.
This observation is also quantitatively supported by the FID score~\citep{heusel2017gans, seitzer2020fid} on SVHN: \ourmodel achieves 102, while DAE 157 and VAE 128 (lower is better).

\textbf{Representation.} % \hspace{4pt}
We then show in Table~\ref{tab:downstream} that \ourmodelpt's latent representation is more informative for the downstream classification task, as an indicator to avoid \emph{posterior collapse}.
BiGAN and GibbsNet make random guess using the same probabilistic flow architecture, and their reported results in~\citep{lamb2017gibbsnet} using a different, deterministic architecture (not suitable for \ourmodel due to insufficient determinacy) are still not always better, due to the prior constraint.
% On MNIST, \ourmodel achieves 98.3\% test accuracy, while VAE and DAE models with the same likelihood/inference model architecture achieves 93.0\% and 98\% test accuracy, respectively.
% GANs with flow-architecture inference model is notoriously difficult to train and we show the results of BiGAN/GibbsNet with the same latent space dimension from \cite{lamb2017gibbsnet}, \ie, inference model without a flow structure.
% ALI achieves 91\% test accuracy and GibbsNet achieves 97.7\% test accuracy.
% On SVHN dataset, \ourmodel achieves a test accuracy of 77\%, while GibbsNet and BiGAN achieves an accuracy of 79.3\%, 66.7\% respectively~\citep{lamb2017gibbsnet}.
% The results show that by removing the prior constraint and using a more reasonable loss using \ourmodel, the stochastic representation preserves more information of data than using VAE and DAE, and is comparable or better than deterministic representations using BiGAN and GibbsNet.
We conclude that \ourmodel achieves both superior generation and representation learning performance. % compared with bi-directional models such as VAE and BiGAN.

\vspace{-3pt}
\section{Conclusions and Discussions} \label{sec:conc}
\vspace{-3pt}

In this work we investigate the possibility of defining a joint distribution using two conditional distributions, under the motivation for generative modeling without an explicit prior.
We develop a systematic theory with novel and operable equivalence criteria for compatibility and sufficient conditions for determinacy,
and propose a novel generative modeling framework \ourmodel that only uses cyclic conditional models.
Methods for achieving compatibility and determinacy, fitting data and data generation are developed.
Experiments show the benefits of \ourmodel over DAE and prevailing generative models that specify a prior in overcoming manifold mismatch and posterior collapse. % on both synthetic and real-world datasets.
% Future work includes extending the proposed method to speech and machine translations.

The novel \ourmodel framework broadens the starting point to build a generative model, and the general theory could also foster a deeper understanding of other machine learning paradigms, \eg, dual learning and self-supervised learning, and inspire more efficient algorithms.

% As for the limitation of \ourmodel, although it provides a way to remove the prior constraint, this may come at the cost of lack of identifiability and disentanglement of the learned representation,
% which is hard to guarantee even with a specified prior~\citep{locatello2019challenging}.
% This may be mitigated by incorporating domain knowledge into either conditional models, which leads to an interesting future work.

% \begin{ack}
% \end{ack}

\bibliographystyle{abbrvnat}
\bibliography{dualdgm}

\onecolumn
\appendix
\setlength{\parskip}{\parskip+1pt}
\belowdisplayskip=2pt
\belowdisplayshortskip=2pt

\newcommand{\bfthrone}{\textbf{(3.1)}\xspace}
\newcommand{\bfforone}{\textbf{(4.1)}\xspace}
\newcommand{\bffortwo}{\textbf{(4.2)}\xspace}
\newcommand{\bffivone}{\textbf{(5.1)}\xspace}
\newcommand{\bffivtwo}{\textbf{(5.2)}\xspace}
\newcommand{\bffivthr}{\textbf{(5.3)}\xspace}
\newcommand{\bfsixone}{\textbf{(6.1)}\xspace}
\newcommand{\bfsixtwo}{\textbf{(6.2)}\xspace}
\newcommand{\bfsixthr}{\textbf{(6.3)}\xspace}

\section*{Supplementary Materials}

\section{Background in Measure Theory} \label{supp:meas}

\subsection{The Integral} \label{supp:meas-int}

The \emph{integral} of a nonnegative measurable function $f$ on a measure space $(\Omega, \scF, \mu)$ is defined as:
\begin{align} \label{eqn:def-int}
  \int f \dd \mu := \sup \sum_i \mu(\clW^{(i)}) \inf_{\omega \in \clW^{(i)}} f(\omega),
\end{align}
where the supremum is taken over all finite decompositions $\{\clW^{(i)}\}$ of $\Omega$ into $\scF$-sets~\citep[p.211]{billingsley2012probability}.
For a general measurable function, its integral is defined as the subtraction from the integral of its positive part $f^+(\omega) := \max\{0, f(\omega)\}$ with the integral of its negative part $f^-(\omega) := \max\{0, -f(\omega)\}$.
A measurable function is said to be \emph{$\mu$-integrable}~\citep[p.212]{billingsley2012probability} if both integrals of its positive and negative parts are finite.

\bfi This is a general definition of integral.
When $\Omega$ is an Euclidean space and $\mu$ is the Lebesgue measure on it, this integral reduces to the Lebesgue integral (which in turn coincides with the Riemann integral when the latter exists).
When $\Omega$ is a discrete set (\ie, a finite or countable set) and $\mu$ is the counting measure, this integral reduces to summation.

\bfii The integral satisfies common properties like linearity and monotonicity~\citep[Thm.~16.1]{billingsley2012probability}, continuity under boundedness~\citep[Thm.~16.4, Thm.~16.5]{billingsley2012probability}, \etc
For a nonnegative function $f$, $\int f \dd \mu = 0$ if and only if $f = 0$, $\mu$-a.e.~\citep[Thm.~15.2]{billingsley2012probability}.

\bfiii The integral over a set $\clW \in \scF$ is defined as $\int_\clW f \dd \mu := \int \bbI_\clW f \dd \mu$~\citep[p.226]{billingsley2012probability}, where $\bbI_\clW$ is the indicator function.
\\ \bfone We thus sometimes also write $\int_\Omega f \dd \mu$ for $\int f \dd \mu$ to highlight the integral area.
By this definition, $\int_\clW f \dd \mu = 0$ if $\mu(\clW) = 0$~\citep[p.226]{billingsley2012probability}.
\\ \bftwo For two measurable functions $f$ and $g$, if $f = g$, $\mu$-a.e., then $\int_\clW f \dd \mu = \int_\clW g \dd \mu$ for any $\clW \in \scF$~\citep[Thm.~15.2]{billingsley2012probability}.
The inverse also holds if $f$ and $g$ are nonnegative and $\mu$ is sigma-finite, or $f$ and $g$ are integrable~\citep[Thm.~16.10(i,ii)]{billingsley2012probability}\footnote{
  \citealp[Thm.~16.10(iii)]{billingsley2012probability}: $f = g$, $\mu$-a.e., if $\int_\clW f \dd \mu = \int_\clW g \dd \mu$ for any $\clW$ from a pi-system $\Pi$ that generates $\scF$, and $\Omega$ is a finite or countable union of $\Pi$-sets.
}.
\\ \bfthr If $f$ is a nonnegative measurable function, then $\nu(\clW) := \int_\clW f \dd \mu$, $\forall \clW \in \scF$, is a measure on $(\Omega, \scF)$~\citep[p.227]{billingsley2012probability}\footnote{
  Its countable additivity is guaranteed by \citet[Thm.~16.9]{billingsley2012probability}.
}. Such a measure $\nu$ is finite, if and only if $f$ is $\mu$-integrable.

\subsection{Absolute Continuity and Radon-Nikodym Derivative} \label{supp:meas-ac}

For two measures $\mu$ and $\nu$ on the same measurable space $(\Omega, \scF)$, $\nu$ is said to be \emph{absolutely continuous} w.r.t $\mu$, denoted as $\nu \ll \mu$, if $\mu(\clW) = 0$ indicates $\nu(\clW) = 0$ for $\clW \in \scF$~\citep[p.448]{billingsley2012probability}.
If $\mu$ and $\nu$ are sigma-finite and $\nu \ll \mu$, the \emph{Radon-Nikodym theorem}~\citep[Thm.~32.2]{billingsley2012probability} asserts that there exists a $\mu$-unique nonnegative function $f$ on $\Omega$,
such that $\nu(\clW) = \int_\clW f(\omega) \mu(\ud\omega)$ for any $\clW \in \scF$.
Such a function $f$ is called the \emph{Radon-Nikodym (R-N) derivative} of $\nu$ w.r.t $\mu$, and is also denoted as $\frac{\ud \nu}{\ud \mu}$.
It represents the density function of $\nu$ w.r.t base measure $\mu$.

\bfi Since the general definition of integral includes summation in the discrete case, this density function also includes the probability mass function in the discrete case.

\bfii The Dirac measure $\delta_{\omega_0}(\clW) := \bbI_\clW(\omega_0)$ ($\bbI$ is the indicator function) at a single point $\omega_0 \in \Omega$ is not absolutely continuous on Euclidean spaces w.r.t the Lebesgue measure, which assigns measure 0 to the set $\{\omega_0\}$.
To be strict, the Dirac delta function is not a proper density function, since its integrals covering $\omega_0$ involve the indefinite $\infty \cdot 0$ on the component $\{\omega_0\}$ of the integral domain.
Its characteristic that such integrals equal to one, is a standalone structure from being a function.
So it is better treated as a measure of functional.

\subsection{Product Measure Space} \label{supp:meas-prod}

Two measure spaces $(\bbX, \scX, \xi)$ and $(\bbZ, \scZ, \zeta)$ induce a \emph{product measure space} $(\bbX\times\bbZ, \scX\otimes\scZ, \xi\otimes\zeta)$.

\bfi The \emph{product sigma-field} $\scX\otimes\scZ := \sigma(\scX\times\scZ)$ is the smallest sigma-field on $\bbX\times\bbZ$ containing $\scX\times\scZ$ (\citealp[Thm.~22]{galambos1995advanced}; \citealp[Def.~7.1]{rinaldo2018advanced}; equivalently, \citealp[Remark~14.10]{klenke2013probability}; \citealp[Def.~14.4]{klenke2013probability}).
Note that the Cartesian product $\scX\times\scZ$, representing the set of \emph{measurable rectangles}, is only a semiring (thus also a pi-system).
So we need to extend for a sigma-field.
For any $\clW \in \scX\otimes\scZ$, its \emph{slice} (or section) at $z \in \bbZ$, defined by:
\begin{align} \label{eqn:def-slice}
  \clW_z := \{x \mid (x,z) \in \clW\},
\end{align}
lies in $\scX$, and similarly $\clW_x \in \scZ$~\citep[Thm.~18.1(i)]{billingsley2012probability}.
We define the \emph{projection} (or restriction) of $\clW$ onto $\bbZ$, as $\clW^\bbZ := \{z \mid \exists x \in \bbX \st (x,z) \in \clW\}$.
By definition, for any $z \in \bbZ \setminus \clW^\bbZ$, $\clW_z = \emptyset$.

\bfii The \emph{product measure} $\xi\otimes\zeta$ is characterized by $(\xi\otimes\zeta) (\clX\times\clZ) = \xi(\clX) \zeta(\clZ)$ for measurable rectangles $\clX\times\clZ \in \scX\times\scZ$.
Some common conclusions require $\xi$ and $\zeta$ to be sigma-finite on $\scX$ and $\scZ$, respectively.
\\ \bfone In the characterization $(\xi\otimes\zeta) (\clX\times\clZ) = \xi(\clX) \zeta(\clZ)$, if the indefinite $0 \cdot \infty$ is met, it is zero.
To see this, consider two sets $\clX$ and $\clZ$ that satisfy $\xi(\clX) = 0$ and $\zeta(\clZ) = 0$.
Since $\zeta$ is sigma-finite, there are finite or countable disjoint $\scZ$-sets $\clZ^{(1)}, \clZ^{(2)}, \cdots$ such that $\zeta(\clZ^{(i)}) < \infty$ for any $i \ge 1$ and $\bigcup_{i=1}^\infty \clZ^{(i)} = \bbZ$.
Redefining $\clZ^{(i)}$ as $\clZ^{(i)} \cap \clZ$, we have $\bigcup_{i=1}^\infty \clZ^{(i)} = \clZ$ while still $\zeta(\clZ^{(i)}) < \infty$.
So $(\xi\otimes\zeta) (\clX \times \bbZ) = (\xi\otimes\zeta) (\clX \times \bigcup_{i=1}^\infty \clZ^{(i)}) = (\xi\otimes\zeta) (\bigcup_{i=1}^\infty \clX \times \clZ^{(i)})$.
Recalling that a measure is countably additive by definition, this is
$= \sum_{i=1}^\infty (\xi\otimes\zeta) (\clX \times \clZ^{(i)}) = \sum_{i=1}^\infty \xi(\clX) \zeta(\clZ^{(i)}) = 0$.
\\ \bftwo In this case, such a $\xi\otimes\zeta$ is sigma-finite on $\scX\times\scZ$,
and the characterization on the pi-system $\scX\times\scZ$ determines a unique sigma-finite measure on $\sigma(\scX\times\scZ) = \scX\otimes\scZ$~\citep[Thm.~10.3]{billingsley2012probability}.
See also \citet[Thm.~22]{galambos1995advanced}; \citet[Thm.~14.14]{klenke2013probability}; \citet[Thm.~7.9]{rinaldo2018advanced}.
Moreover, we have~\citep[Thm.~18.2]{billingsley2012probability}:
\begin{align} \label{eqn:prod-meas}
  (\xi\otimes\zeta)(\clW) = \int_\bbZ \xi(\clW_z) \zeta(\ud z) = \int_\bbX \zeta(\clW_x) \xi(\ud x), \;
  \forall \clW \in \scX\otimes\scZ.
\end{align}
Since for any $z \in \bbZ \setminus \clW^\bbZ$, $\clW_z = \emptyset$ (see \bfi) thus $\xi(\clW_z) = 0$, we also have
(by leveraging the additivity of integrals over a countable partition~\citep[Thm.~16.9]{billingsley2012probability} and that an a.e. zero function gives a zero integral~\citep[Thm.~15.2(i)]{billingsley2012probability}):
\begin{align} \label{eqn:prod-meas-restr}
  (\xi\otimes\zeta)(\clW) = \int_{\clW^\bbZ} \xi(\clW_z) \zeta(\ud z) = \int_{\clW^\bbX} \zeta(\clW_x) \xi(\ud x), \;
  \forall \clW \in \scX\otimes\scZ.
\end{align}

\bfiii For a function $f$ on $\bbX\times\bbZ$, if it is $\scX\otimes\scZ$-measurable, then $f(x,\cdot)$ is $\scZ$-measurable for any $x \in \bbX$, and $f(\cdot,z)$ is $\scX$-measurable for any $z \in \bbZ$~\citep[Thm.~18.1(ii)]{billingsley2012probability}.
When $f$ is $\xi\otimes\zeta$-integrable, \emph{Fubini's theorem}~\citep[Thm.~18.3]{billingsley2012probability} asserts its integral on $\bbX\times\bbZ$ can be computed iteratedly in either order:
\begin{align} \label{eqn:fubini}
  \int_{\bbX\times\bbZ} f(x,z) (\xi\otimes\zeta)(\ud x \ud z)
  = \int_\bbZ \left( \int_\bbX f(x,z) \xi(\ud x) \right) \zeta(\ud z)
  = \int_\bbX \left( \int_\bbZ f(x,z) \zeta(\ud z) \right) \xi(\ud x).
\end{align}
For any $\clW \in \scX\otimes\scZ$, the same equalities hold for function $\bbI_\clW f$.
For the first iterated integral, we have $\int_\bbX \bbI_\clW(x,z) f(x,z) \xi(\ud x) = \int_\bbX \bbI_{\clW_z}(x) f(x,z) \xi(\ud x) = \int_{\clW_z} f(x,z) \xi(\ud x)$,
and on the region $\bbZ \setminus \clW^\bbZ$, the integral $\int_{\clW_z} f(x,z) \xi(\ud x) = 0$~\citep[p.226]{billingsley2012probability} since $\clW_z = \emptyset$ on that region (see \bfi).
So we have a more general form of Fubini's theorem:
\begin{align} \label{eqn:fubini-restr}
  \int_\clW f(x,z) (\xi\otimes\zeta)(\ud x \ud z)
  = \int_{\clW^\bbZ} \left( \int_{\clW_z} f(x,z) \xi(\ud x) \right) \zeta(\ud z)
  = \int_{\clW^\bbX} \left( \int_{\clW_x} f(x,z) \zeta(\ud z) \right) \xi(\ud x).
\end{align}

\bfiv For a measure $\pi$ on the product measurable space $(\bbX\times\bbZ, \scX\otimes\scZ)$, define its marginal distributions:
$\pi^\bbX(\clX) := \pi(\clX\times\bbZ), \forall \clX \in \scX$, and
$\pi^\bbZ(\clZ) := \pi(\bbX\times\clZ), \forall \clZ \in \scZ$.

\subsection{Conditional Distributions} \label{supp:meas-cond}
In the most general case, a distribution (probability measure) $\pi$ on a measurable space $(\Omega, \scF)$ gives a \emph{conditional distribution} (conditional probability) $\pi(\clW|\omega)$ for $\clW \in \scF$ w.r.t a sub-sigma-field $\scG \subseteq \scF$.

\bfi For any $\clW \in \scF$, the function $\scG \to \bbR^{\ge 0}, \clG \mapsto \pi(\clG \cap \clW)$ gives a measure on $\scG$.
It is absolutely continuous w.r.t $\pi^\scG: \scG \to \bbR^{\ge 0}, \clG \mapsto \pi(\clG)$, the projection of $\pi$ onto $\scG$, due to the monotonicity (or (sub-)additivity) of measures.
So the R-N derivative on $\scG$ exists, which defines the conditional distribution~\citep[p.457]{billingsley2012probability}:
\begin{align} \label{eqn:def-cond}
  \pi(\clW|\omega) := \frac{\ud \pi(\cdot \cap \clW)}{\ud \pi^\scG(\cdot)}(\omega),
\end{align}
where $\omega \in \Omega$.
Note that as defined as an R-N derivative, the conditional distribution is only $\pi^\scG$-unique.

\bfii As a function of $\omega$, $\pi(\clW|\omega)$ is $\scG$-measurable and $\pi$-integrable, and satisfies~\citep[p.457, Thm.~33.1]{billingsley2012probability}:
\begin{align} \label{eqn:cond-int-genrl}
  \int_\clG \pi(\clW|\omega) \pi^\scG(\ud \omega) = \pi(\clG \cap \clW), \forall \clG \in \scG.
\end{align}
This could serve as an alternative definition of conditional probability.

\bfiii For $\pi^\scG$-a.e. $\omega$, $\pi(\cdot|\omega)$ is a distribution (probability measure) on $(\Omega, \scF)$~\citep[Thm.~33.2]{billingsley2012probability}.

\bfiv Conditional distributions on a product measurable space $(\bbX\times\bbZ, \scX\otimes\scZ)$.
Consider the sub-sigma-field $\scG := \{\bbX\} \times \scZ$.
% For a distribution $\pi$ on $(\bbX\times\bbZ, \scX\otimes\scZ)$, its conditional distribution $\pi(\clW | x,z)$ is $\scG$-measurable for any $\clW \in \scX\otimes\scZ$.
% So for any Borel subset $B$ of $\bbR^{\ge 0}$, the preimage $[\pi(\clW | \cdot,\cdot)]^{-1}(B)$ is in $\scG = \{\bbX\} \times \scZ$.
% Particularly, for any point set $\{p\}$ of $\bbR^{\ge 0}$, $[\pi(\clW | \cdot,\cdot)]^{-1}(p) = \bbX \times \clZ_p$ for some $\clZ_p \in \scZ$ that depends on $p$.
% This indicates that $\pi(\clW | x,z)$ is constant of $x$, so we denote it as $\pi(\clW|z)$.
By construction, any $\clG \in \scG$ can be formed by $\clG = \bbX \times \clZ$ for some $\clZ \in \scZ$.
So $\pi^\scG(\clG) := \pi(\clG) = \pi(\bbX \times \clZ) =: \pi^\bbZ(\clZ)$, and \eqref{eqn:cond-int-genrl} becomes
$\pi\big( (\bbX\times\clZ) \cap \clW \big) = \int_{\bbX\times\clZ} \pi(\clW | x,z) \pi^\scG(\ud x \ud z)
= \int_{\bbX\times\clZ} \pi(\clW | x,z) \pi^\bbZ(\ud z) = \int_\clZ \pi(\clW | x,z) \pi^\bbZ(\ud z)$.
This indicates that the conditional probability $\pi(\clW | x,z)$ in this case is constant w.r.t $x$.
We hence denote it as $\pi(\clW|z)$.

Consider $\clW \in \scX\otimes\scZ$ in the form $\clW = \clX \times \bbZ$ for some $\clX \in \scX$.
% By construction, any $\clG \in \scG$ can be formed as $\clG = \bbX \times \clZ$ for some $\clZ \in \scZ$. So from \eqref{eqn:cond-int-genrl} we have
For any $\clG = \bbX \times \clZ \in \scG$, we have from \eqref{eqn:cond-int-genrl} that
$\int_\clG \pi(\clW|z) \pi^\scG(\ud x \ud z) = \pi(\clG \cap \clW) = \pi(\clX\times\clZ)$.
% Applying Fubini's theorem \eqref{eqn:fubini}~\citep[Thm.~18.3]{billingsley2012probability},
From the above deduction,
the l.h.s is $\int_{\bbX \times \clZ} \pi(\clW|z) \pi^\scG(\ud x \ud z) % = \int_\clZ \pi(\clW|z) \int_\bbX \pi^\scG(\ud x \ud z) = \int_\clZ \pi(\clW|z) \pi(\bbX \times \ud z)
= \int_\clZ \pi(\clX \times \bbZ | z) \pi^\bbZ(\ud z)$.
Defining $\pi(\clX|z)$ as $\pi(\clX \times \bbZ | z)$ for any $\clX \in \scX$, % and denoting $\pi(\bbX \times \clZ)$ as $\pi^\bbZ(\clZ)$ for any $\clZ \in \scZ$,
we have:
\begin{align} \label{eqn:cond-int-prod-indep}
  \pi(\clX\times\clZ) = \int_\clZ \pi(\clX|z) \pi^\bbZ(\ud z) = \int_\clX \pi(\clZ|x) \pi^\bbX(\ud x), \;
  \forall \clX\times\clZ \in \scX\times\scZ.
\end{align}
This is the conditional distribution in the usual sense.
Note again that as defined as R-N derivatives, the conditional distributions $\pi(\clX|z)$ and $\pi(\clZ|x)$ are only $\pi^\bbZ$-unique and $\pi^\bbX$-unique, respectively.

For any $\clW \in \scX\otimes\scZ$, define $\pit(\clW) := \int_\bbZ \pi(\clW_z|z) \pi^\bbZ(\ud z)$.
It is easy to verify that $\pit$ is a distribution (probability measure; thus finite and sigma-finite) on $(\bbX\times\bbZ, \scX\otimes\scZ)$~\citep[p.227]{billingsley2012probability},
since $\pi(\clX|z)$ is a distribution (and thus nonnegative) on $(\bbX, \scX)$ for $\pi^\bbZ$-a.e. $z$~\citep[Thm.~33.2]{billingsley2012probability}.
For any $\clW = \clX\times\clZ \in \scX\times\scZ$, $\pit(\clW) = \int_\bbZ \pi(\clX|z) \pi^\bbZ(\ud z) = \int_\clZ \pi(\clX|z) \pi^\bbZ(\ud z) = \pi(\clX\times\clZ)$ due to \eqref{eqn:cond-int-prod-indep}.
So $\pit$ and $\pi$ agree on the pi-system $\scX\times\scZ$, which indicates that they agree on $\sigma(\scX\times\scZ) = \scX\otimes\scZ$ due to \citet[Thm.~10.3, Thm.~3.3]{billingsley2012probability}.
This means that
(see the argument in \bfii\bftwo in Supplement~\ref{supp:meas-prod} for the second line of the equation):
\begin{align}
  \label{eqn:cond-int-prod-dep}
  \pi(\clW) ={} & \int_\bbZ \pi(\clW_z|z) \pi^\bbZ(\ud z) = \int_\bbX \pi(\clW_x|x) \pi^\bbX(\ud x) \\
  \label{eqn:cond-int-prod-dep-restr}
  ={} & \int_{\clW^\bbZ} \pi(\clW_z|z) \pi^\bbZ(\ud z) = \int_{\clW^\bbX} \pi(\clW_x|x) \pi^\bbX(\ud x), \;
  \forall \clW \in \scX\otimes\scZ.
\end{align}

Finally, we formalize some definitions in the main text below.
\begin{definition} \label{def:assame-assubseteq}
  Consider a general measure space $(\Omega, \scF, \mu)$.
  \bfi We say that two measurable sets $\clS, \clSt \in \scF$ are \emph{$\mu$-a.s. the same}, denoted as ``$\clS \aseq{\mu} \clSt$'',
  if $\mu(\clS \triangle \clSt) = 0$, where ``$\triangle$'' denotes the symmetric difference between two sets.
  \bfii We say that $\clS$ is a \emph{$\mu$-a.s. subset} of $\clW$, denoted as ``$\clS \assubseteq{\mu} \clW$'', if $\mu(\clS \setminus \clW) = 0$.
\end{definition}

\section{Lemmas} \label{supp:lemma}

\subsection{Lemmas for General Probability} \label{supp:lemma-general}

\begin{lemma} \label{lem:pmmeas0}
  Let $\clO$ be a measure-zero set, $\mu(\clO) = 0$, on a measure space $(\Omega, \scF, \mu)$.
  Then for any measurable set $\clW$, we have $\mu(\clO \setminus \clW) = \mu(\clW \cap \clO) = 0$, and $\mu(\clW \cup \clO) = \mu(\clW \setminus \clO) = \mu(\clW)$.
\end{lemma}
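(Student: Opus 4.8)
The plan is to reduce everything to two elementary facts: monotonicity of $\mu$ together with its nonnegativity, and finite additivity on disjoint sets. First I would note that since $\scF$ is a sigma-field, all sets appearing in the statement — $\clO \setminus \clW$, $\clW \cap \clO$, $\clW \cup \clO$, and $\clW \setminus \clO$ — lie in $\scF$, so every measure written below is defined.

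For the two ``equals zero'' claims, observe $\clO \setminus \clW \subseteq \clO$ and $\clW \cap \clO \subseteq \clO$. Monotonicity gives $\mu(\clO \setminus \clW) \le \mu(\clO) = 0$ and $\mu(\clW \cap \clO) \le \mu(\clO) = 0$, and since $\mu$ is nonnegative, both are exactly $0$. For the remaining two identities I would decompose into disjoint unions: $\clW \cup \clO = \clW \cup (\clO \setminus \clW)$ with the two pieces disjoint, so finite additivity yields $\mu(\clW \cup \clO) = \mu(\clW) + \mu(\clO \setminus \clW) = \mu(\clW) + 0 = \mu(\clW)$; and $\clW = (\clW \setminus \clO) \cup (\clW \cap \clO)$, again a disjoint union, so $\mu(\clW) = \mu(\clW \setminus \clO) + \mu(\clW \cap \clO) = \mu(\clW \setminus \clO)$, which is the last equality.

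There is essentially no obstacle here; the only point deserving a moment's care is that $\mu(\clW)$ may equal $+\infty$, so one should check that the additive manipulations only ever add the finite quantity $0$ — which is the case in both decompositions above — so no indeterminate $\infty - \infty$ ever arises.
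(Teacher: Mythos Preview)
Your proof is correct and essentially the same as the paper's: both use monotonicity for the zero claims and the disjoint decomposition $\clW \cup \clO = \clW \cup (\clO \setminus \clW)$ for the union. The only cosmetic difference is the last equality, where the paper reuses the union identity applied to $\clW \setminus \clO$ (noting $(\clW \setminus \clO) \cup \clO = \clW \cup \clO$), while you decompose $\clW = (\clW \setminus \clO) \cup (\clW \cap \clO)$ directly; both are equally immediate, and your remark about avoiding $\infty - \infty$ is a nice touch the paper leaves implicit.
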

\begin{proof}
  Due to the monotonicity of a measure~\citep[Thm.~16.1]{billingsley2012probability}, we have
  $\mu(\clO \setminus \clW) \le \mu(\clO) = 0$ and $\mu(\clW \cap \clO) \le \mu(\clO) = 0$,
  so we get $\mu(\clO \setminus \clW) = \mu(\clW \cap \clO) = 0$.
  Since $\mu(\clW \cup \clO) = \mu(\clW \cup (\clO \setminus \clW))$ and the two sets are disjoint, it equals to
  $\mu(\clW) + \mu(\clO \setminus \clW)$, which is $\mu(\clW)$ by the above conclusion.
  So we get $\mu(\clW \cup \clO) = \mu(\clW)$.
  When applying this conclusion to $\clW \setminus \clO$, we have $\mu((\clW \setminus \clO) \cup \clO) = \mu(\clW \setminus \clO)$,
  while the l.h.s is $\mu(\clW \cup \clO)$ which is $\mu(\clW)$ by the same conclusion.
  So we get $\mu(\clW \setminus \clO) = \mu(\clW)$.
\end{proof}

\begin{lemma} \label{lem:support-int}
  Let $\pi$ be an absolutely continuous distribution (probability measure) on a measure space $(\Omega, \scF, \mu)$ with a density function $f$,
  and let $\clS \in \scF$ be a measurable set.
  Then $\pi(\clS) = 1$ if and only if $\pi(\clW) = \int_{\clW \cap \clS} f \dd \mu, \forall \clW \in \scF$.
\end{lemma}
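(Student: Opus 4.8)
The plan is to prove both directions directly from the defining property of a density function — namely that, $f$ being the Radon--Nikodym derivative of $\pi$ with respect to $\mu$, one has $\pi(\clW') = \int_{\clW'} f \dd \mu$ for every $\clW' \in \scF$ — together with the fact that $\pi$ is a probability measure.

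For the ``if'' direction I would substitute two specific choices of $\clW$ into the hypothesis $\pi(\clW) = \int_{\clW \cap \clS} f \dd \mu$: taking $\clW = \clS$ gives $\pi(\clS) = \int_{\clS} f \dd \mu$, and taking $\clW = \Omega$ gives $\pi(\Omega) = \int_{\Omega \cap \clS} f \dd \mu = \int_{\clS} f \dd \mu$; since $\pi(\Omega) = 1$, these two equalities combine to give $\pi(\clS) = 1$.

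For the ``only if'' direction I would start from $\pi(\clS) = 1$, which (as $\pi$ is a probability measure) forces $\pi(\Omega \setminus \clS) = 1 - \pi(\clS) = 0$. Then I would invoke Lemma~\ref{lem:pmmeas0}, applied with the ambient measure taken to be $\pi$ and the null set $\clO := \Omega \setminus \clS$: it yields, for every $\clW \in \scF$, that $\pi(\clW \cap \clS) = \pi(\clW \setminus \clO) = \pi(\clW)$. Finally I would apply the density identity to the set $\clW \cap \clS$, obtaining $\pi(\clW) = \pi(\clW \cap \clS) = \int_{\clW \cap \clS} f \dd \mu$, which is the claim.

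I do not expect a genuine obstacle here; the only points demanding any care are bookkeeping ones: reading ``density function'' as the R--N derivative so that the integral identity $\pi(\cdot) = \int_{\cdot} f \dd \mu$ is available as the very definition of $\pi$, and observing that the probability-measure hypothesis (not just sigma-finiteness of $\mu$) is what is used twice — via $\pi(\Omega) = 1$ in the ``if'' direction and via $\pi(\Omega \setminus \clS) = 0$ in the ``only if'' direction.
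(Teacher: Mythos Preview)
Your proposal is correct and follows essentially the same approach as the paper: the ``if'' direction by substituting $\clW=\Omega$ and $\clW=\clS$, and the ``only if'' direction by first deducing $\pi(\Omega\setminus\clS)=0$ and then obtaining $\pi(\clW)=\pi(\clW\cap\clS)$. The only cosmetic difference is that you invoke Lemma~\ref{lem:pmmeas0} for that last step, whereas the paper writes out the monotonicity argument inline.
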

\begin{proof}
  \textbf{``Only if'':}
  Since $\clS \subseteq \Omega$, we have $\pi(\Omega \setminus \clS) = \pi(\Omega) - \pi(\clS) = 0$.
  For any $\clW \in \scF$, we have $\pi(\clW) = \pi(\clW \cap \clS) + \pi(\clW \cap (\Omega \setminus \clS))$,
  while $0 \le \pi(\clW \cap (\Omega \setminus \clS)) \le \pi(\Omega \setminus \clS) = 0$.
  So we have $\pi(\clW) = \pi(\clW \cap \clS) = \int_{\clW \cap \clS} f \dd \mu$.

  \textbf{``If'':}
  $1 = \pi(\Omega) = \int_{\Omega \cap \clS} f \dd \mu = \int_\clS f \dd \mu = \int_{\clS \cap \clS} f \dd \mu = \pi(\clS)$.
\end{proof}

\begin{lemma} \label{lem:assame-meas}
  Let $\clS$ and $\clSt$ be two measurable sets on a measure space $(\Omega, \scF, \mu)$ such that $\clS \aseq{\mu} \clSt$.
  Then $\mu(\clS \setminus \clSt) = \mu(\clSt \setminus \clS) = 0$, and $\mu(\clS) = \mu(\clSt) = \mu(\clS \cup \clSt) = \mu(\clS \cap \clSt)$.
\end{lemma}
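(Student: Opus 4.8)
The plan is to reduce everything to Lemma~\ref{lem:pmmeas0} applied to the single measure-zero set $\clO := \clS \triangle \clSt$, which satisfies $\mu(\clO) = 0$ by the hypothesis $\clS \aseq{\mu} \clSt$ (Def.~\ref{def:assame-assubseteq}). The only ingredients are a handful of elementary set identities together with Lemma~\ref{lem:pmmeas0}; no measure theory beyond monotonicity and finite additivity is required.

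First I would record the identities $\clS \setminus \clSt = \clS \cap \clO$ and $\clSt \setminus \clS = \clSt \cap \clO$, each obtained by distributing the intersection over the union $\clO = (\clS \setminus \clSt) \cup (\clSt \setminus \clS)$ and discarding the empty piece (e.g.\ $\clS \cap (\clSt \setminus \clS) = \emptyset$). Feeding these into the first conclusion of Lemma~\ref{lem:pmmeas0}, namely $\mu(\clW \cap \clO) = 0$ for every measurable $\clW$, once with $\clW = \clS$ and once with $\clW = \clSt$, yields $\mu(\clS \setminus \clSt) = \mu(\clSt \setminus \clS) = 0$, which is the first assertion.

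Next I would note the identities $\clS \cap \clSt = \clS \setminus \clO = \clSt \setminus \clO$ and $\clS \cup \clSt = (\clS \cap \clSt) \cup \clO$ (again routine set computations). Applying the second conclusion of Lemma~\ref{lem:pmmeas0}, $\mu(\clW \setminus \clO) = \mu(\clW)$, with $\clW = \clS$ and $\clW = \clSt$ gives $\mu(\clS \cap \clSt) = \mu(\clS)$ and $\mu(\clS \cap \clSt) = \mu(\clSt)$; and applying the remaining part, $\mu(\clW \cup \clO) = \mu(\clW)$, with $\clW = \clS \cap \clSt$ gives $\mu(\clS \cup \clSt) = \mu(\clS \cap \clSt)$. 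Chaining these equalities closes the proof.

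There is essentially no obstacle here: the statement is a bookkeeping lemma. The only point to watch is that every displayed relation is a genuine set identity, not an a.s.\ one, so that Lemma~\ref{lem:pmmeas0} applies verbatim; equivalently one could bypass Lemma~\ref{lem:pmmeas0} entirely and argue from the disjoint decompositions $\clS = (\clS \cap \clSt) \cup (\clS \setminus \clSt)$ and $\clS \cup \clSt = (\clS \cap \clSt) \cup (\clS \setminus \clSt) \cup (\clSt \setminus \clS)$ using finite additivity together with the bound $\mu(\clS \setminus \clSt), \mu(\clSt \setminus \clS) \le \mu(\clS \triangle \clSt) = 0$ from monotonicity.
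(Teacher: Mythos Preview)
Your proposal is correct and follows essentially the same approach as the paper's proof. The only difference is organizational: you route every step through Lemma~\ref{lem:pmmeas0} applied to the single null set $\clO = \clS \triangle \clSt$, whereas the paper splits $\clO$ into $\clD^+ = \clSt \setminus \clS$ and $\clD^- = \clS \setminus \clSt$ and argues directly from finite additivity; the underlying computations are the same.
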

\begin{proof}
  Let $\clD^+ := \clSt \setminus \clS$ and $\clD^- := \clS \setminus \clSt$.
  By construction, we have $\clD^+ \cap \clS = \emptyset$ and $\clD^- \subseteq \clS$,
  so we also have $\clD^+ \cap \clD^- = \emptyset$, and $\clSt = (\clS \setminus \clD^-) \cup \clD^+ = (\clS \cup \clD^+) \setminus \clD^-$.
  By definition, $\clS \aseq{\mu} \clSt$ indicates $0 = \mu(\clS \triangle \clSt) = \mu(\clD^+ \cup \clD^-) = \mu(\clD^+) + \mu(\clD^-)$,
  so we have both $\mu(\clD^+) = 0$ and $\mu(\clD^-) = 0$.
  Subsequently, $\mu(\clSt) = \mu\big( (\clS \setminus \clD^-) \cup \clD^+ \big) = \mu(\clS \setminus \clD^-) + \mu(\clD^+) = \mu(\clS \setminus \clD^-)
  = \mu(\clS) - \mu(\clD^- \cap \clS) = \mu(\clS) - \mu(\clD^-) = \mu(\clS)$, and
  $\mu(\clS \cup \clSt) = \mu(\clS \cup \clD^+) = \mu(\clS) + \mu(\clD^+) = \mu(\clS)$.
  Noting also that $\clS \cup \clSt = (\clS \cap \clSt) \cup (\clS \triangle \clSt)$ and that this is a disjoint union,
  we have $\mu(\clS \cup \clSt) = \mu(\clS \cap \clSt) + \mu(\clS \triangle \clSt) = \mu(\clS \cap \clSt)$.
\end{proof}

\begin{lemma} \label{lem:assame-equiv}
  On a measure space $(\Omega, \scF, \mu)$, ``$\cdot \aseq{\mu} \cdot$'' is an equivalence relation.
\end{lemma}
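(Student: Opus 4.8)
The plan is to verify directly the three defining properties of an equivalence relation for the relation $\clS \aseq{\mu} \clSt \iff \mu(\clS \triangle \clSt) = 0$ on $\scF$, noting first that $\clS \triangle \clSt \in \scF$ whenever $\clS, \clSt \in \scF$ since a sigma-field is closed under complements, intersections and unions, so the relation is well posed. \emph{Reflexivity} is immediate: $\clS \triangle \clS = \emptyset$, hence $\mu(\clS \triangle \clS) = \mu(\emptyset) = 0$. \emph{Symmetry} is immediate from the symmetry of the symmetric difference, $\clS \triangle \clSt = \clSt \triangle \clS$, so $\mu(\clS \triangle \clSt) = 0$ iff $\mu(\clSt \triangle \clS) = 0$.

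The only step requiring an argument is \emph{transitivity}. Given three sets $\clS_1, \clS_2, \clS_3 \in \scF$ with $\mu(\clS_1 \triangle \clS_2) = 0$ and $\mu(\clS_2 \triangle \clS_3) = 0$, I would first establish the set-theoretic ``triangle inclusion''
\[
  \clS_1 \triangle \clS_3 \;\subseteq\; (\clS_1 \triangle \clS_2) \cup (\clS_2 \triangle \clS_3),
\]
by a short pointwise case check: if $\omega$ lies in exactly one of $\clS_1, \clS_3$, say (using the symmetry just noted) $\omega \in \clS_1 \setminus \clS_3$, then according to whether $\omega \in \clS_2$ or $\omega \notin \clS_2$ one gets $\omega \in \clS_2 \setminus \clS_3 \subseteq \clS_2 \triangle \clS_3$ or $\omega \in \clS_1 \setminus \clS_2 \subseteq \clS_1 \triangle \clS_2$, respectively. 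Applying the monotonicity and finite subadditivity of a measure~\citep[Thm.~16.1]{billingsley2012probability} then yields $\mu(\clS_1 \triangle \clS_3) \le \mu(\clS_1 \triangle \clS_2) + \mu(\clS_2 \triangle \clS_3) = 0$, i.e.\ $\clS_1 \aseq{\mu} \clS_3$, as desired.

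Since each of the three properties reduces to an elementary set identity together with the basic monotonicity/subadditivity of $\mu$ already invoked in Lem.~\ref{lem:pmmeas0}, there is no genuine obstacle here; the only point that is not a one-line observation is the triangle inclusion above, and even that is settled by the two-case analysis just sketched. (Alternatively, transitivity could be derived from Lem.~\ref{lem:pmmeas0} by viewing $\clS_1 \triangle \clS_3$ as a subset of the null set $(\clS_1 \triangle \clS_2) \cup (\clS_2 \triangle \clS_3)$, but the direct estimate is cleaner.)
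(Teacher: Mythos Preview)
Your proof is correct and follows essentially the same approach as the paper: reflexivity and symmetry are declared obvious, and transitivity is obtained from the triangle inclusion for symmetric differences together with monotonicity and subadditivity of $\mu$. The only cosmetic difference is that the paper splits $\clS_1 \triangle \clS_3$ into the two halves $\clS_1 \setminus \clS_3$ and $\clS_3 \setminus \clS_1$ and bounds each separately (invoking Lem.~\ref{lem:assame-meas} to conclude the individual set-differences have measure zero), whereas you work with the full symmetric difference at once; both arguments are the same in substance.
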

\begin{proof}
  Symmetry and reflexivity are obvious.
  For transitivity, let $\clA$, $\clB$ and $\clC$ be three measurable sets such that $\clA \aseq{\mu} \clB$ and $\clB \aseq{\mu} \clC$.
  Since $\clA \setminus \clC = \big( (\clA \setminus \clC) \cap \clB \big) \cup \big( (\clA \setminus \clC) \setminus \clB \big)
  = \big( \clA \cap (\clB \setminus \clC) \big) \cup \big( (\clA \setminus \clB) \setminus \clC \big)
  \subseteq (\clB \setminus \clC) \cup (\clA \setminus \clB)$, we have
  $\mu(\clA \setminus \clC) \le \mu(\clB \setminus \clC) + \mu(\clA \setminus \clB) = 0$ due to Lemma~\ref{lem:assame-meas}.
  Similarly, $\mu(\clC \setminus \clA) = 0$.
  So $\mu(\clA \triangle \clC) = \mu(\clA \setminus \clC) + \mu(\clC \setminus \clA) = 0$. %, which verifies the definition \eqref{eqn:def-assame-var}.
\end{proof}

\begin{lemma} \label{lem:assame-setop}
  Let $\clS$ and $\clSt$ be two measurable sets on a measure space $(\Omega, \scF, \mu)$ such that $\clS \aseq{\mu} \clSt$.
  Then for any measurable set $\clW$, we have $\clS \cup \clW \aseq{\mu} \clSt \cup \clW$, $\clS \cap \clW \aseq{\mu} \clSt \cap \clW$, $\clS \setminus \clW \aseq{\mu} \clSt \setminus \clW$ and $\clW \setminus \clS \aseq{\mu} \clW \setminus \clSt$.
\end{lemma}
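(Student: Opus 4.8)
The plan is to reduce all four assertions to a single fact plus monotonicity of the measure. Recall that by Def.~\ref{def:assame-assubseteq}, $\clS \aseq{\mu} \clSt$ means precisely $\mu(\clS \triangle \clSt) = 0$. Hence for each of the four claims it suffices to show that the symmetric difference of the two sets being compared is a subset of $\clS \triangle \clSt$; then the measure of that symmetric difference is $\le \mu(\clS \triangle \clSt) = 0$ by monotonicity (\citealp[Thm.~16.1]{billingsley2012probability}; or Lemma~\ref{lem:pmmeas0}), which is exactly the asserted ``$\aseq{\mu}$'' relation.

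First I would record the elementary set identities $(\clS \cup \clW) \triangle (\clSt \cup \clW) = (\clS \triangle \clSt) \setminus \clW$, $(\clS \cap \clW) \triangle (\clSt \cap \clW) = (\clS \triangle \clSt) \cap \clW$, $(\clS \setminus \clW) \triangle (\clSt \setminus \clW) = (\clS \triangle \clSt) \setminus \clW$, and $(\clW \setminus \clS) \triangle (\clW \setminus \clSt) = (\clS \triangle \clSt) \cap \clW$. Each follows from a one-line membership argument: an element lies in the left-hand side iff it belongs to exactly one of the two sets inside the outer symmetric difference, and a case split on whether the element lies in $\clW$ makes one case vacuous and reduces the other to belonging to exactly one of $\clS$, $\clSt$. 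For the third and fourth identities it is cleanest to write $\clS \setminus \clW = \clS \cap (\Omega \setminus \clW)$ and $\clW \setminus \clS = \clW \cap (\Omega \setminus \clS)$ and reuse the intersection identity, noting that $\Omega \setminus \clW$ and $\Omega \setminus \clS$ are measurable since $\clW, \clS \in \scF$. In every case the right-hand side is contained in $\clS \triangle \clSt$.

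With these identities in hand each conclusion is immediate: for instance $\mu\big((\clS \cup \clW) \triangle (\clSt \cup \clW)\big) = \mu\big((\clS \triangle \clSt) \setminus \clW\big) \le \mu(\clS \triangle \clSt) = 0$, so $\clS \cup \clW \aseq{\mu} \clSt \cup \clW$, and the remaining three follow in the same way. I do not expect a genuine obstacle here; the only thing requiring care is getting the membership case analysis right in each of the four set identities, and the argument is automatically symmetric in $\clS$ and $\clSt$ because ``$\aseq{\mu}$'' is defined through the symmetric difference.
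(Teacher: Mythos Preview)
Your proposal is correct and essentially the same approach as the paper's: both reduce each claim to showing the relevant symmetric difference has $\mu$-measure zero via elementary set identities and monotonicity. The only cosmetic difference is that the paper splits $\clS \triangle \clSt$ into $\clD^+ := \clSt \setminus \clS$ and $\clD^- := \clS \setminus \clSt$ and bounds the two one-sided set differences separately, whereas you work with the symmetric-difference identities directly; your packaging is slightly cleaner but mathematically equivalent.
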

\begin{proof}
  Let $\clD^+ := \clSt \setminus \clS$ and $\clD^- := \clS \setminus \clSt$.
  By Lemma~\ref{lem:assame-meas}, we have $\mu(\clD^+) = 0$ and $\mu(\clD^-) = 0$.

  For any measurable set $\clW$, we have $(\clSt \cup \clW) \setminus (\clS \cup \clW) = \clSt \setminus \clS \setminus \clW = \clD^+ \setminus \clW$,
  and similarly $(\clS \cup \clW) \setminus (\clSt \cup \clW) = \clD^- \setminus \clW$.
  So $\mu\big( (\clS \cup \clW) \triangle (\clSt \cup \clW) \big)
  = \mu\big( \big( (\clS \cup \clW) \setminus (\clSt \cup \clW) \big) \cup \big( (\clSt \cup \clW) \setminus (\clS \cup \clW) \big) \big)
  = \mu\big( (\clD^- \setminus \clW) \cup (\clD^+ \setminus \clW) \big) = \mu(\clD^- \setminus \clW) + \mu(\clD^+ \setminus \clW)
  \le \mu(\clD^-) + \mu(\clD^+) = 0$, that is $\clS \cup \clW \aseq{\mu} \clSt \cup \clW$.

  Since $(\clSt \cap \clW) \setminus (\clS \cap \clW) = (\clSt \setminus \clS) \cap \clW = \clD^+ \cap \clW$
  and similarly $(\clS \cap \clW) \setminus (\clSt \cap \clW) = \clD^- \cap \clW$, we have
  $\mu\big( (\clS \cap \clW) \triangle (\clSt \cap \clW) \big)
  = \mu\big( \big( (\clS \cap \clW) \setminus (\clSt \cap \clW) \big) \cup \big( (\clSt \cap \clW) \setminus (\clS \cap \clW) \big) \big)
  = \mu\big( (\clD^- \cap \clW) \cup (\clD^+ \cap \clW) \big) = \mu(\clD^- \cap \clW) + \mu(\clD^+ \cap \clW)
  \le \mu(\clD^-) + \mu(\clD^+) = 0$, so $\clS \cap \clW \aseq{\mu} \clSt \cap \clW$.

  Since $(\clSt \setminus \clW) \setminus (\clS \setminus \clW) = \clSt \setminus \clW \setminus \clS = \clSt \setminus \clS \setminus \clW = \clD^+ \setminus \clW$
  and similarly $(\clS \setminus \clW) \setminus (\clSt \setminus \clW) = \clD^- \setminus \clW$, we have
  $\mu\big( (\clS \setminus \clW) \triangle (\clSt \setminus \clW) \big)
  = \mu\big( \big( (\clS \setminus \clW) \setminus (\clSt \setminus \clW) \big) \cup \big( (\clSt \setminus \clW) \setminus (\clS \setminus \clW) \big) \big)
  = \mu\big( (\clD^- \setminus \clW) \cup (\clD^+ \setminus \clW) \big) = \mu(\clD^- \setminus \clW) + \mu(\clD^+ \setminus \clW)
  \le \mu(\clD^-) + \mu(\clD^+) = 0$, so $\clS \setminus \clW \aseq{\mu} \clSt \setminus \clW$.

  Since $(\clW \setminus \clSt) \setminus (\clW \setminus \clS) = \clW \setminus (\clW \setminus \clS) \setminus \clSt
  = (\clW \cap \clS) \setminus \clSt = (\clS \setminus \clSt) \cap \clW = \clD^- \cap \clW$
  and similarly $(\clW \setminus \clS) \setminus (\clW \setminus \clSt) = \clD^+ \cap \clW$, we have
  $\mu\big( (\clW \setminus \clS) \triangle (\clW \setminus \clSt) \big)
  = \mu\big( \big( (\clW \setminus \clS) \setminus (\clW \setminus \clSt) \big) \cup \big( (\clW \setminus \clSt) \setminus (\clW \setminus \clS) \big) \big)
  = \mu\big( (\clD^+ \cap \clW) \cup (\clD^- \cap \clW) \big) = \mu(\clD^+ \cap \clW) + \mu(\clD^- \cap \clW)
  \le \mu(\clD^+) + \mu(\clD^-) = 0$, so $\clW \setminus \clS \aseq{\mu} \clW \setminus \clSt$.
\end{proof}

\begin{definition} \label{def:asunique}
  We say that a set satisfying a certain condition is \emph{$\mu$-unique}, if for any two such sets $\clS$ and $\clSt$, it holds that $\clS \aseq{\mu} \clSt$.
\end{definition}

\begin{lemma} \label{lem:support-asunique}
  Let $\pi$ be an absolutely continuous distribution (probability measure) on a measure space $(\Omega, \scF, \mu)$ with a density function $f$.
  If a set $\clS \in \scF$ satisfies $\pi(\clS) = 1$ and that $f > 0$, $\mu$-a.e. on $\clS$,
  then such an $\clS$ is $\mu$-unique.
\end{lemma}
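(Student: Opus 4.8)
The plan is to show directly that any two sets $\clS,\clSt\in\scF$ with the stated properties satisfy $\mu(\clS\setminus\clSt)=\mu(\clSt\setminus\clS)=0$, hence $\mu(\clS\triangle\clSt)=0$, i.e. $\clS\aseq{\mu}\clSt$, which is exactly $\mu$-uniqueness per Def.~\ref{def:asunique}.

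First I would handle $\mu(\clSt\setminus\clS)$. Since $\pi(\clS)=1$ and $\pi$ is a probability measure, $\pi(\Omega\setminus\clS)=0$; as $\clSt\setminus\clS\subseteq\Omega\setminus\clS$, monotonicity of $\pi$ (\citealp[Thm.~16.1]{billingsley2012probability}) gives $\pi(\clSt\setminus\clS)=0$. Because $\pi$ has density $f$ w.r.t.\ $\mu$, this reads $\int_{\clSt\setminus\clS} f\dd\mu=0$, i.e. $\int \bbI_{\clSt\setminus\clS}\, f\dd\mu = 0$ with $\bbI_{\clSt\setminus\clS}\,f\ge 0$. By \citet[Thm.~15.2(i)]{billingsley2012probability}, a nonnegative function with zero integral vanishes $\mu$-a.e., so $f=0$ $\mu$-a.e.\ on $\clSt\setminus\clS$. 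On the other hand, $\clSt\setminus\clS\subseteq\clSt$ and $f>0$ $\mu$-a.e.\ on $\clSt$ by hypothesis, so $f>0$ $\mu$-a.e.\ on $\clSt\setminus\clS$. Since $f\ge 0$ everywhere, combining these, $\mu(\clSt\setminus\clS) = \mu\{x\in\clSt\setminus\clS : f(x)=0\} + \mu\{x\in\clSt\setminus\clS : f(x)>0\} = 0 + 0 = 0$ (alternatively, one may invoke Lemma~\ref{lem:support-int} applied to $\clW=\clSt\setminus\clS$ to get $\pi(\clSt\setminus\clS)=\int_{(\clSt\setminus\clS)\cap\clS} f\dd\mu = 0$ directly).

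By the symmetric argument, swapping the roles of $\clS$ and $\clSt$ (using $\pi(\clSt)=1$ and $f>0$ $\mu$-a.e.\ on $\clS$), we get $\mu(\clS\setminus\clSt)=0$. Therefore $\mu(\clS\triangle\clSt)=\mu(\clS\setminus\clSt)+\mu(\clSt\setminus\clS)=0$, which is the claim.

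I do not expect a genuine obstacle here; the only care needed is to correctly combine the two a.e.\ statements ``$f=0$ a.e.\ on $\clSt\setminus\clS$'' and ``$f>0$ a.e.\ on $\clSt\setminus\clS$'' to conclude the set itself is $\mu$-null, which is immediate from nonnegativity of $f$. Everything else is bookkeeping with measures and the defining property of the density.
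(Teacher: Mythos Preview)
Your proof is correct. The paper's argument is a small variation: it first invokes Lemma~\ref{lem:support-int} to write $\pi(\clW)=\int_\clW \bbI_\clS f\dd\mu=\int_\clW \bbI_\clSt f\dd\mu$ for all $\clW$, then applies \citet[Thm.~16.10(ii)]{billingsley2012probability} to conclude $\bbI_\clS f=\bbI_\clSt f$ $\mu$-a.e., and finally uses $f>0$ $\mu$-a.e.\ on $\clS$ (resp.\ $\clSt$) to cancel $f$ and deduce $\mu(\clS\setminus\clSt)=0$ (resp.\ $\mu(\clSt\setminus\clS)=0$). Your route is marginally more direct: you work on the single set $\clSt\setminus\clS$ rather than globally, and you avoid the density-uniqueness theorem by combining ``$f=0$ a.e.'' and ``$f>0$ a.e.'' on that set. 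Both arguments are equally elementary and of the same length; the difference is cosmetic.
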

\begin{proof}
  Suppose we have two such sets $\clS$ and $\clSt$.
  By Lemma~\ref{lem:support-int}, we know that for any $\clW \in \scF$, $\pi(\clW) = \int_{\clW \cap \clS} f \dd \mu
  = \int_\clW \bbI_\clS f \dd \mu = \int_\clW \bbI_\clSt f \dd \mu$.
  So by \citet[Thm.~16.10(ii)]{billingsley2012probability}, we know that $\bbI_\clS f = \bbI_\clSt f$, $\mu$-a.e.

  Since $f > 0$, $\mu$-a.e. on $\clS$, we know that $\bbI_\clS = \bbI_\clSt$, $\mu$-a.e. on $\clS$.
  This means that $\mu\{\omega \in \clS \mid \bbI_\clS \ne \bbI_\clSt\}
  = \mu\{\omega \in \clS \mid \omega \notin \clSt\} = \mu(\clS \setminus \clSt) = 0$.
  Symmetrically, since $f > 0$, $\mu$-a.e. also on $\clSt$, we know that $\mu(\clSt \setminus \clS) = 0$.
  So we have $\mu(\clS \triangle \clSt) = \mu( (\clS \setminus \clSt) \cup (\clSt \setminus \clS) ) = \mu(\clS \setminus \clSt) + \mu(\clSt \setminus \clS) = 0$,
  which means that $\clS \aseq{\mu} \clSt$.
\end{proof}

The $\mu$-unique set $\clS$ in the lemma serves as another form of the \emph{support} of a distribution.
The standard definition of the support requires a topological structure and $\scF$ is the corresponding Borel sigma-field.
If given absolute continuity $\pi \ll \mu$, this lemma enables the generality that does not require a topological structure.
The condition $\pi(\clS) = 1$ prevents $\clS$ to be too small, while the condition that $f > 0$, $\mu$-a.e. on $\clS$ prevents $\clS$ to be too large.
\begin{definition}[support of an absolutely continuous distribution (without topology)] \label{def:support}
  Define the \emph{support} of an absolutely continuous distribution (probability measure) $\pi$ on a measure space $(\Omega, \scF, \mu)$,
  as the $\mu$-unique set $\clS \in \scF$ such that $\pi(\clS) = 1$ and for any density function $f$ of $\pi$, it holds that $f > 0$, $\mu$-a.e. on $\clS$.
\end{definition}

\subsection{Lemmas for Product Probability} \label{supp:lemma-prod}

In this subsection and the following, let $(\bbX\times\bbZ, \scX\otimes\scZ, \xi\otimes\zeta)$ be the product measure space by the two individual ones $(\bbX, \scX, \xi)$ and $(\bbZ, \scZ, \zeta)$,
where $\xi$ and $\zeta$ are sigma-finite.

\begin{lemma} \label{lem:marg-ac}
  For a measure $\pi$ on the product measure space $(\bbX\times\bbZ, \scX\otimes\scZ, \xi\otimes\zeta)$,
  if $\pi \ll \xi\otimes\zeta$, then $\pi^\bbX \ll \xi$ and $\pi^\bbZ \ll \zeta$.
\end{lemma}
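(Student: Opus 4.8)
The plan is to reduce the claim to the elementary fact that a ``vertical'' cylinder $\clX\times\bbZ$ has $\xi\otimes\zeta$-measure zero whenever $\xi(\clX)=0$, and then to apply the hypothesis $\pi\ll\xi\otimes\zeta$ directly to such cylinders, since the marginals $\pi^\bbX$, $\pi^\bbZ$ are by definition just the values of $\pi$ on exactly these sets.

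Concretely, to prove $\pi^\bbX\ll\xi$ I would fix an arbitrary $\clX\in\scX$ with $\xi(\clX)=0$ and set $\clW:=\clX\times\bbZ$. Its slice at every $z\in\bbZ$ is $\clW_z=\clX$, so the slice formula \eqref{eqn:prod-meas} gives $(\xi\otimes\zeta)(\clW)=\int_\bbZ\xi(\clX)\,\zeta(\ud z)$; the integrand here is the constant function $0$, hence the integral equals $0$ no matter whether $\zeta(\bbZ)$ is finite (equivalently, one may use $(\xi\otimes\zeta)(\clX\times\bbZ)=\xi(\clX)\zeta(\bbZ)$ together with the convention $0\cdot\infty=0$ spelled out before \eqref{eqn:prod-meas}). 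Thus $(\xi\otimes\zeta)(\clX\times\bbZ)=0$, and $\pi\ll\xi\otimes\zeta$ forces $\pi^\bbX(\clX)=\pi(\clX\times\bbZ)=0$. Since $\clX$ was an arbitrary $\xi$-null set, $\pi^\bbX\ll\xi$.

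The proof of $\pi^\bbZ\ll\zeta$ is word-for-word the same after interchanging $\bbX\leftrightarrow\bbZ$ and $\xi\leftrightarrow\zeta$, using the second equality of \eqref{eqn:prod-meas} applied to $\bbX\times\clZ$. The only point requiring any care is the indeterminate $0\cdot\infty$ just noted, which is precisely the situation already resolved in the paragraph introducing the product measure; apart from that, everything is immediate from the definitions of the marginals and of absolute continuity, so I anticipate no real obstacle.
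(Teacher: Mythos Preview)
Your proposal is correct and follows essentially the same approach as the paper: fix a $\xi$-null set $\clX$, observe $(\xi\otimes\zeta)(\clX\times\bbZ)=0$ via the product formula with the $0\cdot\infty=0$ convention (or equivalently via the slice integral), and then invoke $\pi\ll\xi\otimes\zeta$ to conclude $\pi^\bbX(\clX)=0$. The paper's proof is nearly identical, appealing directly to $\xi(\clX)\zeta(\bbZ)=0$ and the same convention.
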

\begin{proof}
  For any $\clX \in \scX$ such that $\xi(\clX) = 0$, we have $(\xi\otimes\zeta)(\clX \times \bbZ) = \xi(\clX) \zeta(\bbZ) = 0$,
  where the last equality is verified in \bfii\bfone in Supplement~\ref{supp:meas-prod} when $\zeta(\bbZ) = \infty$.
  Since $\pi \ll \xi\otimes\zeta$, this means that $\pi(\clX \times \bbZ) = \pi^\bbX(\clX) = 0$.
  So $\pi^\bbX \ll \xi$.
  Similarly, $\pi^\bbZ \ll \zeta$.
\end{proof}

\begin{lemma} \label{lem:slice-ae}
  For an assertion $t(x,z)$ on $\clW \in \scX\otimes\scZ$,
  $t(x,z)$ holds $\xi\otimes\zeta$-a.e. on $\clW$, if and only if $t(x,z)$ holds $\xi$-a.e. on $\clW_z$, for $\zeta$-a.e. $z$ on $\clW^\bbZ$.
\end{lemma}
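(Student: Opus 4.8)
The plan is to reduce both sides of the equivalence to a statement about the measure of the ``bad set''
$\clB := \{(x,z) \in \clW \mid t(x,z) \text{ fails}\}$,
which I take to lie in $\scX\otimes\scZ$ (this is implicit whenever one speaks of an assertion holding almost everywhere). By definition, the left-hand side ``$t$ holds $\xi\otimes\zeta$-a.e. on $\clW$'' is exactly $(\xi\otimes\zeta)(\clB) = 0$. For the right-hand side, I would observe that the slice $\clB_z = \{x \in \clW_z \mid t(x,z)\text{ fails}\}$ is precisely the bad set of the assertion $t(\cdot,z)$ within $\clW_z$, so ``$t(x,z)$ holds $\xi$-a.e. on $\clW_z$'' means exactly $\xi(\clB_z) = 0$. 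Moreover $z \in \clW^\bbZ$ iff $\clW_z \neq \emptyset$, so the quantifier ``$\zeta$-a.e. $z$ on $\clW^\bbZ$'' matches the statement, and for $z \notin \clW^\bbZ$ one has $\clB_z \subseteq \clW_z = \emptyset$, hence $\xi(\clB_z)=0$ trivially.

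Next I would apply the product-measure identity \eqref{eqn:prod-meas-restr} (a form of Tonelli's theorem, valid since $\xi$ and $\zeta$ are sigma-finite) to $\clB$, using also $\clB^\bbZ \subseteq \clW^\bbZ$ so that the integrand vanishes off $\clW^\bbZ$:
\[
  (\xi\otimes\zeta)(\clB) = \int_{\clW^\bbZ} \xi(\clB_z)\, \zeta(\ud z).
\]
The map $z \mapsto \xi(\clB_z)$ is nonnegative and $\scZ$-measurable (part of the same theory, \citep[Thm.~18.1, Thm.~18.2]{billingsley2012probability}), so by the equivalence ``integral zero $\iff$ integrand zero a.e.'' for nonnegative functions under a sigma-finite measure (\citep[Thm.~15.2]{billingsley2012probability}), the above integral equals $0$ if and only if $\xi(\clB_z) = 0$ for $\zeta$-a.e. $z$ on $\clW^\bbZ$. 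Chaining these three reformulations yields the claimed equivalence.

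The only genuine subtlety — the single point I would spell out carefully — is the measurability bookkeeping: that $\clB_z \in \scX$ for each $z$, that $z \mapsto \xi(\clB_z)$ is $\scZ$-measurable (both standard for sets in the product sigma-field, \citep[Thm.~18.1]{billingsley2012probability}), and the implicit convention that $t$ defines a $\scX\otimes\scZ$-measurable bad set. Once this is granted, the argument is a direct chain through \eqref{eqn:prod-meas-restr} with no computation; by symmetry in $x \leftrightarrow z$ one obtains the companion statement with the roles of $\bbX$ and $\bbZ$ exchanged, which is how the lemma will be used elsewhere.
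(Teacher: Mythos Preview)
Your proposal is correct and follows essentially the same approach as the paper: define the bad set $\clB$, apply \eqref{eqn:prod-meas-restr} to write $(\xi\otimes\zeta)(\clB)$ as $\int_{\clW^\bbZ} \xi(\clB_z)\,\zeta(\ud z)$, and then use \citep[Thm.~15.2]{billingsley2012probability} to conclude. Your version is in fact slightly more careful than the paper's, since you explicitly justify replacing the integration domain $\clB^\bbZ$ by $\clW^\bbZ$ via $\clB^\bbZ \subseteq \clW^\bbZ$ and $\xi(\clB_z)=0$ off $\clW^\bbZ$, a step the paper leaves implicit.
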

\begin{proof}
  By the definition of ``$t(x,z)$ holds $\xi\otimes\zeta$-a.e. on $\clW$'', we have:
  \begin{flalign}
    & (\xi\otimes\zeta)\{(x,z) \in \clW \mid \neg t(x,z)\} = 0
    & \hspace{-8in} \text{(Since $\xi$ and $\zeta$ are sigma-finite, from \eqref{eqn:prod-meas-restr},)} \\
    \Longleftrightarrow{} & \int_{\clW^\bbZ} \xi\{x \in \clW_z \mid \neg t(x,z)\} \zeta(\ud z) = 0
    % & \hspace{-8in} \text{()} \\
    \intertext{(Since $\xi(\cdot)$ is nonnegative, from \citet[Thm.~15.2]{billingsley2012probability},)}
    \Longleftrightarrow{} & \xi\{x \in \clW_z \mid \neg t(x,z)\} = 0, \text{for $\zeta$-a.e. $z$ on $\clW^\bbZ$},
  \end{flalign}
  which is ``$t(x,z)$ holds $\xi$-a.e. on $\clW_z$, for $\zeta$-a.e. $z$ on $\clW^\bbZ$''.
\end{proof}

\begin{lemma} \label{lem:assame-strip}
  Let $\clX, \clXt \in \scX$ such that $\clX \aseq{\xi} \clXt$.
  Then $\clX \times \bbZ \aseq{\xi\otimes\zeta} \clXt \times \bbZ$.
\end{lemma}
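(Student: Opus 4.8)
The plan is to reduce everything to a single slicing computation. First I would observe the set identity
\[
  (\clX \times \bbZ) \triangle (\clXt \times \bbZ) = (\clX \triangle \clXt) \times \bbZ,
\]
which holds because a pair $(x,z)$ lies in exactly one of $\clX\times\bbZ$ and $\clXt\times\bbZ$ iff $x$ lies in exactly one of $\clX$ and $\clXt$ (the $\bbZ$-component is irrelevant). Both sides are members of $\scX\otimes\scZ$ since $\clX \triangle \clXt \in \scX$ and $\bbX\times\bbZ$-measurable rectangles are in $\scX\otimes\scZ$.

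Next I would compute the $\xi\otimes\zeta$-measure of the right-hand side using the slice formula \eqref{eqn:prod-meas}: for every $z \in \bbZ$, the slice $\big((\clX \triangle \clXt) \times \bbZ\big)_z = \clX \triangle \clXt$, whose $\xi$-measure is $0$ by the hypothesis $\clX \aseq{\xi} \clXt$. Hence
\[
  (\xi\otimes\zeta)\big((\clX \triangle \clXt) \times \bbZ\big) = \int_\bbZ \xi(\clX \triangle \clXt)\, \zeta(\ud z) = \int_\bbZ 0\, \zeta(\ud z) = 0.
\]
Using the slice formula here (rather than the rectangle characterization $(\xi\otimes\zeta)(\clA\times\clB)=\xi(\clA)\zeta(\clB)$ directly) sidesteps any $0\cdot\infty$ subtlety when $\zeta(\bbZ)=\infty$; alternatively one could invoke the argument in \bfii\bfone of Supplement~\ref{supp:meas-prod} that such a product is $0$. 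Combining with the set identity gives $(\xi\otimes\zeta)\big((\clX \times \bbZ) \triangle (\clXt \times \bbZ)\big) = 0$, which is exactly $\clX \times \bbZ \aseq{\xi\otimes\zeta} \clXt \times \bbZ$ by Definition~\ref{def:assame-assubseteq}.

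There is no real obstacle here; the only thing to be careful about is the possible infiniteness of $\zeta(\bbZ)$, which is why I would phrase the measure computation through the integral of slices. The whole proof is a few lines.
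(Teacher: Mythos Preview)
Your proposal is correct and essentially matches the paper's proof: both establish the set identity $(\clX\times\bbZ)\triangle(\clXt\times\bbZ) = (\clX\triangle\clXt)\times\bbZ$ and then show its $\xi\otimes\zeta$-measure is zero. The only cosmetic difference is that the paper computes this measure via the rectangle formula $\xi(\clX\triangle\clXt)\,\zeta(\bbZ)$ and invokes the $0\cdot\infty=0$ argument of \bfii\bfone in Supplement~\ref{supp:meas-prod}, whereas you compute it via the slice integral (and note that same argument as an alternative).
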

\begin{proof}
  Since $(\clX \times \bbZ) \triangle (\clXt \times \bbZ)
  = \big( (\clX \times \bbZ) \setminus (\clXt \times \bbZ) \big) \cup \big( (\clXt \times \bbZ) \setminus (\clX \times \bbZ) \big)
  = \big( (\clX \setminus \clXt) \cup (\clXt \setminus \clX) \big) \times \bbZ$, we can verify that
  $(\xi\otimes\zeta) \big( (\clX \times \bbZ) \triangle (\clXt \times \bbZ) \big)
  = (\xi\otimes\zeta) \big( \big( (\clX \setminus \clXt) \cup (\clXt \setminus \clX) \big) \times \bbZ \big)
  = \xi\big( (\clX \setminus \clXt) \cup (\clXt \setminus \clX) \big) \zeta(\bbZ)
  = \xi(\clX \triangle \clXt) \zeta(\bbZ) = 0$,
  where the last equality is verified in \bfii\bfone in Supplement~\ref{supp:meas-prod} when $\zeta(\bbZ) = \infty$.
\end{proof}

\subsection{Lemmas for \texorpdfstring{$\xi\otimes\zeta$}{\textbackslash xi \textbackslash otimes \textbackslash zeta}-Complete Component} \label{supp:lemma-irrcomp}

% Denote $\clS^\sharp := \clS^\bbX \times \bbZ \cup \bbX \times \clS^\bbZ$ in the definition of a $\xi\otimes\zeta$-complete component. % \eqref{eqn:def-irrcomp}.
% Then \eqref{eqn:def-irrcomp} in the definition becomes:
Echoing Def.~\ref{def:irrcomp}, a set $\clS \in \scX\otimes\scZ$ is called a \emph{$\xi\otimes\zeta$-complete component} of $\clW \in \scX\otimes\scZ$, if
\begin{align} \label{eqn:def-irrcomp-var}
  \clS^\sharp \cap \clW \aseq{\xi\otimes\zeta} \clS,
  \text{where } \clS^\sharp := \clS^\bbX \times \bbZ \cup \bbX \times \clS^\bbZ.
\end{align}
This means that $\clS$ is complete under \emph{stretching} and intersecting with $\clW$.

\begin{lemma} \label{lem:irrcomp-assubseteq}
  Let $\clS$ be a $\xi\otimes\zeta$-complete component of $\clW$.
  Then $\clS \assubseteq{\xi\otimes\zeta} \clW$.
\end{lemma}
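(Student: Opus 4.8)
The plan is to unwind the definition of a $\xi\otimes\zeta$-complete component and extract exactly the ``$\clS$ is almost contained in $\clW$'' half of the a.s.-equality it provides. First I would record the elementary but crucial inclusion $\clS \subseteq \clS^\sharp$: for any $(x,z)\in\clS$ we have $x\in\clS^\bbX$ by the definition of the projection, hence $(x,z)\in\clS^\bbX\times\bbZ\subseteq\clS^\sharp$. (One could equally use $z\in\clS^\bbZ$ and the other half of the stretch.)

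Next, by hypothesis $\clS^\sharp\cap\clW \aseq{\xi\otimes\zeta}\clS$, so Lemma~\ref{lem:assame-meas} applied on the product space $(\bbX\times\bbZ,\scX\otimes\scZ,\xi\otimes\zeta)$ gives in particular $(\xi\otimes\zeta)\big(\clS\setminus(\clS^\sharp\cap\clW)\big)=0$. Now I would simplify the set on the left by elementary set algebra: $\clS\setminus(\clS^\sharp\cap\clW)=(\clS\setminus\clS^\sharp)\cup(\clS\setminus\clW)$, and since $\clS\setminus\clS^\sharp=\emptyset$ by the inclusion above, this equals $\clS\setminus\clW$. Hence $(\xi\otimes\zeta)(\clS\setminus\clW)=0$, which is precisely the statement $\clS\assubseteq{\xi\otimes\zeta}\clW$ by Definition~\ref{def:assame-assubseteq}(ii).

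There is essentially no obstacle here; the proof is a two-line consequence of the definition once one observes $\clS\subseteq\clS^\sharp$. The only point to be careful about is that Lemma~\ref{lem:assame-meas} (which turns an a.s.-equality into the vanishing of both one-sided differences) requires only the sigma-finiteness of $\xi$ and $\zeta$, which is standing in this subsection, so it applies directly.
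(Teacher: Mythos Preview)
Your proof is correct and follows essentially the same approach as the paper: both observe $\clS\subseteq\clS^\sharp$, rewrite $\clS\setminus\clW$ as $\clS\setminus(\clS^\sharp\cap\clW)$, and invoke Lemma~\ref{lem:assame-meas} on the defining a.s.-equality. (As a minor aside, Lemma~\ref{lem:assame-meas} does not actually require sigma-finiteness, so your closing caveat is unnecessary.)
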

\begin{proof}
  By construction, we have $\clS \subseteq \clS^\sharp$ so $\clS \setminus \clW = \clS \setminus (\clS \cap \clW) = \clS \setminus (\clS^\sharp \cap \clW)$.
  Hence, $(\xi\otimes\zeta)(\clS \setminus \clW) = (\xi\otimes\zeta)\big( \clS \setminus (\clS^\sharp \cap \clW) \big) = 0$ by definition \eqref{eqn:def-irrcomp-var} and Lemma~\ref{lem:assame-meas}.
\end{proof}

\begin{wrapfigure}{r}{.320\textwidth}
  \centering
  \vspace{-2pt}
  \includegraphics[width=.182\textwidth]{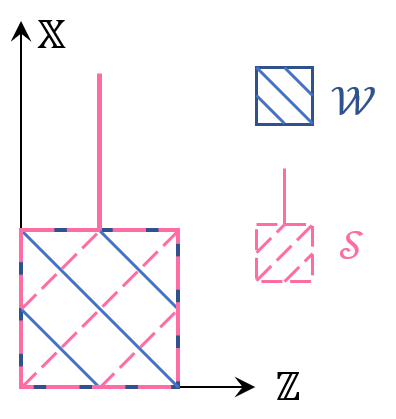}
  \vspace{-6pt}
  \caption{Example~\ref{exmp:irrcomp-projsubset} showing that a $\xi\otimes\zeta$-complete component of $\clW$ may not have its projection be an a.s. subset of that of $\clW$.
  }
  \vspace{-20pt}
  \label{fig:exmp-irrcomp-projsubset}
\end{wrapfigure}
\phantom{a}

\vspace{-14pt}
\begin{example} \label{exmp:irrcomp-projsubset}
  Note that when $\clS$ is a $\xi\otimes\zeta$-complete component of $\clW$, it may not hold that $\clS^\bbX \assubseteq{\xi} \clW^\bbX$ and $\clS^\bbZ \assubseteq{\zeta} \clW^\bbZ$.
  Fig.~\ref{fig:exmp-irrcomp-projsubset} shows an example,
  where $(\bbX,\scX,\xi)$ and $(\bbZ,\scZ,\zeta)$ are the one dimensional Euclidean spaces with line Borel sigma-field and line Lebesgue measure, $(\bbR,\scR,\lambda)$,
  and $\clW := [0,1]^2$ and $\clS := [0,1]^2 \cup ([1,2] \times \{\frac{1}{2}\})$.
  We have $\clS^\bbX = [0,2]$ so $\clS^\sharp = ([0,2] \times \bbR) \cup (\bbR \times [0,1])$ and $\clS^\sharp \cap \clW = \clW$.
  Since $\clS \triangle \clW = [1,2] \times \{\frac{1}{2}\}$ is a line segment that has measure zero under the plane Lebesgue measure $\xi\otimes\zeta = \lambda^2$, we have $\clS \aseq{\xi\otimes\zeta} \clW$ so $\clS$ is a $\xi\otimes\zeta$-complete component of $\clW$.
  But $\xi(\clS^\bbX \setminus \clW^\bbX) = \lambda([0,2] \setminus [0,1]) = \lambda(1,2] = 1$ is not zero, so $\clS^\bbX \assubseteq{\xi} \clW^\bbX$ does not hold.
\end{example}

\begin{lemma} \label{lem:irrcomp-assame}
  Let $\clS$ be a $\xi\otimes\zeta$-complete component of $\clW$,
  and $\clSt$ be a measurable set such that $\clSt \aseq{\xi\otimes\zeta} \clS$, $\clSt^\bbX \aseq{\xi} \clS^\bbX$ and $\clSt^\bbZ \aseq{\zeta} \clS^\bbZ$.
  Then this $\clSt$ is also a $\xi\otimes\zeta$-complete component of $\clW$.
\end{lemma}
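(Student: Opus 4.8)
The plan is to reduce the claim to the complete-component property of $\clS$ that we already have, by pushing the three a.s.-equalities in the hypotheses through the set operations that build the stretch. First I would show that the stretches agree a.s., i.e.\ $\clSt^\sharp \aseq{\xi\otimes\zeta} \clS^\sharp$. From $\clSt^\bbX \aseq{\xi} \clS^\bbX$, Lemma~\ref{lem:assame-strip} gives $\clSt^\bbX \times \bbZ \aseq{\xi\otimes\zeta} \clS^\bbX \times \bbZ$, and the symmetric version of that lemma (with the two factors' roles swapped — same proof, or relabel the product space) gives $\bbX \times \clSt^\bbZ \aseq{\xi\otimes\zeta} \bbX \times \clS^\bbZ$ from $\clSt^\bbZ \aseq{\zeta} \clS^\bbZ$. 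Applying Lemma~\ref{lem:assame-setop} (closure of ``$\aseq{\xi\otimes\zeta}$'' under union) to each summand in turn, together with transitivity (Lemma~\ref{lem:assame-equiv}), then yields $\clSt^\sharp = (\clSt^\bbX \times \bbZ) \cup (\bbX \times \clSt^\bbZ) \aseq{\xi\otimes\zeta} (\clS^\bbX \times \bbZ) \cup (\bbX \times \clS^\bbZ) = \clS^\sharp$.

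Next I would intersect with $\clW$: applying Lemma~\ref{lem:assame-setop} to $\clSt^\sharp \aseq{\xi\otimes\zeta} \clS^\sharp$ and the set $\clW$ gives $\clSt^\sharp \cap \clW \aseq{\xi\otimes\zeta} \clS^\sharp \cap \clW$. Since $\clS$ is a $\xi\otimes\zeta$-complete component of $\clW$, Def.~\ref{def:irrcomp} (i.e.\ \eqref{eqn:def-irrcomp-var}) gives $\clS^\sharp \cap \clW \aseq{\xi\otimes\zeta} \clS$, and by hypothesis $\clS \aseq{\xi\otimes\zeta} \clSt$. Chaining these via transitivity (Lemma~\ref{lem:assame-equiv}) gives $\clSt^\sharp \cap \clW \aseq{\xi\otimes\zeta} \clSt$, which is precisely the defining property of $\clSt$ being a $\xi\otimes\zeta$-complete component of $\clW$.

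There is no genuine obstacle here; the proof is pure bookkeeping with chains of a.s.-equalities, and the only points needing care are that Lemma~\ref{lem:assame-strip} is stated only for strips of the form $\clX \times \bbZ$ (so the $\bbZ$-side needs its symmetric counterpart), and that ``$\aseq{\xi\otimes\zeta}$'' is an equivalence relation so the substitutions are legitimate. It is worth noting that the two projection hypotheses $\clSt^\bbX \aseq{\xi} \clS^\bbX$ and $\clSt^\bbZ \aseq{\zeta} \clS^\bbZ$ are truly needed and do \emph{not} follow from $\clSt \aseq{\xi\otimes\zeta} \clS$ alone: Example~\ref{exmp:irrcomp-projsubset} exhibits a.s.-equal sets whose projections differ by a set of positive measure, which would break the step $\clSt^\sharp \aseq{\xi\otimes\zeta} \clS^\sharp$.
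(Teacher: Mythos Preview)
Your proposal is correct and follows essentially the same route as the paper's proof: use Lemma~\ref{lem:assame-strip} (and its symmetric version) to lift the projection hypotheses to a.s.-equality of the strips, combine via Lemma~\ref{lem:assame-setop} to get $\clSt^\sharp \aseq{\xi\otimes\zeta} \clS^\sharp$, intersect with $\clW$, and chain with the complete-component property of $\clS$ and the hypothesis $\clSt \aseq{\xi\otimes\zeta} \clS$ via transitivity (Lemma~\ref{lem:assame-equiv}). Your closing remark on the necessity of the projection hypotheses is apt; the paper makes the same point via Example~\ref{exmp:irrcomp-assame}, which is slightly more direct than Example~\ref{exmp:irrcomp-projsubset} since it exhibits an $\clSt$ that is a.s.-equal to a complete component yet fails to be one.
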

\begin{proof}
  By Lemma~\ref{lem:assame-strip}, we know that $\clSt^\bbX \times \bbZ \aseq{\xi\otimes\zeta} \clS^\bbX \times \bbZ$, $\bbX \times \clSt^\bbZ \aseq{\xi\otimes\zeta} \bbX \times \clS^\bbZ$.
  Repeatedly applying Lemma~\ref{lem:assame-setop}, we have $\clSt^\sharp := \clSt^\bbX \times \bbZ \cup \bbX \times \clSt^\bbZ
  \aseq{\xi\otimes\zeta} \clS^\bbX \times \bbZ \cup \bbX \times \clSt^\bbZ
  \aseq{\xi\otimes\zeta} \clS^\bbX \times \bbZ \cup \bbX \times \clS^\bbZ =: \clS^\sharp$,
  and $\clSt^\sharp \cap \clW \aseq{\xi\otimes\zeta} \clS^\sharp \cap \clW$,
  which $\aseq{\xi\otimes\zeta} \clS \aseq{\xi\otimes\zeta} \clSt$.
  From the transitivity (Lemma~\ref{lem:assame-equiv}), we have $\clSt^\sharp \cap \clW \aseq{\xi\otimes\zeta} \clSt$.
\end{proof}

\begin{wrapfigure}{r}{.320\textwidth}
  \centering
  \vspace{-8pt}
  \includegraphics[width=.280\textwidth]{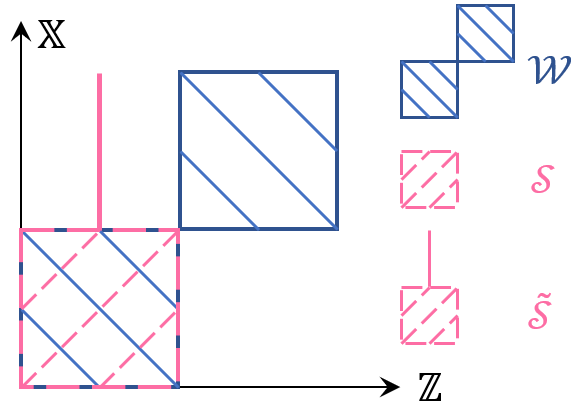}
  \vspace{-4pt}
  \caption{Example~\ref{exmp:irrcomp-assame} showing that in Lem.~\ref{lem:irrcomp-assame},
    only being $\xi\otimes\zeta$-a.s. the same as a $\xi\otimes\zeta$-complete component $\clSt$ of $\clW$
    is not sufficient for $\clSt$ to be also a $\xi\otimes\zeta$-complete component of $\clW$.
  }
  \vspace{-24pt}
  \label{fig:exmp-irrcomp-assame}
\end{wrapfigure}
\phantom{a}

\vspace{-14pt}
\begin{example} \label{exmp:irrcomp-assame}
  Note that only the $\clSt \aseq{\xi\otimes\zeta} \clS$ condition is not sufficient.
  Fig.~\ref{fig:exmp-irrcomp-assame} shows such an example,
  where $(\bbX,\scX,\xi)$ and $(\bbZ,\scZ,\zeta)$ are the one dimensional Euclidean spaces with line Borel sigma-field and line Lebesgue measure, $(\bbR,\scR,\lambda)$,
  and $\clW := [0,1]^2 \cup [1,2]^2$, $\clS := [0,1]^2$, and $\clSt := [0,1]^2 \cup ([1,2] \times \{\frac{1}{2}\})$.
  We have $\clS^\sharp = ([0,1] \times \bbR) \cup (\bbR \times [0,1])$ so $\clS^\sharp \cap \clW = \clS$, justifying that $\clS$ is a $\xi\otimes\zeta$-complete component of $\clW$.
  On the other hand, since $\clS \triangle \clSt = [1,2] \times \{\frac{1}{2}\}$ is a line segment that has measure zero under the plane Lebesgue measure $\xi\otimes\zeta = \lambda^2$, we have $\clSt \aseq{\xi\otimes\zeta} \clS$.
  But $\clSt^\bbX = [0,2]$ so $\clSt^\sharp = ([0,2] \times \bbR) \cup (\bbR \times [0,1])$, which leads to $\clSt^\sharp \cap \clW = \clW$.
  Since $\clSt \triangle \clW = ([1,2] \times \{\frac{1}{2}\}) \cup ([1,2] \times [1,2])$ has a nonzero measure under $\lambda^2$ (it equals to 1),
  we know that $\clSt$ is not a $\xi\otimes\zeta$-complete component of $\clW$.
\end{example}

\begin{lemma} \label{lem:irrcomp-inteq}
  Let $\clS$ be a $\xi\otimes\zeta$-complete component of $\clW$,
  and $f$ be an either nonnegative or $\xi\otimes\zeta$-integrable function on $\bbX\times\bbZ$.
  Then for any measurable sets $\clZ \subseteq \clS^\bbZ$ and $\clX \subseteq \clS^\bbX$, we have:
  \begin{align}
    \int_\clZ \int_{\clW_z} f(x,z) \xi(\ud x) \zeta(\ud z) = \int_\clZ \int_{\clS_z} f(x,z) \xi(\ud x) \zeta(\ud z), \\ % \;\;
    \int_\clX \int_{\clW_x} f(x,z) \zeta(\ud z) \xi(\ud x) = \int_\clX \int_{\clS_x} f(x,z) \zeta(\ud z) \xi(\ud x).
  \end{align}
  Particularly, $\int_{\clS^\bbZ} \int_{\clW_z} f(x,z) \xi(\ud x) \zeta(\ud z)
  = \int_{\clS^\bbX} \int_{\clW_x} f(x,z) \zeta(\ud z) \xi(\ud x)
  = \int_\clS f(x,z) (\xi\otimes\zeta)(\ud x \ud z)$.
\end{lemma}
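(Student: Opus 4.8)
The plan is to deduce both iterated‑integral identities from a single structural fact: on the projections $\clS^\bbZ$ and $\clS^\bbX$, the slices of $\clW$ agree $\xi$‑ (resp. $\zeta$‑) almost surely with those of $\clS$. First I would record the elementary set‑theoretic observation that for every $z\in\clS^\bbZ$ one has $(\clS^\sharp)_z=\bbX$: the membership $z\in\clS^\bbZ$ yields some $x$ with $(x,z)\in\clS$, hence $(x',z)\in\bbX\times\clS^\bbZ\subseteq\clS^\sharp$ for all $x'\in\bbX$. Consequently $(\clS^\sharp\cap\clW)_z=(\clS^\sharp)_z\cap\clW_z=\clW_z$ for every $z\in\clS^\bbZ$, and symmetrically $(\clS^\sharp\cap\clW)_x=\clW_x$ for every $x\in\clS^\bbX$. (It is important that this slice computation is used only on $\clS^\bbZ$, resp. $\clS^\bbX$, since it can fail outside.)

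Next I would invoke the defining property of a $\xi\otimes\zeta$‑complete component (Def.~\ref{def:irrcomp}), namely $(\xi\otimes\zeta)\big((\clS^\sharp\cap\clW)\triangle\clS\big)=0$, and feed it into Lemma~\ref{lem:slice-ae} with $\bbX\times\bbZ$ as the ambient set. This gives that the slice $\big((\clS^\sharp\cap\clW)\triangle\clS\big)_z=(\clS^\sharp\cap\clW)_z\triangle\clS_z$ is $\xi$‑null for $\zeta$‑a.e.\ $z$; combined with the first step, $\clW_z\aseq{\xi}\clS_z$ for $\zeta$‑a.e.\ $z$ on $\clS^\bbZ$, hence also for $\zeta$‑a.e.\ $z$ on any measurable $\clZ\subseteq\clS^\bbZ$. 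For each such $z$, since $f(\cdot,z)$ is $\scX$‑measurable and $\bbI_{\clW_z}f(\cdot,z)=\bbI_{\clS_z}f(\cdot,z)$ $\xi$‑a.e., the facts on integrals of a.e.-equal functions in Appx.~\ref{supp:meas-int} give $\int_{\clW_z}f(x,z)\,\xi(\ud x)=\int_{\clS_z}f(x,z)\,\xi(\ud x)$; this holds whether $f\ge0$ (both sides well defined, possibly $+\infty$) or $f$ is $\xi\otimes\zeta$‑integrable (the inner integrals are finite for $\zeta$‑a.e.\ $z$ by Tonelli/Fubini). Two functions of $z$ that agree $\zeta$‑a.e.\ on $\clZ$ have equal $\zeta$‑integrals over $\clZ$, which is the first displayed identity; the second follows verbatim with the roles of $x$ and $z$ exchanged.

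For the ``particularly'' clause I would specialize to $\clZ=\clS^\bbZ$ and $\clX=\clS^\bbX$ in the identities just established, and then apply the general Fubini/Tonelli formula \eqref{eqn:fubini-restr} to the set $\clS$, giving $\int_{\clS^\bbZ}\int_{\clS_z}f\,\xi(\ud x)\zeta(\ud z)=\int_\clS f\,(\xi\otimes\zeta)(\ud x\ud z)=\int_{\clS^\bbX}\int_{\clS_x}f\,\zeta(\ud z)\xi(\ud x)$, whence the triple equality. The main obstacle is not a single hard computation but keeping the measure‑theoretic bookkeeping straight: tracking the ``for $\zeta$‑a.e.\ $z$'' (resp.\ ``for $\xi$‑a.e.\ $x$'') quantifiers through slices and projections, restricting the slice identity $(\clS^\sharp\cap\clW)_z=\clW_z$ to $\clS^\bbZ$, and justifying the integral‑over‑a.s.-equal‑domains step uniformly in both the nonnegative and the integrable regime.
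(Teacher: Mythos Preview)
Your proposal is correct and follows essentially the same route as the paper's own proof: use $\clS^\sharp_z=\bbX$ on $\clS^\bbZ$ together with Lemma~\ref{lem:slice-ae} applied to the defining relation $(\clS^\sharp\cap\clW)\aseq{\xi\otimes\zeta}\clS$ to obtain $\clW_z\aseq{\xi}\clS_z$ for $\zeta$-a.e.\ $z\in\clS^\bbZ$, deduce equality of the inner integrals a.e., integrate over $\clZ$, and then specialize via \eqref{eqn:fubini-restr} for the final clause. Your extra care in distinguishing the nonnegative and integrable regimes is a nice touch but does not change the structure of the argument.
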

\begin{proof}
  Since $\clS$ is a $\xi\otimes\zeta$-complete component of $\clW$, \eqref{eqn:def-irrcomp-var} holds.
  By Lemma~\ref{lem:slice-ae}, we know that for $\zeta$-a.e. $z$ on $\bbZ$, $\xi \big( (\clS^\sharp \cap \clW) \triangle \clS \big)_z
  = \xi \big( (\clS^\sharp_z \cap \clW_z) \triangle \clS_z \big) = 0$.
  Noting that $\clS^\sharp_z = \bbX$ for any $z \in \clS^\bbZ$,
  this subsequently means that $\xi(\clW_z \triangle \clS_z) = 0$ for $\zeta$-a.e. $z$ on $\clS^\bbZ$.
  By the additivity of integrals over a countable partition~\citep[Thm.~16.9]{billingsley2012probability} and that the integral over a measure-zero set is zero~\citep[p.226]{billingsley2012probability},
  we have $\int_{\clW_z} f(x,z) \xi(\ud x) = \int_{\clS_z} f(x,z) \xi(\ud x)$ for $\zeta$-a.e. $z$ on $\clS^\bbZ$.
  Since a.e.-equal functions have the same integral~\citep[Thm.~15.2(v)]{billingsley2012probability}, we have
  for any measurable $\clZ \subseteq \clS^\bbZ$, $\int_\clZ \int_{\clW_z} f(x,z) \xi(\ud x) \zeta(\ud z) = \int_\clZ \int_{\clS_z} f(x,z) \xi(\ud x) \zeta(\ud z)$.
  Similarly, for any measurable $\clX \subseteq \clS^\bbX$, $\int_\clX \int_{\clW_x} f(x,z) \zeta(\ud z) \xi(\ud x) = \int_\clX \int_{\clS_x} f(x,z) \zeta(\ud z) \xi(\ud x)$.

  For $\clZ = \clS^\bbZ$, we have $\int_{\clS^\bbZ} \int_{\clW_z} f(x,z) \xi(\ud x) \zeta(\ud z) = \int_{\clS^\bbZ} \int_{\clS_z} f(x,z) \xi(\ud x) \zeta(\ud z)$,
  which is $\int_\clS f(x,z) (\xi\otimes\zeta)(\ud x \ud z)$ by the generalized form \eqref{eqn:fubini-restr} of Fubini's theorem.
  Similarly, $\int_{\clS^\bbX} \int_{\clW_x} f(x,z) \zeta(\ud z) \xi(\ud x) = \int_\clS f(x,z) (\xi\otimes\zeta)(\ud x \ud z)$.
\end{proof}

\section{Proofs} \label{supp:proofs}

Recall that $(\bbX\times\bbZ, \scX\otimes\scZ, \xi\otimes\zeta)$ is the product measure space by the two individual ones $(\bbX, \scX, \xi)$ and $(\bbZ, \scZ, \zeta)$,
where $\xi$ and $\zeta$ are sigma-finite.

\subsection{The Joint-Conditional Absolute Continuity Lemma} \label{supp:proofs-joint-ac}

Although this lemma is not formally presented in the main text, we highlight it here since it answers an important question and the answer is not straightforward.

The lemma reveals the relation between the absolute continuity of a joint $\pi$ and that of its conditionals $\pi(\cdot|z)$, $\pi(\cdot|x)$.
Roughly, the former guarantees the latter on the supports of the marginals, and the reverse also holds, allowing one to safely use density function formulae for deduction.
But given two conditionals, one does not have the knowledge on the marginals \emph{a priori}.
For a more useful sufficient condition, one may consider the absolute continuity of the conditionals for $\zeta$-a.e. $z$ and $\xi$-a.e. $x$.
Unfortunately this is not sufficient, and an example (\ref{exmp:joint-ac}) is given after the proof.
The lemma shows it is sufficient if the absolute continuity of one of the conditionals, say $\pi(\cdot|z)$, holds for \emph{any} $z \in \bbZ$.
The condition in the compatibility criterion Thm.~\ref{thm:compt-ac} is also inspired from this lemma.

\begin{lemma}[joint-conditional absolute continuity] \label{lem:joint-ac}
  \bfi For a joint distribution $\pi$ on $(\bbX\times\bbZ, \scX\otimes\scZ)$, it is absolutely continuous $\pi \ll \xi\otimes\zeta$ if and only if $\pi(\cdot|z) \ll \xi$ for $\pi^\bbZ$-a.e. $z$ and $\pi(\cdot|x) \ll \zeta$ for $\pi^\bbX$-a.e. $x$.
  \bfii As a sufficient condition, $\pi \ll \xi\otimes\zeta$ if $\pi(\cdot|z) \ll \xi$ for $\zeta$-a.e. $z$ and $\pi(\cdot|x) \ll \zeta$ for \emph{any} $x \in \bbX$ (or for \emph{any} $z \in \bbZ$ and $\xi$-a.e. $x$).
  % \footnote{Appx.~\ref{supp:proofs-joint-ac} shows an example where $\pi(\cdot|z) \ll \xi$ for $\zeta$-a.e. $z$ and $\pi(\cdot|x) \ll \zeta$ for $\xi$-a.e. $x$ but $\pi \notll \xi\otimes\zeta$.}.
\end{lemma}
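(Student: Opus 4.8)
The plan is to establish part (i) in both directions and then read off part (ii) from the converse half of (i) after rewriting its hypothesis. For the \emph{forward half of (i)} ($\pi\ll\xi\otimes\zeta$ implies both conditionals are a.e.\ absolutely continuous): by Lemma~\ref{lem:marg-ac}, $\pi^\bbX\ll\xi$ and $\pi^\bbZ\ll\zeta$. Put $p:=\frac{\ud\pi}{\ud(\xi\otimes\zeta)}$ and $p(z):=\int_\bbX p(x,z)\,\xi(\ud x)$; Tonelli's theorem makes $z\mapsto p(z)$ measurable, and the R-N characterization identifies it as a version of $\frac{\ud\pi^\bbZ}{\ud\zeta}$, so $\pi^\bbZ\{z:p(z)=0\}=0$. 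I would then verify directly that $z\mapsto\int_\clX\frac{p(x,z)}{p(z)}\,\bbI[p(z)>0]\,\xi(\ud x)$ is $\scZ$-measurable (Tonelli again) and satisfies the defining identity~\eqref{eqn:cond-int-prod-indep} of $\pi(\clX|z)$: on $\{p(z)=0\}$ one has $p(\cdot,z)=0$ $\xi$-a.e.\ so that set contributes nothing, and on the complement it is Fubini~\eqref{eqn:fubini} together with $\pi^\bbZ(\ud z)=p(z)\,\zeta(\ud z)$. Hence this formula is a version of $\pi(\cdot|z)$ that is a probability measure with $\xi$-density $p(\cdot,z)/p(z)$ whenever $p(z)>0$, i.e.\ absolutely continuous w.r.t.\ $\xi$ for $\pi^\bbZ$-a.e.\ $z$; by the $\pi^\bbZ$-uniqueness of the conditional, the conditional itself has this property. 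The mirror argument (exchanging $x\leftrightarrow z$) handles $\pi(\cdot|x)$.

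For the \emph{converse half of (i)} (a.e.\ absolute continuity of both conditionals implies $\pi\ll\xi\otimes\zeta$): first, $\pi(\cdot|x)\ll\zeta$ for $\pi^\bbX$-a.e.\ $x$ forces $\pi^\bbZ\ll\zeta$, because if $\zeta(\clZ)=0$ then $\pi(\clZ|x)=0$ for $\pi^\bbX$-a.e.\ $x$, so $\pi^\bbZ(\clZ)=\int_\bbX\pi(\clZ|x)\,\pi^\bbX(\ud x)=0$ by~\eqref{eqn:cond-int-prod-indep} taken with $\clX=\bbX$. Now fix $\clW\in\scX\otimes\scZ$ with $(\xi\otimes\zeta)(\clW)=0$; by~\eqref{eqn:prod-meas}, $\int_\bbZ\xi(\clW_z)\,\zeta(\ud z)=0$, hence $\xi(\clW_z)=0$ for $\zeta$-a.e.\ $z$, hence (since $\pi^\bbZ\ll\zeta$) for $\pi^\bbZ$-a.e.\ $z$; combined with $\pi(\cdot|z)\ll\xi$ holding $\pi^\bbZ$-a.e., this gives $\pi(\clW_z|z)=0$ for $\pi^\bbZ$-a.e.\ $z$, so $\pi(\clW)=\int_\bbZ\pi(\clW_z|z)\,\pi^\bbZ(\ud z)=0$ by~\eqref{eqn:cond-int-prod-dep}. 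Thus $\pi\ll\xi\otimes\zeta$. Note that this direction genuinely uses both hypotheses: $\pi(\cdot|z)\ll\xi$ (a.e.) enters directly, while $\pi(\cdot|x)\ll\zeta$ (a.e.) is needed only to secure $\pi^\bbZ\ll\zeta$, which in turn upgrades ``$\zeta$-a.e.\ $z$'' to ``$\pi^\bbZ$-a.e.\ $z$''.

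For \emph{part (ii)}, the hypothesis that $\pi(\cdot|x)\ll\zeta$ for \emph{every} $x$ already yields $\pi^\bbZ\ll\zeta$ by exactly the displayed computation (the integrand $\pi(\clZ|x)$ there is literally $0$ for every $x$ once $\zeta(\clZ)=0$), so ``$\pi(\cdot|z)\ll\xi$ for $\zeta$-a.e.\ $z$'' becomes ``for $\pi^\bbZ$-a.e.\ $z$'' and the argument of the previous paragraph applies unchanged, giving $\pi\ll\xi\otimes\zeta$; the bracketed variant is the same statement with the roles of $\bbX$ and $\bbZ$ swapped. (This also pinpoints why the analogous condition with $\pi(\cdot|x)\ll\zeta$ only for $\xi$-a.e.\ $x$ fails — one can no longer conclude $\pi^\bbZ\ll\zeta$ — which is what the post-proof counterexample exhibits.)

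The step I expect to be hardest is the forward half of (i): because the paper fixes no topological structure and defines conditionals only as R-N derivatives, I must make sure the explicit density formula really is a bona fide version of the conditional distribution — jointly measurable in $z$, and satisfying~\eqref{eqn:cond-int-prod-indep} including on the exceptional set $\{p(z)=0\}$ — and that ``$\pi^\bbZ$-uniqueness'' then legitimately carries absolute continuity over to the abstract conditional. A secondary point requiring care, though already handled in Appx.~\ref{supp:meas-prod} and Lemma~\ref{lem:marg-ac}, is the $0\cdot\infty=0$ convention for the merely $\sigma$-finite product measure (e.g.\ $(\xi\otimes\zeta)(\clX\times\bbZ)=0$ when $\xi(\clX)=0$ even if $\zeta(\bbZ)=\infty$), which underlies the marginal-continuity step used repeatedly above.
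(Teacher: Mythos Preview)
Your proposal is correct. For the converse half of \bfi and for \bfii it matches the paper's proof essentially line for line: first derive $\pi^\bbZ\ll\zeta$ from the hypothesis on $\pi(\cdot|x)$, then for a $(\xi\otimes\zeta)$-null $\clW$ use \eqref{eqn:prod-meas} to get $\xi(\clW_z)=0$ $\zeta$-a.e., upgrade to $\pi^\bbZ$-a.e., apply $\pi(\cdot|z)\ll\xi$, and integrate via \eqref{eqn:cond-int-prod-dep}. Your reading of \bfii as a special instance of that argument, and of the bracketed variant as the symmetric case, is exactly what the paper does.

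Where you diverge is the forward half of \bfi. The paper argues directly: fix $\clX$ with $\xi(\clX)=0$, note $\pi(\clX\times\bbZ)=0$ by absolute continuity, write this as $\int_\bbZ\pi(\clX|z)\,\pi^\bbZ(\ud z)=0$, and conclude $\pi(\clX|z)=0$ for $\pi^\bbZ$-a.e.\ $z$. It then asserts this gives $\pi(\cdot|z)\ll\xi$ for $\pi^\bbZ$-a.e.\ $z$. Your constructive route --- take the joint density $p$, form $p(z)=\int p(x,z)\,\xi(\ud x)$, and exhibit $p(\cdot,z)/p(z)$ as a concrete version of the conditional with a $\xi$-density --- is more careful: it produces a single $\pi^\bbZ$-null exceptional set on which absolute continuity might fail, whereas the paper's quicker argument literally gives a null set $N_\clX$ that could depend on $\clX$. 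Since the strong form (a single null set) is what the converse direction actually consumes (there $\clW_z$ varies with $z$), your version closes a gap the paper glosses over. The cost is the extra measurability bookkeeping you already flagged; the benefit is that the equivalence in \bfi becomes genuinely symmetric in strength.
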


\textbf{For conclusion \bfi:}
\begin{proof}
  \textbf{``Only if'':}
  % By Lemma~\ref{lem:marg-ac}, we know that $\pi^\bbX \ll \xi$ and $\pi^\bbZ \ll \zeta$.
  Consider any $\clX \in \scX$ such that $\xi(\clX) = 0$.
  From the definition of conditional distribution \eqref{eqn:cond-int-prod-indep}, we have
  $\pi^\bbX(\clX) = \pi(\clX \times \bbZ) = \int_\bbZ \pi(\clX|z) \pi^\bbZ(\ud z) = 0$,
  so $\pi(\clX|z) = 0$ for $\pi^\bbZ$-a.e. $z$ since $\pi(\clX|z)$ is nonnegative~\citep[Thm.~15.2(ii)]{billingsley2012probability}.
  This means that $\pi(\cdot|z) \ll \xi$ for $\pi^\bbZ$-a.e. $z$.
  The same arguments apply symmetrically to $\pi(\cdot|x)$.

  Note that since $\pi(\cdot|z)$ is defined as the R-N derivative, it is allowed to take any nonnegative value on a $\pi^\bbZ$-measure-zero set.
  So we cannot guarantee its behavior for \emph{any} $z \in \bbZ$.

  \textbf{``If'':}
  Consider any $\clZ \in \scZ$ such that $\zeta(\clZ) = 0$.
  Since $\pi(\cdot|x) \ll \zeta$ for $\pi^\bbX$-a.e. $x$, we have $\pi(\clZ|x) = 0$ for $\pi^\bbX$-a.e. $x$.
  So from \eqref{eqn:cond-int-prod-indep} we have $\pi^\bbZ(\clZ) = \pi(\bbX \times \clZ) = \int_\bbX \pi(\clZ|x) \pi^\bbX(\ud x) = 0$~\citep[Thm.~15.2(i)]{billingsley2012probability}.
  This indicates that $\pi^\bbZ \ll \zeta$.
  % Similarly, $\pi^\bbX \ll \xi$.

  Now consider any $\clW \in \scX\otimes\scZ$ such that $(\xi\otimes\zeta)(\clW) = 0$.
  By the definition of product measure \eqref{eqn:prod-meas}~\citep[Thm.~18.2]{billingsley2012probability}, we have $(\xi\otimes\zeta)(\clW) = \int_\bbZ \xi(\clW_z) \zeta(\ud z) = 0$,
  so $\xi(\clW_z) = 0$ for $\zeta$-a.e. $z$ since $\xi(\clW_z)$ is nonnegative~\citep[Thm.~15.2(ii)]{billingsley2012probability}.
  Due to that $\pi^\bbZ \ll \zeta$, this means that $\xi(\clW_z) = 0$ for $\pi^\bbZ$-a.e. $z$.
  Since $\pi(\cdot|z) \ll \xi$ for $\pi^\bbZ$-a.e. $z$, this in turn means that $\pi(\clW_z|z) = 0$ for $\pi^\bbZ$-a.e. $z$.
  Subsequently, we have $\int_\bbZ \pi(\clW_z|z) \pi^\bbZ(\ud z) = 0$~\citep[Thm.~15.2(i)]{billingsley2012probability}, which is $\pi(\clW) = 0$ by \eqref{eqn:cond-int-prod-dep}.
  So we get $\pi \ll \xi\otimes\zeta$.
\end{proof}

\begin{wrapfigure}{r}{.310\textwidth}
  \centering
  \vspace{-16pt}
  \includegraphics[width=.280\textwidth]{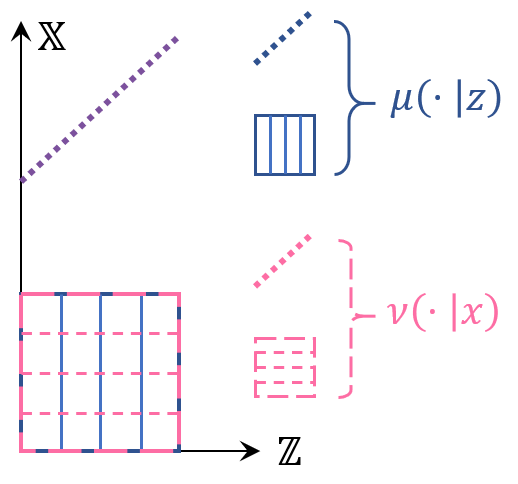}
  \vspace{-8pt}
  \caption{Illustration of the conditionals in \eqref{eqn:ac-exmp-conds} in Example~\ref{exmp:joint-ac}.
    Both conditionals are absolutely continuous for $\zeta$-a.e. $z$ or $\xi$-a.e. $x$, but they allow a compatible joint that is not absolutely continuous w.r.t $\xi\otimes\zeta$.
  }
  \vspace{-24pt}
  \label{fig:exmp-joint-ac}
\end{wrapfigure}

\textbf{For conclusion \bfii:}
\begin{proof}
  Consider any $\clZ \in \scZ$ such that $\zeta(\clZ) = 0$.
  Since $\pi(\cdot|x) \ll \zeta$ for any $x \in \bbX$, we know that $\pi(\clZ|x) = 0$ for any $x \in \bbX$.
  So from \eqref{eqn:cond-int-prod-indep} we have $\pi^\bbZ(\clZ) = \pi(\bbX \times \clZ) = \int_\bbX \pi(\clZ|x) \pi^\bbX(\ud x) = 0$.
  This indicates that $\pi^\bbZ \ll \zeta$.

  Now consider any $\clW \in \scX\otimes\scZ$ such that $(\xi\otimes\zeta)(\clW) = 0$.
  By the definition of product measure \eqref{eqn:prod-meas}~\citep[Thm.~18.2]{billingsley2012probability}, we have $(\xi\otimes\zeta)(\clW) = \int_\bbZ \xi(\clW_z) \zeta(\ud z) = 0$,
  so $\xi(\clW_z) = 0$ for $\zeta$-a.e. $z$ since $\xi(\clW_z)$ is nonnegative~\citep[Thm.~15.2(ii)]{billingsley2012probability}.
  Due to that $\pi(\cdot|z) \ll \xi$ for $\zeta$-a.e. $z$, this means that $\pi(\clW_z|z) = 0$ for $\zeta$-a.e. $z$.
  Since $\pi^\bbZ \ll \zeta$, this in turn means that $\pi(\clW_z|z) = 0$ for $\pi^\bbZ$-a.e. $z$.
  Subsequently, we have $\int_\bbZ \pi(\clW_z|z) \pi^\bbZ(\ud z) = 0$~\citep[Thm.~15.2(i)]{billingsley2012probability}, which is $\pi(\clW) = 0$ by \eqref{eqn:cond-int-prod-dep}.
  So we get $\pi \ll \xi\otimes\zeta$.
  The same arguments apply symmetrically when $\pi(\cdot|z) \ll \xi$ for \emph{any} $z \in \bbZ$ and $\pi(\cdot|x) \ll \zeta$ for $\xi$-a.e. $x$.
\end{proof}

\begin{example} \label{exmp:joint-ac}
  To see why it is not sufficient if the two conditionals are absolutely continuous only for $\zeta$-a.e. $z$ and $\xi$-a.e. $x$,
  % but instead need one of the conditionals to be absolutely continuous for \emph{any} $z$ or $x$,
  we show an example below.
  % To see why we need one of the conditionals to be absolutely continuous for \emph{any} $z$ or $x$, we show an example below,
  % where the absolute continuity of both conditionals only at $\zeta$-a.e. $z$ and $\xi$-a.e. $x$ allows a compatible joint $\pi$ to be not absolutely continuous w.r.t $\xi\otimes\zeta$.

  Consider the one-dimensional Euclidean space $\bbX = \bbZ = \bbR$ with line Borel sigma-field $\scX = \scZ = \scR$ and line Lebesgue measure $\xi = \zeta = \lambda$. Let
  \begin{align}
    \mu(\cdot|z) := \begin{cases} \delta_{z+2}, & z \in \bbQ[0,1], \\ \Unif[0,1], & z \in \bar{\bbQ}[0,1], \\ 0, & \text{otherwise}, \end{cases} \;
    \nu(\cdot|x) := \begin{cases} \Unif[0,1], & x \in [0,1], \\ \delta_{x-2}, & x \in \bbQ[2,3], \\ 0, & \text{otherwise}, \end{cases}
    \label{eqn:ac-exmp-conds}
  \end{align}
  where $\bbQ[0,1] := [0,1] \cap \bbQ$ and $\bar{\bbQ} := [0,1] \backslash \bbQ$ are the rational and irrational numbers on $[0,1]$.
  The conditionals are illustrated in Fig.~\ref{fig:exmp-joint-ac}.
  Since $\lambda(\bbQ) = 0$, the two conditionals are absolutely continuous for $\zeta$-a.e. $z$ and $\xi$-a.e. $x$.
  Consider the joint distribution on $\bbX \times \bbZ = \bbR^2$:
  \begin{align}
    \pi := \frac{1}{2} \Unif([0,1] \times [0,1]) + \frac{1}{2} \sum_{z \in \bbQ[0,1]} \varrho(z) \delta_{(z+2, z)},
    \label{eqn:ac-exmp-joint}
  \end{align}
  where $\varrho$ is a distribution on the rationals $\bbQ[0,1]$ in $[0,1]$ with the sigma-field of all the subsets of $\bbQ[0,1]$.
  Such a distribution exists, for example, $\varrho(z) = 1/2^{n(z)}$ where $n: \bbQ[0,1] \to \bbN^*$ bijective is a numbering function of the countable set $\bbQ[0,1]$.
  In this way, each rational number $z \in \bbQ[0,1]$ has a positive probability, meanwhile we have $\varrho(\bbQ[0,1]) = \sum_{n=1}^\infty 1/2^n = 1$.
  Since $\pi(\{(z+2, z) \mid z \in \bbQ[0,1]\}) = \frac{1}{2}$ but $\lambda^2(\{(z+2, z) \mid z \in \bbQ[0,1]\}) = 0$ under the square Lebesgue measure $\lambda^2$, $\pi$ is not absolutely continuous.

  To verify compatibility, note that $\mu$ and $\nu$ here satisfy the corresponding measurability and integrability.
  To verify \eqref{eqn:cond-int-prod-indep} reduced from \eqref{eqn:cond-int-genrl} for defining a conditional, we first derive the marginals:
  \begin{align}
    \pi^\bbX = \frac{1}{2} \Unif[0,1] + \frac{1}{2} \sum_{z \in \bbQ[0,1]} \varrho(z) \delta_{z+2}, \;\;
    \pi^\bbZ = \frac{1}{2} \Unif[0,1] + \frac{1}{2} \sum_{z \in \bbQ[0,1]} \varrho(z) \delta_z.
  \end{align}
  For any $\clX \in \scX$ and $\clZ \in \scZ$, we have:
  \begin{align}
    \pi(\clX\times\clZ) ={} & \frac{1}{2} \Unif[0,1](\clX) \Unif[0,1](\clZ) + \frac{1}{2} \sum_{z \in \bbQ[0,1]} \varrho(z) \bbI[(z+2, z) \in \clX\times\clZ] \\
    ={} & \frac{1}{2} \lambda(\clX[0,1]) \lambda(\clZ[0,1]) + \frac{1}{2} \sum_{z \in \bbQ[0,1]} \varrho(z) \bbI[z \in (\clX-2) \cap \clZ],
  \end{align}
  where $\clX[0,1] := [0,1] \cap \clX$ and $\clZ[0,1] := [0,1] \cap \clZ$.
  To verify the conditional distribution $\mu(\clX|z)$, we have:
  \begin{align}
    \int_\clZ \mu(\clX|z) \pi^\bbZ(\ud z) ={} & \int_{\clZ \cap \bbQ[0,1]} \delta_{z+2}(\clX) \pi^\bbZ(\ud z) + \int_{\clZ \cap \bar{\bbQ}[0,1]} \Unif[0,1](\clX) \pi^\bbZ(\ud z) \\
    ={} & \int_{\clZ \cap \bbQ[0,1]} \bbI[z+2 \in \clX] \pi^\bbZ(\ud z) + \int_{\clZ \cap \bar{\bbQ}[0,1]} \lambda(\clX[0,1]) \pi^\bbZ(\ud z) \\
    ={} & \pi^\bbZ( (\clX-2) \cap \clZ \cap \bbQ[0,1] ) + \lambda(\clX[0,1]) \pi^\bbZ(\clZ \cap \bar{\bbQ}[0,1]) \\
    \intertext{(Since a countable set has measure zero under $\Unif[0,1]$, \ie $\lambda(\cdot \cap [0,1])$,)}
    ={} & \frac{1}{2} \sum_{z \in \bbQ[0,1]} \varrho(z) \bbI \big[ z \in (\clX-2) \cap \clZ \cap \bbQ[0,1] \big] \\*
    & {}+ \frac{1}{2} \lambda(\clX[0,1]) \bigg( \Unif[0,1](\clZ[0,1]) + \sum_{z \in \bbQ[0,1]} \varrho(z) \bbI \big[ z \in \clZ \cap \bar{\bbQ}[0,1] \big] \bigg) \\
    ={} & \frac{1}{2} \sum_{z \in \bbQ[0,1]} \varrho(z) \bbI[z \in (\clX-2) \cap \clZ] + \frac{1}{2} \lambda(\clX[0,1]) \lambda(\clZ[0,1])
	=     \pi(\clX\times\clZ).
  \end{align}
  For the conditional distribution $\nu(\clZ|x)$, we similarly have:
  \begin{align}
    \int_\clX \nu(\clZ|x) \pi^\bbX(\ud x) ={} & \int_{\clX[0,1]} \Unif[0,1](\clZ) \pi^\bbX(\ud x) + \int_{\clX \cap \bbQ[2,3]} \delta_{x-2}(\clZ) \pi^\bbX(\ud x) \\
    ={} & \lambda(\clZ[0,1]) \pi^\bbX(\clX[0,1]) + \pi^\bbX \big( \clX \cap \bbQ[2,3] \cap (\clZ+2) \big) \\
    ={} & \frac{1}{2} \lambda(\clZ[0,1]) \lambda(\clX[0,1]) + \frac{1}{2} \sum_{z \in \bbQ[0,1]} \varrho(z) \bbI[z+2 \in \clX \cap \bbQ[2,3] \cap (\clZ+2)] \\
    ={} & \frac{1}{2} \lambda(\clZ[0,1]) \lambda(\clX[0,1]) + \frac{1}{2} \sum_{z \in \bbQ[0,1]} \varrho(z) \bbI[z \in (\clX-2) \cap \clZ]
	=     \pi(\clX\times\clZ).
  \end{align}
  So the two conditionals $\mu(\cdot|z)$ and $\nu(\cdot|x)$ are compatible and $\pi$ is their joint distribution.
  This example illustrates that the absolute continuity of $\pi(\cdot|z)$ w.r.t $\xi$ for $\zeta$-a.e. $z$ and that of $\pi(\cdot|x)$ w.r.t $\zeta$ for $\xi$-a.e. $x$, does not indicate the absolute continuity of $\pi$ w.r.t $\xi\otimes\zeta$.

  This example does not contradict result \bfi of the Lemma.
  For any $z_0 \in \bbQ[0,1]$, we have that $\mu(\cdot|z_0) = \delta_{z_0 + 2}$ is not absolutely continuous w.r.t $\xi = \lambda$.
  But $\pi^\bbZ(\{z_0\}) = \frac{1}{2} \varrho(z_0) > 0$.
  So it is not that $\mu(\cdot|z) \ll \xi$ for $\pi^\bbZ$-a.e. $z$, which aligns with that $\pi$ is not absolutely continuous w.r.t $\xi\otimes\zeta = \lambda^2$.

  This example also shows that the absolute continuity of the compatible joint may depend on the joint itself, apart from the two conditionals.
  Consider another joint on $(\bbR^2, \scR^2, \lambda^2)$:
  \begin{align}
    \pit := \Unif([0,1] \times [0,1]).
    \label{eqn:ac-exmp-joint-var}
  \end{align}
  It is easy to see that $\pit^\bbX(\clX) = \Unif[0,1](\clX) = \lambda(\clX[0,1])$ and $\pit^\bbZ(\clZ) = \lambda(\clZ[0,1])$.
  For any $\clX \in \scX$ and $\clZ \in \scZ$, we have $\pit(\clX\times\clZ) = \lambda(\clX[0,1]) \lambda(\clZ[0,1])$.
  To verify \eqref{eqn:cond-int-prod-indep} for defining a conditional, we have:
  \begin{align}
    \int_\clZ \mu(\clX|z) \pit^\bbZ(\ud z) ={} & \int_{\clZ \cap \bbQ[0,1]} \delta_{z+2}(\clX) \pit^\bbZ(\ud z) + \int_{\clZ \cap \bar{\bbQ}[0,1]} \Unif[0,1](\clX) \pit^\bbZ(\ud z) \\
    ={} & \int_{\clZ \cap \bbQ[0,1]} \bbI[z+2 \in \clX] \pit^\bbZ(\ud z) + \int_{\clZ \cap \bar{\bbQ}[0,1]} \lambda(\clX[0,1]) \pit^\bbZ(\ud z) \\
    ={} & \lambda( (\clX-2) \cap \clZ \cap \bbQ[0,1] ) + \lambda(\clX[0,1]) \lambda(\clZ \cap \bar{\bbQ}[0,1]) \\
    \intertext{(Since $\lambda( (\clX-2) \cap \clZ \cap \bbQ[0,1] ) \le \lambda(\bbQ) = 0$ and $\lambda(\clZ \cap \bar{\bbQ}[0,1]) = \lambda(\clZ \cap \bar{\bbQ}[0,1]) + \lambda(\bbQ[0,1]) = \lambda(\clZ[0,1])$,)}
    ={} & \lambda(\clX[0,1]) \lambda(\clZ[0,1])
	=     \pit(\clX\times\clZ).
  \end{align}
  For the conditional distribution $\nu(\clZ|x)$, we similarly have:
  \begin{align}
    \int_\clX \nu(\clZ|x) \pit^\bbX(\ud x) ={} & \int_{\clX[0,1]} \Unif[0,1](\clZ) \pit^\bbX(\ud x) + \int_{\clX \cap \bbQ[2,3]} \delta_{x-2}(\clZ) \pit^\bbX(\ud x) \\
    ={} & \lambda(\clZ[0,1]) \pit^\bbX(\clX[0,1]) + \pit^\bbX \big( \clX \cap \bbQ[2,3] \cap (\clZ+2) \big) \\
    \intertext{(Since $\pit^\bbX \big( \clX \cap \bbQ[2,3] \cap (\clZ+2) \big) = \lambda \big( \clX \cap \bbQ[2,3] \cap (\clZ+2) \cap [0,1] \big) \le \lambda(\bbQ) = 0$,)}
    ={} & \lambda(\clZ[0,1]) \lambda(\clX[0,1])
	=     \pit(\clX\times\clZ).
  \end{align}
  So $\pit$ is also a compatible joint of $\mu(\cdot|z)$ and $\nu(\cdot|x)$.
  In this case, the violation set for $\mu(\cdot|z) \ll \xi$, \ie $\bbQ[0,1]$, has measure zero under $\pit^\bbZ$, and the violation set for $\nu(\cdot|x) \ll \zeta$, \ie $\bbQ[2,3]$, has measure zero under $\pit^\bbX$.
  So result \bfi of the Lemma asserts that $\pit \ll \xi\otimes\zeta$, which aligns with the example.
  This example shows that although both $\pi$ and $\pit$ are the compatible joint of the same conditionals, they have different absolute continuity.
  So the condition in result \bfi of the Lemma requires the knowledge of the marginals $\pi^\bbZ$ and $\pi^\bbX$.
\end{example}

\vspace{-10pt}
\subsection{Proof of Theorem~\ref{thm:compt-ac}} \label{supp:proofs-compt-ac}

\begin{wrapfigure}{r}{.470\textwidth}
  \centering
  \vspace{-34pt}
  \includegraphics[width=.460\textwidth]{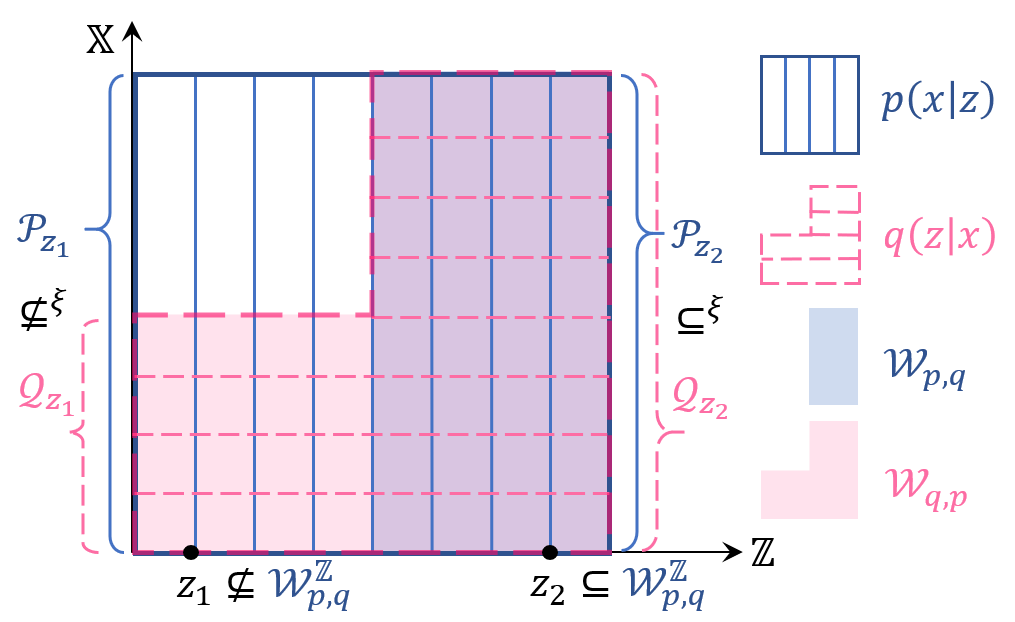}
  \vspace{-4pt}
  \caption{Illustration of Example~\ref{exmp:no-irrcomp}.
    The conditionals are uniform on the respective depicted slices. % , satisfying conditions \bfiv, \bfv.
    On $\clW_{p,q} \cap \clW_{q,p}$, the conditional $q(z|x)$ is not normalized.
  }
  \vspace{-0pt}
  \label{fig:exmp-no-irrcomp}
\end{wrapfigure}
\phantom{a}

\vspace{-24pt}
\begin{example} \label{exmp:no-irrcomp}
  Before presenting the proof, we give an example showing that only being the intersection of $\clW_{p,q}$ and $\clW_{q,p}$ is not sufficient to make a valid support of a compatible joint.
  The example is illustrated in Fig.~\ref{fig:exmp-no-irrcomp}.
  The conditionals are uniform on the respective depicted slices, so conditions \bfiv and \bfv in the theorem (\ref{thm:compt-ac}) are satisfied.
  The sets $\clW_{p,q}$ and $\clW_{q,p}$ are depicted in the figure, and their intersection $\clW_{p,q} \cap \clW_{q,p}$ is the right half.
  Although on $\clW_{p,q} \cap \clW_{q,p}$, the conditionals do not render support conflict,
  the conditional $q(z|x)$ is unnormalized for a given $x$ from the bottom half: it integrates to $1/2$ on $(\clW_{p,q} \cap \clW_{q,p})_x$.
  This means that any compatible joint $\pi$ would also have its conditional $\pi(\cdot|x)$ unnormalized for such $x$, which is impossible.

  One may consider trying $\clW_{q,p}$ as the joint support.
  This renders a support conflict similar to the example in the main text:
  a joint on $\clW_{q,p}$ is required by $p(x|z)$ to also cover the top-left quadrant since $z$ values in the left half are covered,
  but this contradicts with the absence of mass by $q(z|x)$.
  In fact, in this example, there is no $\xi\otimes\zeta$-complete component of both $\clW_{p,q}$ and $\clW_{q,p}$, so the two conditionals are not compatible.
\end{example}

\begin{proof}
  Let $\mu(\clX|z)$ and $\nu(\clZ|x)$ be the two everywhere absolutely continuous conditional distributions of whom $p(x|z)$ and $q(z|x)$ are the density functions.
  We begin with some useful conclusions.

  \bfone By construction, for any $(x,z) \in \clW_{p,q}$, $p(x|z) > 0$.
  For any $z \in \clW_{p,q}^\bbZ$, we have $\xi\{x \in (\clW_{p,q})_z \mid q(z|x) = 0\} = \xi\{x \in \clP_z \mid x \notin \clQ_z\} = \xi(\clP_z \setminus \clQ_z) = 0$,
  which means that $q(z|x) > 0$, $\xi$-a.e. on $(\clW_{p,q})_z$.
  By Lemma~\ref{lem:slice-ae}, we have that $q(z|x) > 0$, $\xi\otimes\zeta$-a.e. on $\clW_{p,q}$.
  Symmetrically, $q(z|x) > 0$ on $\clW_{q,p}$, and $p(x|z) > 0$, $\xi\otimes\zeta$-a.e. on $\clW_{q,p}$.
  Particularly, the ratio $\frac{p(x|z)}{q(z|x)}$ is well-defined and is positive and finite, both $\xi\otimes\zeta$-a.e. on $\clW_{p,q}$ and $\xi\otimes\zeta$-a.e. on $\clW_{q,p}$.
  The conclusions also hold ($\xi\otimes\zeta$-a.e.) on any ($\xi\otimes\zeta$-a.s.) subset of $\clW_{p,q}$ or $\clW_{q,p}$.

  \textbf{``Only if'' (necessity):}

  Let $\pi$ be a compatible joint distribution of such conditional distributions $\mu(\cdot|z)$ and $\nu(\cdot|x)$.

  \bftwo Since ``for any'' indicates ``a.e.'' under any measure, the conditions in Lemma~\ref{lem:joint-ac} are satisfied, so we have $\pi \ll \xi\otimes\zeta$.
  By Lemma~\ref{lem:marg-ac}, we also have $\pi^\bbX \ll \xi$ and $\pi^\bbZ \ll \zeta$.
  So there exist density functions (R-N derivatives; \citealp[Thm.~32.2]{billingsley2012probability}) $u(x)$ and $v(z)$ such that for any $\clX \in \scX$ and $\clZ \in \scZ$, $\pi^\bbX(\clX) = \int_\clX u(x) \xi(\ud x)$ and $\pi^\bbZ(\clZ) = \int_\clZ v(z) \zeta(\ud z)$.
  This $u(x)$ is obviously $\xi$-integrable on any measurable subset of $\bbX$, since the integral is no larger (since $u$ is nonnegative) than $\int_\bbX u(x) \xi(\ud x) = 1$ which is finite.

  \bfthr By the definition of conditional distribution \eqref{eqn:cond-int-prod-indep}, for any $\clX\times\clZ \in \scX\times\scZ$, we have
  $\pi(\clX\times\clZ) = \int_\clZ \mu(\clX|z) \pi^\bbZ(\ud z) = \int_\clZ \mu(\clX|z) v(z) \zeta(\ud z) = \int_\clZ \int_\clX p(x|z) \xi(\ud x) v(z) \zeta(\ud z) = \int_{\clX\times\clZ} p(x|z) v(z) (\xi\otimes\zeta)(\ud x \ud z)$,
  where in the last equality, we have applied Fubini's theorem \eqref{eqn:fubini}~\citep[Thm.~18.3]{billingsley2012probability}.
  Similarly, $\pi(\clX\times\clZ) = \int_{\clX\times\clZ} q(z|x) u(x) (\xi\otimes\zeta)(\ud x \ud z)$.
  Noting that $\scX\times\scZ$ is the pi-system that generates $\scX\otimes\scZ$, this indicates that $p(x|z) v(z) = q(z|x) u(x)$, $\xi\otimes\zeta$-a.e. on $\bbX\times\bbZ$~\citep[Thm.~16.10(iii)]{billingsley2012probability}\footnote{
    The requirement that $\bbX\times\bbZ$ is a finite or countable union of $\scX\times\scZ$-sets is satisfied, since from the sigma-finiteness of $\xi$ and $\zeta$, $\bbX$ and $\bbZ$ are a finite or countable union of ($\xi$-finite) $\scX$-sets and ($\zeta$-finite) $\scZ$-sets, respectively.
  }. In other words, both $p(x|z) v(z)$ and $q(z|x) u(x)$ are density functions of $\pi$.
  \\ \bfthrone Subsequently, by leveraging Lemma~\ref{lem:slice-ae}, we have
  for $\zeta$-a.e. $z$ on $\bbZ$, $p(x|z) v(z) = q(z|x) u(x)$, $\xi$-a.e. on $\bbX$.

  \bffor Let $\clU := \{x \mid u(x) > 0\}$ and $\clV := \{z \mid v(z) > 0\}$, and define:
  \begin{align} \label{eqn:def-nece-S}
    \clS := (\clU\times\clV) \cap \clW_{p,q}, \;\;
    \clSt := (\clU\times\clV) \cap \clW_{q,p},
  \end{align}
  Since $u$ and $v$ are integrable thus measurable and $\bbR^{>0}$ is Lebesgue-measurable, we know that $\clU \in \scX$ and $\clV \in \scZ$ are also measurable.
  So $\clS$ and $\clSt$ are measurable.

  \bfforone We can verify that $\clS$ is a $\xi\otimes\zeta$-complete component of $\clW_{p,q}$.
  Since $\clS \subseteq \clW_{p,q}$, we only need to verify that:
  \begin{flalign}
    & (\xi\otimes\zeta) \big( \big[ (\clS^\bbX \times \bbZ \cup \bbX \times \clS^\bbZ) \cap \clW_{p,q} \big] \setminus \clS \big)
    % & \hspace{-8in} \text{()} \\
    \intertext{(Since clearly $\clS^\bbX \subseteq \clU$, $\clS^\bbZ \subseteq \clV$, and measures are monotone,)}
    \le{} & (\xi\otimes\zeta) \big( \big[ (\clU \times \bbZ \cup \bbX \times \clV) \cap \clW_{p,q} \big] \setminus \clS \big)
    & \hspace{-36in} \text{(Since $(\clA \cap \clC) \setminus (\clB \cap \clC) = (\clA \setminus \clB) \cap \clC$,)} \\
    ={} & (\xi\otimes\zeta) \big( \big[ (\clU \times \bbZ \cup \bbX \times \clV) \setminus (\clU \times \clV) \big] \cap \clW_{p,q} \big)
    & \hspace{-36in} \text{(Since $(\clA \cup \clB) \setminus \clC = (\clA \setminus \clC) \cup (\clB \setminus \clC)$,)} \\
    ={} & (\xi\otimes\zeta) \big( \big[ ((\clU \times \bbZ) \setminus (\clU \times \clV)) \cup ((\bbX \times \clV) \setminus (\clU \times \clV)) \big] \cap \clW_{p,q} \big) \\
    ={} & (\xi\otimes\zeta) \big( \big[ \{(x,z) \mid u(x) > 0, v(z) = 0\} \cup \clW_{p,q} \big] \cup \big[ \{(x,z) \mid u(x) = 0, v(z) > 0\} \cup \clW_{p,q} \big] \big) \\
    \intertext{(By Conclusion~\bfthr and adjusting the set by a set of $\xi\otimes\zeta$-measure-zero according to Lemma~\ref{lem:pmmeas0},)}
    ={} & (\xi\otimes\zeta) \big( \big[ \{(x,z) \mid u(x) > 0, v(z) = 0, q(z|x) = 0\} \cup \clW_{p,q} \big] \\*
    & {}\cup \big[ \{(x,z) \mid u(x) = 0, v(z) > 0, p(x|z) = 0\} \cup \clW_{p,q} \big] \big) \\
    % \intertext{(Noting the countable subadditivity of a measure~\citep[Thm.~10.2(iii)]{billingsley2012probability},)}
    \le{} & (\xi\otimes\zeta) \big( \{(x,z) \mid u(x) > 0, v(z) = 0, q(z|x) = 0\} \cup \clW_{p,q} \big) \\*
      &{} + (\xi\otimes\zeta) \big( \{(x,z) \mid u(x) = 0, v(z) > 0, p(x|z) = 0\} \cup \clW_{p,q} \big)
      & \hspace{-36in} \text{(By Conclusion~\bfone,)}
  \end{flalign}
  $=0$.
  Symmetrically, we can also verify that $\clSt$ is a $\xi\otimes\zeta$-complete component of $\clW_{q,p}$.

  \bffortwo We can also show that $\pi(\clS) = \pi(\clSt) = 1$.
  From Conclusion~\bfthr, we have:
  \begin{flalign}
    1 ={} & \pi(\bbX\times\bbZ) = \int_\bbZ \int_\bbX p(x|z) \xi(\ud x) v(z) \zeta(\ud z) \\
    \intertext{(Since the integral on a region with an a.e. zero value is zero~\citep[Thm.~15.2(i)]{billingsley2012probability},)}
    ={} & \int_\clV \int_{\clP_z} p(x|z) \xi(\ud x) v(z) \zeta(\ud z)
    & \hspace{-8in} \text{(Since $\clS^\bbZ \subseteq \clV$ and due to \citet[Thm.~16.9]{billingsley2012probability},)} \\
    ={} & \Big( \! \int_{\clS^\bbZ} \! \int_{\clP_z} \!\! + \! \int_{\clV \setminus \clS^\bbZ} \! \int_{\clP_z} \! \Big) p(x|z) \xi(\ud x) v(z) \zeta(\ud z)
    & \hspace{-8in} \text{(Since $\clV \setminus \clS^\bbZ = \clV \setminus (\clV \cap \clW_{p,q}^\bbZ) = \clV \setminus \clW_{p,q}^\bbZ$,)} \\
    ={} & \Big( \int_{\clS^\bbZ} \int_{\clP_z} + \int_{\clV \setminus \clW_{p,q}^\bbZ} \int_{\clP_z} \Big) p(x|z) \xi(\ud x) v(z) \zeta(\ud z).
    \label{eqn:pi_of_S_is_1-deduction}
  \end{flalign}
  For the second iterated integral, note that for any $z$ on $\clV$, $v(z) > 0$,
  so from Conclusion~\bfthrone, for $\zeta$-a.e. $z$ on $\clV$, we have $p(x|z) > 0 \Longrightarrow q(z|x) > 0$, $\xi$-a.e. on $\bbX$. % and also $\Longrightarrow u(x) > 0$.
  This means that for $\zeta$-a.e. $z$ on $\clV$, $\xi\{x \mid p(x|z) > 0, q(z|x) = 0\} = \xi(\clP_z \setminus \clQ_z) = 0$.
  So the set $\clV \setminus \clW_{p,q}^\bbZ = \{z \in \clV \mid \xi(\clP_z \setminus \clQ_z) > 0 \text{ or } \clP_z = \emptyset\}$
  has the same measure under $\zeta$ as the set $\{z \in \clV \mid \clP_z = \emptyset\}$.
  This means that the second iterated integral
  $\int_{\clV \setminus \clW_{p,q}^\bbZ} \int_{\clP_z} p(x|z) \xi(\ud x) v(z) \zeta(\ud z)
  = \int_{\{z \in \clV \mid \clP_z = \emptyset\}} \int_{\clP_z} p(x|z) \xi(\ud x) v(z) \zeta(\ud z)
  = 0$~\citep[p.226, Thm.~15.2(i)]{billingsley2012probability}.

  For the first iterated integral, note that by construction, $\clP_z = (\clW_{p,q})_z$ for any $z$ on $\clS^\bbZ$, since $\clS^\bbZ \subseteq \clW_{p,q}^\bbZ$.
  Moreover, from Conclusion~\bfforone that $\clS$ is a $\xi\otimes\zeta$-complete component of $\clW_{p,q}$,
  we can subsequently apply Lemma~\ref{lem:irrcomp-inteq}, so $\int_{\clS^\bbZ} \int_{(\clW_{p,q})_z} p(x|z) \xi(\ud x) v(z) \zeta(\ud z) = \int_\clS p(x|z) v(z) (\xi\otimes\zeta)(\ud x \ud z)$, which is $\pi(\clS)$ by Conclusion~\bfthr.
  This means that $\pi(\clS) = 1$.
  The same deduction applies symmetrically to $\clSt$, so we also have $\pi(\clSt) = 1$.

  \textbf{Main procedure (``only if'').}
  We will verify that the set $\clS$ given in \eqref{eqn:def-nece-S} satisfies all the necessary conditions.

  Conclusion~\bffortwo shows that $\pi(\clS) = 1 > 0$, and in Conclusion~\bftwo we have verified that $\pi \ll \xi\otimes\zeta$.
  So we have $(\xi\otimes\zeta)(\clS) > 0$, which verifies Condition~\bfiii.

  To verify Conditions~\bfiv and~\bfv,
  by Conclusions~\bfthr and~\bfone, we have $p(x|z) v(z) = q(z|x) u(x)$ and $q(z|x) > 0$, $\xi\otimes\zeta$-a.e. on $\clS$.
  By the construction of $\clS$, we also have $v(z) > 0$ and $p(x|z) > 0$ everywhere on $\clS$.
  So the ratio $\frac{p(x|z)}{q(z|x)}$ is finite and positive, $\xi\otimes\zeta$-a.e. on $\clS$,
  and it factorizes as $\frac{p(x|z)}{q(z|x)} = u(x) \frac{1}{v(z)}$, $\xi\otimes\zeta$-a.e. on $\clS$.
  By Conclusion~\bftwo, $a(x) := u(x)$ is $\xi$-integrable on $\clS^\bbX$.

  To verify Condition~\bfii, note that by construction, $\clS^\bbZ \subseteq \clV \cap \clW_{p,q}^\bbZ \subseteq \clW_{p,q}^\bbZ$ so $\clS^\bbZ \assubseteq{\zeta} \clW_{p,q}^\bbZ$.
  Note that $\clS^\bbX = \{x \in \clU \mid \clV \cap (\clW_{p,q})_x \ne \emptyset\}
  = \{x \in \clU \mid \exists z \in \clV \st x \in \clP_z, \clP_z \assubseteq{\xi} \clQ_z\}
  \subseteq \{x \in \clU \mid \exists z \in \clV \st x \in \clP_z\}
  = \{x \mid u(x) > 0, \exists z \st v(z) > 0, p(x|z) > 0\}$.
  By Conclusion~\bfthrone, for $\xi$-a.e. $x$ on $\clS^\bbX$, we have $\exists z \st q(z|x) u(x) = p(x|z) v(z) > 0$, so $q(z|x) > 0$ hence $\clQ_x \ne \emptyset$. % The existed $z$ may form a set of $\zeta$-measure zero so the equality may not hold!
  Moreover, for $\xi$-a.e. $x$ on $\clS^\bbX$ and $\zeta$-a.e. $z$ on $\clQ_x$, we have $p(x|z) v(z) = q(z|x) u(x) > 0$, so $p(x|z) > 0$ hence $\clQ_x \assubseteq{\zeta} \clP_x$.
  These two conclusions means that for $\xi$-a.e. $x$ on $\clS^\bbX$, we have $x \in \{x \mid \clQ_x \ne \emptyset, \clQ_x \assubseteq{\zeta} \clP_x\}$ which is exactly $\clW_{q,p}^\bbX$.
  Hence $\clS^\bbX \assubseteq{\xi} \clW_{q,p}^\bbX$.

  Now, all the left is to verify Condition~\bfi.
  Conclusion~\bfforone has verified that $\clS$ is a $\xi\otimes\zeta$-complete component of $\clW_{p,q}$.
  To verify that $\clS$ is a $\xi\otimes\zeta$-complete component also of $\clW_{q,p}$, note that Conclusion~\bfforone has also verified that $\clSt$ is a $\xi\otimes\zeta$-complete component of $\clW_{q,p}$,
  so by Lemma~\ref{lem:irrcomp-assame} it suffices to verify that $\clS \aseq{\xi\otimes\zeta} \clSt$, $\clS^\bbX \aseq{\xi} \clSt^\bbX$ and $\clS^\bbZ \aseq{\zeta} \clSt^\bbZ$.

  By construction, for any $(x,z) \in \clS$, we have $v(z) > 0$ and $p(x|z) > 0$, so from Conclusion~\bfthr the density function $p(x|z) v(z)$ of $\pi$ is positive.
  Similarly, the density function $q(z|x) u(x)$ of $\pi$ is positive everywhere on $\clSt$.
  Moreover, Conclusion~\bffortwo has shown that both $\pi(\clS) = 1$ and $\pi(\clSt) = 1$.
  So by Lemma~\ref{lem:support-asunique}, we know that $\clS \aseq{\xi\otimes\zeta} \clSt$.
  Also by construction, the density function $u(x)$ of $\pi^\bbX$ is positive everywhere on $\clS^\bbX$ and on $\clSt^\bbX$.
  Moreover, we have $\pi^\bbX(\clS^\bbX) = \pi(\clS^\bbX \times \bbZ) \ge \pi(\clS) = 1$ so $\pi^\bbX(\clS^\bbX) = 1$ and similarly $\pi^\bbX(\clSt^\bbX) = 1$.
  Again by Lemma~\ref{lem:support-asunique}, we have $\clS^\bbX \aseq{\xi} \clSt^\bbX$.
  It follows similarly that $\clS^\bbZ \aseq{\zeta} \clSt^\bbZ$.

  \textbf{``If'' (sufficiency):}

  \bffiv For Conditions~\bfiv and~\bfv, denote $\aat(x) := \lrvert{a(x)}$ and $\bbt(z) := \lrvert{b(z)}$, % which are nonnegative everywhere on $\clS^\bbX$ and $\clS^\bbZ$, respectively.
  and let $\At := \int_{\clS^\bbX} \aat(x) \xi(\ud x)$.
  \\ \bffivone From the definition of integrability in Supplement~\ref{supp:meas-int}, Condition~\bfv is equivalent to that $\aat(x)$ is $\xi$-integrable on $\clS^\bbX$.
  Particularly, $\At < \infty$.
  \\ Due to Condition~\bfi and Lemma~\ref{lem:irrcomp-assubseteq}, we have $\clS \assubseteq{\xi\otimes\zeta} \clW_{p,q}$.
  So by Condition~\bfiv and Conclusion~\bfone, we have $\frac{p(x|z)}{q(z|x)} = a(x) b(z) > 0$, $\xi\otimes\zeta$-a.e. on $\clS$.
  This means both $(\xi\otimes\zeta)\{(x,z) \in \clS \mid a(x) b(z) = 0\} = 0$ and $(\xi\otimes\zeta)\{(x,z) \in \clS \mid a(x) b(z) < 0\} = 0$, since their summation is zero.
  \\ \bffivtwo Since $\{(x,z) \mid x \in \clS^\bbX, a(x) = 0, z \in \clS_x\} \subseteq \{(x,z) \in \clS \mid a(x) b(z) = 0\}$,
  the second-last equation in Conclusion~\bffivone above means that $(\xi\otimes\zeta)\{(x,z) \mid x \in \clS^\bbX, a(x) = 0, z \in \clS_x\} = 0$.
  So $(\xi\otimes\zeta)\{(x,z) \mid x \in \clS^\bbX, a(x) \ne 0, z \in \clS_x\}
  = (\xi\otimes\zeta)\{(x,z) \mid x \in \clS^\bbX, z \in \clS_x\} - (\xi\otimes\zeta)\{(x,z) \mid x \in \clS^\bbX, a(x) = 0, z \in \clS_x\}
  = (\xi\otimes\zeta)(\clS) > 0$ by Condition~\bfiii.
  Moreover, by \eqref{eqn:prod-meas-restr}, we have $(\xi\otimes\zeta)\{(x,z) \mid x \in \clS^\bbX, a(x) \ne 0, z \in \clS_x\}
  = \int_{\{x \in \clS^\bbX \mid a(x) \ne 0\}} \zeta(\clS_x) \xi(\ud x) > 0$,
  so $\xi\{x \in \clS^\bbX \mid a(x) \ne 0\} > 0$~\citep[p.226]{billingsley2012probability}.
  Particularly, $\At > 0$~\citep[Thm.~15.2(ii)]{billingsley2012probability}.
  \\ \bffivthr Since $\{(x,z) \in \clS \mid a(x) b(z) \ne \aat(x) \bbt(z)\} = \{(x,z) \in \clS \mid a(x) b(z) < 0\}$, the last equation in Conclusion~\bffivone above means that $a(x) b(z) = \aat(x) \bbt(z)$, $\xi\otimes\zeta$-a.e. on $\clS$.
  So we have $\frac{p(x|z)}{q(z|x)} = \aat(x) \bbt(z)$, $\xi\otimes\zeta$-a.e. on $\clS$, following Condition~\bfiv.

  \bfsix Based on Conclusions~\bffivone and~\bffivtwo, we can define the following finite and nonnegative functions on $\bbX$ and $\bbZ$:
  \begin{align}
    u(x) := \begin{cases}
      \frac{1}{\At} \aat(x), & \text{if $x \in \clS^\bbX$ and $\aat(x) > 0$}, \\
      0, & \text{otherwise},
    \end{cases} \;\;
    v(z) := \begin{cases}
      \frac{1}{\At \bbt(z)}, & \text{if $z \in \clS^\bbZ$ and $\bbt(z) > 0$}, \\
      0, & \text{otherwise}.
    \end{cases}
  \end{align}
  \bfsixone By construction, $\aat(x) \bbt(z) = u(x) / v(z)$ on $\clS \cap \{(x,z) \mid \bbt(z) > 0\}$.
  So from Conclusion~\bffivthr, we have $p(x|z) v(z) = q(z|x) u(x)$, $\xi\otimes\zeta$-a.e. on $\clS \cap \{(x,z) \mid \bbt(z) > 0\}$.
  Moreover, following a similar deduction as in Conclusion~\bffivtwo, we know that $(\xi\otimes\zeta)(\clS \setminus \{(x,z) \mid \bbt(z) > 0\})
  = (\xi\otimes\zeta)\{(x,z) \mid z \in \clS^\bbZ, b(z) = 0, x \in \clS_z\} = 0$.
  So we have $p(x|z) v(z) = q(z|x) u(x)$, $\xi\otimes\zeta$-a.e. on $\clS$.
  \\ \bfsixtwo By construction, $\int_\bbX u(x) \xi(\ud x) = \int_{\clS^\bbX} u(x) \xi(\ud x) = 1$.

  \textbf{Main procedure (``if'').}
  We will show that the following function on $\scX\otimes\scZ$, which is the same as \eqref{eqn:def-suff-pi-unroll} in the theorem,
  is a distribution on $\bbX\times\bbZ$ such that $\mu(\clX|z)$ and $\nu(\clZ|x)$ are its conditional distributions:
  \begin{align} \label{eqn:def-suff-pi}
    \pi(\clW) := \int_{\clW \cap \clS} q(z|x) u(x) (\xi\otimes\zeta)(\ud x \ud z), \;\;
    \forall \clW \in \scX\otimes\scZ.
  \end{align}
  Consider any measurable rectangle $\clX\times\clZ \in \scX\times\scZ$. We have:
  \begin{flalign}
    & \pi(\clX\times\clZ) = \int_{\clX\times\clZ \cap \clS} q(z|x) u(x) (\xi\otimes\zeta)(\ud x \ud z)
    % & \hspace{-8in} \text{()} \\
    \intertext{(By the generalized form \eqref{eqn:fubini-restr} of Fubini's theorem,)}
    ={} & \int_{\clX \cap \clS^\bbX} \int_{\clZ \cap \clS_x} q(z|x) \zeta(\ud z) u(x) \xi(\ud x)
    & \hspace{-8in} \text{(By Condition~\bfi and applying Lemma~\ref{lem:irrcomp-inteq},)} \\
    ={} & \int_{\clX \cap \clS^\bbX} \int_{\clZ \cap (\clW_{q,p})_x} q(z|x) \zeta(\ud z) u(x) \xi(\ud x)
    % & \hspace{-8in} \text{()} \\
    \intertext{(Since $\clS^\bbX \assubseteq{\xi} \clW_{q,p}^\bbX$ by Condition~\bfii and $(\clW_{q,p})_x = \clQ_x$ on $\clW_{q,p}^\bbX$,)}
    ={} & \int_{\clX \cap \clS^\bbX} \int_{\clZ \cap \clQ_x} q(z|x) \zeta(\ud z) u(x) \xi(\ud x)
    % & \hspace{-8in} \text{()} \\
    \intertext{(Since by construction, $u(x) = 0$ outside $\clS^\bbX$ and $q(z|x) = 0$ outside $\clQ_x$,)}
    ={} & \int_\clX \int_\clZ q(z|x) \zeta(\ud z) u(x) \xi(\ud x)
    & \hspace{-8in} \text{(Recalling that $q(z|x)$ is the density function of $\nu(\cdot|x)$,)} \\
    ={} & \int_\clX \nu(\clZ|x) u(x) \xi(\ud x).
    \label{eqn:pi-is-int-over-nu}
  \end{flalign}
  Moreover, due to Conclusion~\bfsixone, we have $\pi(\clW) = \int_{\clW \cap \clS} p(x|z) v(z) (\xi\otimes\zeta)(\ud x \ud z)$ on $\scX\otimes\scZ$~\citep[Thm.~15.2(v)]{billingsley2012probability}.
  Using this form of $\pi$ and noting that the symmetrized conditions in the above deduction also hold, we have:
  \begin{align} \label{eqn:pi-is-int-over-mu}
    \pi(\clX\times\clZ) = \int_\clZ \mu(\clX|z) v(z) \zeta(\ud z).
  \end{align}

  Since both $q(z|x)$ and $u(x)$ are finite and nonnegative on $\bbX\times\bbZ$, $\pi$ is a measure on $\scX\otimes\scZ$ by its definition \eqref{eqn:def-suff-pi}~\citep[p.227]{billingsley2012probability}.
  Moreover, from \eqref{eqn:pi-is-int-over-nu}, we have $\pi(\bbX\times\bbZ) = \int_\bbX u(x) \xi(\ud x) = 1$ by Conclusion~\bfsixtwo.
  So $\pi$ is a distribution (probability measure) on $\bbX\times\bbZ$.

  From \eqref{eqn:pi-is-int-over-nu}, we have $\pi^\bbX(\clX) := \pi(\clX\times\bbZ) = \int_\clX u(x) \xi(\ud x)$.
  So $u(x)$ is a density function of $\pi^\bbX$, and \eqref{eqn:pi-is-int-over-nu} in turn becomes $\pi(\clX\times\clZ) = \int_\clX \nu(\clZ|x) \pi^\bbX(\ud x)$.
  This indicates that $\nu(\clZ|x)$ is a conditional distribution of $\pi$ w.r.t sub-sigma-field $\scX \times \{\clZ\}$ for any $\clZ \in \scZ$, due to \eqref{eqn:cond-int-prod-indep}.

  Similarly, from \eqref{eqn:pi-is-int-over-mu}, we have $\pi^\bbZ(\clZ) := \pi(\bbX\times\clZ) = \int_\clZ v(z) \zeta(\ud z)$.
  So $v(z)$ is a density function of $\pi^\bbZ$, and \eqref{eqn:pi-is-int-over-mu} in turn becomes $\pi(\clX\times\clZ) = \int_\clZ \mu(\clX|z) \pi^\bbZ(\ud z)$.
  This indicates that $\mu(\clX|z)$ is a conditional distribution of $\pi$ w.r.t sub-sigma-field $\{\clX\} \times \scZ$ for any $\clX \in \scX$, due to \eqref{eqn:cond-int-prod-indep}.
  The proof is completed.
\end{proof}

\subsection{Complete Support Proposition under the a.e.-Full Support Condition} \label{supp:proofs-full-supp-irrcomp}

\begin{proposition} \label{prop:full-supp-irrcomp}
  If $p(x|z)$ and $q(z|x)$ have a.e.-full supports, then $\clW_{p,q} \aseq{\xi\otimes\zeta} \clW_{q,p} \aseq{\xi\otimes\zeta} \bbX\times\bbZ$,
  and $\bbX\times\bbZ$ is the $\xi\otimes\zeta$-unique complete support of them when compatible.
\end{proposition}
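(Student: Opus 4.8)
The plan is to split the statement into an unconditional part --- the two $\xi\otimes\zeta$-a.s. equalities --- and a conditional part --- that $\bbX\times\bbZ$ is the $\xi\otimes\zeta$-unique complete support when the conditionals are compatible. For the first part I would use the a.e.-full support hypothesis together with Lemma~\ref{lem:slice-ae}: since $p(x|z)>0$ and $q(z|x)>0$ $\xi\otimes\zeta$-a.e., for $\zeta$-a.e.\ $z$ one has $\clP_z\aseq{\xi}\bbX\aseq{\xi}\clQ_z$, hence $\xi(\clP_z\setminus\clQ_z)\le\xi(\bbX\setminus\clQ_z)=0$, so $\clP_z\assubseteq{\xi}\clQ_z$ and $(\clW_{p,q})_z=\clP_z\aseq{\xi}\bbX$. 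Plugging the slices into the product-measure formula \eqref{eqn:prod-meas} gives $(\xi\otimes\zeta)\big((\bbX\times\bbZ)\setminus\clW_{p,q}\big)=\int_\bbZ\xi\big(\bbX\setminus(\clW_{p,q})_z\big)\,\zeta(\ud z)=0$, \ie $\clW_{p,q}\aseq{\xi\otimes\zeta}\bbX\times\bbZ$; the symmetric argument (Lemma~\ref{lem:slice-ae} in the $\bbX$-direction and the second form of \eqref{eqn:prod-meas}) yields $\clW_{q,p}\aseq{\xi\otimes\zeta}\bbX\times\bbZ$.

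\emph{Uniqueness.} Let $\clS$ be any complete support. Condition \bfi states $\clS^\sharp\cap\clW_{p,q}\aseq{\xi\otimes\zeta}\clS$, and intersecting the equality from the previous step with $\clS^\sharp$ (Lemma~\ref{lem:assame-setop}) gives $\clW_{p,q}\cap\clS^\sharp\aseq{\xi\otimes\zeta}\clS^\sharp$, so by transitivity (Lemma~\ref{lem:assame-equiv}) $\clS^\sharp\aseq{\xi\otimes\zeta}\clS$. Since $\clS\subseteq\clS^\bbX\times\bbZ\subseteq\clS^\sharp$ and $\clS\subseteq\bbX\times\clS^\bbZ\subseteq\clS^\sharp$, squeezing yields $\clS^\bbX\times\bbZ\aseq{\xi\otimes\zeta}\clS\aseq{\xi\otimes\zeta}\bbX\times\clS^\bbZ$. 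Computing the symmetric difference of the two strips, $\xi(\clS^\bbX)\zeta(\bbZ\setminus\clS^\bbZ)=0$ and $\xi(\bbX\setminus\clS^\bbX)\zeta(\clS^\bbZ)=0$. Condition \bfiii rules out $\xi(\clS^\bbX)=0$ and $\zeta(\clS^\bbZ)=0$ (either would force $(\xi\otimes\zeta)(\clS)=0$ via the $0\cdot\infty=0$ convention), hence $\zeta(\bbZ\setminus\clS^\bbZ)=0=\xi(\bbX\setminus\clS^\bbX)$, \ie $\clS^\bbX\aseq{\xi}\bbX$ and $\clS^\bbZ\aseq{\zeta}\bbZ$; then by Lemma~\ref{lem:assame-strip} and the squeeze, $\clS\aseq{\xi\otimes\zeta}\clS^\bbX\times\bbZ\aseq{\xi\otimes\zeta}\bbX\times\bbZ$.

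\emph{$\bbX\times\bbZ$ is a complete support when compatible.} By compatibility and Thm~\ref{thm:compt-ac} some complete support $\clS_0$ exists, and the uniqueness step gives $\clS_0\aseq{\xi\otimes\zeta}\bbX\times\bbZ$, $\clS_0^\bbX\aseq{\xi}\bbX$, $\clS_0^\bbZ\aseq{\zeta}\bbZ$. It then remains to check \bfi--\bfv for $\clS=\bbX\times\bbZ$: \bfi holds because $(\bbX\times\bbZ)^\sharp=\bbX\times\bbZ$ and $(\bbX\times\bbZ)^\sharp\cap\clW_{p,q}=\clW_{p,q}\aseq{\xi\otimes\zeta}\bbX\times\bbZ$, likewise for $\clW_{q,p}$; \bfii because $\bbZ\aseq{\zeta}\clS_0^\bbZ\assubseteq{\zeta}\clW_{p,q}^\bbZ$ and symmetrically $\bbX\assubseteq{\xi}\clW_{q,p}^\bbX$; \bfiii because $(\xi\otimes\zeta)(\bbX\times\bbZ)\ge(\xi\otimes\zeta)(\clS_0)>0$; and \bfiv--\bfv by taking the $a_0,b_0$ from \bfiv for $\clS_0$, extending them by $0$ off $\clS_0^\bbX$, $\clS_0^\bbZ$, and using $\clS_0^\bbX\times\clS_0^\bbZ\aseq{\xi\otimes\zeta}\clS_0$ so that $p/q=a(x)b(z)$ $\xi\otimes\zeta$-a.e.\ on $\bbX\times\bbZ$ with $a$ still $\xi$-integrable over $\bbX$. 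Combined with the uniqueness step, $\bbX\times\bbZ$ is then the $\xi\otimes\zeta$-unique complete support.

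I expect the uniqueness step to be the crux: projections of $\xi\otimes\zeta$-a.s.-equal sets need not be $\xi$- or $\zeta$-a.s.-equal (Example~\ref{exmp:irrcomp-projsubset}), so the a.s.\ identity $\clW_{p,q}\aseq{\xi\otimes\zeta}\bbX\times\bbZ$ cannot be pushed to projections directly; one has to exploit the complete-component structure of a complete support together with condition \bfiii, and be careful with the $0\cdot\infty=0$ convention for a possibly infinite $\xi(\bbX)$ or $\zeta(\bbZ)$, to rule out a ``thin'' complete support with degenerate projections. The remaining bookkeeping (slicing, Fubini-type interchanges, extending the factorization functions) is routine given the lemmas already in place.
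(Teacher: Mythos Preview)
Your proposal is correct. The first part (showing $\clW_{p,q}\aseq{\xi\otimes\zeta}\clW_{q,p}\aseq{\xi\otimes\zeta}\bbX\times\bbZ$) is essentially identical to the paper's argument via Lemma~\ref{lem:slice-ae}.

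For the uniqueness step the two routes diverge in a way worth noting. The paper works asymmetrically: from $\clS$ being a complete component of $\clW_{q,p}$ it extracts $\clS\assupseteq{\xi\otimes\zeta}\clS^\bbX\times\bbZ$, then integrates slice-wise to conclude $\clS_x\aseq{\zeta}\bbZ$ for $\xi$-a.e.\ $x\in\clS^\bbX$ (using $\xi(\clS^\bbX)>0$ to guarantee such an $x$ exists), hence $\clS^\bbZ\aseq{\zeta}\bbZ$; it then repeats with $\clW_{p,q}$ for the other projection. Your argument is more symmetric and set-theoretic: you pass directly to $\clS\aseq{\xi\otimes\zeta}\clS^\sharp$ (using only one half of condition~\bfi), squeeze to identify $\clS$ with both strips $\clS^\bbX\times\bbZ$ and $\bbX\times\clS^\bbZ$, and read off both projection identities from the single rectangle computation $(\clS^\bbX\times\bbZ)\triangle(\bbX\times\clS^\bbZ)$. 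This avoids the slice-integral bookkeeping and handles the $0\cdot\infty$ case cleanly in one place. Both arguments ultimately hinge on condition~\bfiii to rule out degenerate projections, as you correctly anticipate.

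Your third step---explicitly verifying conditions \bfi--\bfv for $\clS=\bbX\times\bbZ$ by transferring them from the Thm.~\ref{thm:compt-ac}-guaranteed complete support $\clS_0$---is more thorough than the paper, which stops after showing any complete support is a.s.\ $\bbX\times\bbZ$ and leaves the converse (that $\bbX\times\bbZ$ itself satisfies the definition) implicit. Your extension of $a_0,b_0$ by zero is the right device; the remark that $\clS_0^\bbX\times\clS_0^\bbZ\aseq{\xi\otimes\zeta}\clS_0$ is not strictly needed since $\clS_0\aseq{\xi\otimes\zeta}\bbX\times\bbZ$ already makes the factorization hold a.e.\ on the full product, but it does no harm.
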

\begin{proof}
  By definition, $p(x|z) > 0$ and $q(z|x) > 0$, $\xi\otimes\zeta$-a.e.
  By Lemma~\ref{lem:slice-ae}, this means that for $\xi$-a.e. $x$, $p(x|z) > 0$ $\zeta$-a.e. thus $\clP_x \aseq{\zeta} \bbZ$, and for $\zeta$-a.e. $z$, $\clQ_z \aseq{\xi} \bbX$.
  Similarly, we also have for $\xi$-a.e. $x$, $\clQ_x \aseq{\zeta} \bbZ$ thus $\clQ_x \aseq{\zeta} \clP_x$ by the transitivity (Lemma~\ref{lem:assame-equiv}) and subsequently $\clQ_x \assubseteq{\zeta} \clP_x$.
  Similarly, for $\zeta$-a.e. $z$, $\clP_z \assubseteq{\xi} \clQ_z$.
  This means that for $\xi$-a.e. $x$, $(\clW_{q,p})_x = \clQ_x \aseq{\zeta} \bbZ$ thus $z \in (\clW_{q,p})_x$, $\zeta$-a.e.
  By Lemma~\ref{lem:slice-ae}, this means that for $\xi\otimes\zeta$-a.e. $(x,z)$, we have $(x,z) \in \clW_{q,p}$, so $\clW_{q,p} \aseq{\xi\otimes\zeta} \bbX\times\bbZ$.
  Similarly, we have $\clW_{p,q} \aseq{\xi\otimes\zeta} \bbX\times\bbZ$.
  % Moreover, since for $\xi$-a.e. $x$, $(\clW_{q,p})_x \aseq{\zeta} \bbZ$ thus $(\clW_{q,p})_x \ne \emptyset$, we know that $\clW_{q,p}^\bbX \aseq{\xi} \bbX$.
  % Following a similar reasoning and applying Lemma~\ref{lem:slice-ae} when necessary, we have $\clW_{p,q}^\bbX \aseq{\xi} \bbX$ and $\clW_{q,p}^\bbZ \aseq{\zeta} \clW_{p,q}^\bbZ \aseq{\zeta} \bbZ$.

  Let $\clS$ be a complete support of $p(x|z)$ and $q(z|x)$ when they are compatible.
  Since $(\xi\otimes\zeta)(\clS) = 1 > 0$ from Condition~\bfiii in Theorem~\ref{thm:compt-ac}, we know that $\xi(\clS^\bbX) > 0$ by \eqref{eqn:prod-meas-restr} and \citet[p.226]{billingsley2012probability}.
  So there is an $x \in \clS^\bbX$ such that $(\clW_{q,p})_x \aseq{\zeta} \bbZ$, otherwise there would be a non-measure-zero set of $x$ violating $(\clW_{q,p})_x \aseq{\zeta} \bbZ$.
  As a $\xi\otimes\zeta$-complete component of $\clW_{q,p}$ by Condition~\bfi in Theorem~\ref{thm:compt-ac}, we have $\clS \aseq{\xi\otimes\zeta} (\clS^\bbX \times \bbZ \cap \clW_{q,p}) \cup (\bbZ \times \clS^\bbZ \cap \clW_{q,p})
  \supseteq \clS^\bbX \times \bbZ \cap \clW_{q,p} \aseq{\xi\otimes\zeta} \clS^\bbX \times \bbZ$.
  This means that $(\xi\otimes\zeta)(\clS^\bbX \times \bbZ \setminus \clS) = \int_{\clS^\bbX} \zeta(\bbZ \setminus \clS_x) \xi(\ud x) = 0$ by \eqref{eqn:prod-meas-restr},
  so for $\xi$-a.e. $x$ on $\clS^\bbX$, $\zeta(\bbZ \setminus \clS_x) = 0$~\citep[Thm.~15.2(ii)]{billingsley2012probability} thus $\clS_x \aseq{\zeta} \bbZ$.
  So $\clS^\bbZ \aseq{\zeta} \bbZ$.
  Moreover, we also have $\clS \assupseteq{\xi\otimes\zeta} \bbX \times \clS^\bbZ$ which $\aseq{\xi\otimes\zeta} \bbX\times\bbZ$, so we have $\clS \aseq{\xi\otimes\zeta} \bbX\times\bbZ$.
  Similarly, from that $\clS$ is a $\xi\otimes\zeta$-complete component also of $\clW_{p,q}$, we have the same conclusion.
\end{proof}

\subsection{Proof of Theorem~\ref{thm:determ-ac}} \label{supp:proofs-determ-ac}
\begin{proof}
  Let $\pi$ and $\pit$ be two compatible joints of $p(x|z)$ and $q(z|x)$, and they are supported on the same complete support $\clS$.
  By Conclusions~\bftwo and~\bfthr in the proof (Supplement~\ref{supp:proofs-compt-ac}) of Theorem~\ref{thm:compt-ac}, there exist functions $u(x)$, $v(z)$ and $\uut(x)$, $\vvt(z)$ such that
  $p(x|z) v(z)$ and $q(z|x) u(x)$ are the densities of $\pi$, and $p(x|z) \vvt(z)$ and $q(z|x) \uut(x)$ of $\pit$,
  and $p(x|z) v(z) = q(z|x) u(x)$ and $p(x|z) \vvt(z) = q(z|x) \uut(x)$, $\xi\otimes\zeta$-a.e.
  By the definition of a support in Lemma~\ref{lem:support-asunique}, we know that the densities of $\pi$ and $\pit$ are positive $\xi\otimes\zeta$-a.e. on $\clS$,
  and $\int_{\clS^\bbX} u \dd \xi = \int_{\clS^\bbX} \uut \dd \xi = \int_{\clS^\bbZ} v \dd \zeta = \int_{\clS^\bbZ} \vvt \dd \zeta = 1$.

  Consequently, we have $\frac{p(x|z)}{q(z|x)} = \frac{u(x)}{v(z)} = \frac{\uut(x)}{\vvt(z)}$, $\xi\otimes\zeta$-a.e. on $\clS$.
  By Lemma~\ref{lem:slice-ae}, for $\zeta$-a.e. $z$ on $\clS^\bbZ$, we have $\frac{u(x)}{v(z)} = \frac{\uut(x)}{\vvt(z)}$ for $\xi$-a.e. $x$ on $\clS_z$,
  which means that $\int_{\clS_z} \frac{u(x)}{v(z)} \xi(\ud x) = \int_{\clS_z} \frac{\uut(x)}{\vvt(z)} \xi(\ud x)$.
  Since for $\zeta$-a.e. $z$ on $\clS^\bbZ$, $\clS_z \aseq{\xi} \clS^\bbX$, we have by Lemma~\ref{lem:assame-meas} that
  $\int_{\clS^\bbX} \frac{u(x)}{v(z)} \xi(\ud x) = \int_{\clS^\bbX} \frac{\uut(x)}{\vvt(z)} \xi(\ud x)$, which in turn gives
  $\frac{1}{v(z)} \int_{\clS^\bbX} u \dd \xi = \frac{1}{v(z)} = \frac{1}{\vvt(z)} \int_{\clS^\bbX} \uut \dd \xi = \frac{1}{\vvt(z)}$.
  So $v(z) = \vvt(z)$ for $\zeta$-a.e. $z$ on $\clS^\bbZ$, and similarly $u(x) = \uut(x)$ for $\xi$-a.e. $x$ on $\clS^\bbX$.
  Subsequently, the density $p(x|z) v(z)$ or $q(z|x) u(x)$ of $\pi$ is $\xi\otimes\zeta$-a.e. the same as the density $p(x|z) \vvt(z)$ or $q(z|x) \uut(x)$ of $\pit$.
  Hence, $\pi$ and $\pit$ are the same distribution.
\end{proof}

\subsection{The Dirac Compatibility Lemma} \label{supp:proofs-compt-dirac-exist}

Before proving the main instructive compatibility theorem~(\ref{thm:compt-dirac}) for the Dirac case,
we first present an existential equivalent criterion for compatibility, which provides insights to the problem.

\begin{lemma}[Dirac compatibility, existential] \label{lem:compt-dirac-exist}
  Conditional distribution $\nu(\clZ|x)$ is compatible with $\mu(\clX|z) := \delta_{f(z)}(\clX)$ where function $f: \bbZ \to \bbX$ is $\scX$/$\scZ$-measurable,
  if and only if there is a distribution $\beta$ on $(\bbZ,\scZ)$ such that $\nu(\clZ|x) = \frac{\ud \beta(\clZ \cap f^{-1}(\cdot))}{\ud \beta(f^{-1}(\cdot))}(x)$,
  and this $\beta$ is the marginal $\pi^\bbZ$ of a compatible joint $\pi$ of them.
\end{lemma}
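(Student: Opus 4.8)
The plan is to exploit the fact that a Dirac likelihood forces any compatible joint to be the pushforward of its own $\bbZ$-marginal along the graph map $\Phi\colon\bbZ\to\bbX\times\bbZ$, $\Phi(z):=(f(z),z)$, and then to read off $\nu$ from this structure as a Radon--Nikodym derivative. First note $\Phi$ is $\scZ/\scX\otimes\scZ$-measurable: $\Phi^{-1}(\clX\times\clZ)=f^{-1}(\clX)\cap\clZ\in\scZ$, and the measurable rectangles generate $\scX\otimes\scZ$. Also observe that for any distribution $\beta$ on $(\bbZ,\scZ)$ the measure $\clX\mapsto\beta(\clZ\cap f^{-1}(\clX))$ is dominated by the probability measure $\clX\mapsto\beta(f^{-1}(\clX))=f_*\beta$ (both finite, hence sigma-finite), so the R--N derivative $\frac{\ud\beta(\clZ\cap f^{-1}(\cdot))}{\ud\beta(f^{-1}(\cdot))}$ always exists and is unique $f_*\beta$-a.e.

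For the ``only if'' direction, let $\pi$ be a compatible joint and set $\beta:=\pi^\bbZ$. Substituting $\mu(\clX|z)=\delta_{f(z)}(\clX)=\bbI[f(z)\in\clX]$ into the defining identity \eqref{eqn:cond-int-prod-indep} gives $\pi(\clX\times\clZ)=\int_\clZ\bbI[f(z)\in\clX]\,\beta(\ud z)=\beta\big(f^{-1}(\clX)\cap\clZ\big)=(\Phi_*\beta)(\clX\times\clZ)$, so $\pi$ and $\Phi_*\beta$ agree on the pi-system $\scX\times\scZ$ and, being probability measures, agree on $\scX\otimes\scZ$ by \citet[Thm.~10.3]{billingsley2012probability}; thus $\pi=\Phi_*\beta$, and in particular $\pi^\bbX(\clX)=\beta(f^{-1}(\clX))$, i.e.\ $\pi^\bbX=f_*\beta$. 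Applying \eqref{eqn:cond-int-prod-indep} to the other conditional, $\int_\clX\nu(\clZ|x)\,\pi^\bbX(\ud x)=\pi(\clX\times\clZ)=\beta(\clZ\cap f^{-1}(\clX))$ for every $\clX\in\scX$, so by uniqueness of the R--N derivative $\nu(\clZ|\cdot)$ is a version of $\frac{\ud\beta(\clZ\cap f^{-1}(\cdot))}{\ud\beta(f^{-1}(\cdot))}$ for each $\clZ$; and $\beta=\pi^\bbZ$ is the marginal of the compatible joint $\pi$, as claimed.

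For the ``if'' direction I would run this in reverse: given $\beta$ with $\nu(\clZ|x)=\frac{\ud\beta(\clZ\cap f^{-1}(\cdot))}{\ud\beta(f^{-1}(\cdot))}(x)$, define $\pi:=\Phi_*\beta$, a distribution on $\bbX\times\bbZ$. Then $\pi^\bbZ(\clZ)=\beta(f^{-1}(\bbX)\cap\clZ)=\beta(\clZ)$, so $\pi^\bbZ=\beta$, and $\pi^\bbX=f_*\beta$. Checking \eqref{eqn:cond-int-prod-indep}: $\int_\clZ\delta_{f(z)}(\clX)\,\pi^\bbZ(\ud z)=\beta(f^{-1}(\clX)\cap\clZ)=\pi(\clX\times\clZ)$, and $\int_\clX\nu(\clZ|x)\,\pi^\bbX(\ud x)=\beta(\clZ\cap f^{-1}(\clX))=\pi(\clX\times\clZ)$ by the defining property of the R--N derivative; hence $\mu$ and $\nu$ are the conditionals of $\pi$, so $\nu$ is compatible with $\mu$ and $\beta=\pi^\bbZ$ is the marginal of this compatible joint. (If instead one reads the right-hand side as already presupposing that $\beta=\pi^\bbZ$ for some compatible joint, ``if'' is immediate.)

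The pi-system extension and pushforward bookkeeping are routine; the one place that needs care is the a.e.\ nature of conditionals and R--N derivatives. The equation ``$\nu(\clZ|x)=\frac{\ud\,\cdot}{\ud\,\cdot}(x)$'' must be read as ``$\nu(\clZ|\cdot)$ is a version of that derivative ($f_*\beta$-a.e., for each $\clZ$)'', and one should recall that $\nu$ being a genuine Markov kernel is exactly what guarantees $\nu(\cdot|x)$ is a probability measure for $\pi^\bbX$-a.e.\ $x$, so that it legitimately plays the role of a conditional of $\pi$ in the sense of \eqref{eqn:cond-int-prod-indep}. No topological or single-point hypotheses on $\scX$ are needed here; those enter only in the sharper Theorem~\ref{thm:compt-dirac}.
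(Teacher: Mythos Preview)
Your proof is correct and follows essentially the same route as the paper's: both identify the compatible joint with the pushforward of $\beta=\pi^{\bbZ}$ along the graph of $f$, use the pi-system extension \citep[Thm.~10.3]{billingsley2012probability} to pass from rectangles to $\scX\otimes\scZ$, and read off $\nu$ as the Radon--Nikodym derivative against $\pi^{\bbX}=f_*\beta$. Your explicit introduction of the graph map $\Phi(z)=(f(z),z)$ is a clean packaging of what the paper does by writing out the integrals, but the arguments are otherwise step-for-step the same.
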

\begin{proof}
  We first show the validity of the R-N derivative.
  Since $\beta$ is a distribution thus a finite measure, $\beta(\clZ \cap f^{-1}(\cdot))$ and $\beta(f^{-1}(\cdot))$ are also finite thus sigma-finite.
  For any $\clX \in \scX$ such that $\beta(f^{-1}(\clX)) = 0$, we have $\beta(\clZ \cap f^{-1}(\clX)) \le \beta(f^{-1}(\clX)) = 0$
  since $\clZ \cap f^{-1}(\clX) \subseteq f^{-1}(\clX)$ and measures are monotone.
  So $\beta(\clZ \cap f^{-1}(\clX)) = 0$ and $\beta(\clZ \cap f^{-1}(\cdot)) \ll \beta(f^{-1}(\cdot))$.
  By the R-N theorem~\citep[Thm.~32.2]{billingsley2012probability}, the R-N derivative exists.

  \textbf{``Only if'' (necessity):}
  Let $\pi$ be a compatible joint.
  Since $\mu$ and $\nu$ are its conditional distributions, by \eqref{eqn:cond-int-prod-indep}, we have:
  \begin{align}
    \pi(\clX\times\clZ) = \int_\clZ \mu(\clX|z) \pi^\bbZ(\ud z) = \int_\clX \nu(\clZ|x) \pi^\bbX(\ud x), \;\;
    \forall \clX\times\clZ \in \scX\times\scZ.
  \end{align}
  The first integral is $\int_\clZ \bbI[f(z) \in \clX] \pi^\bbZ(\ud z) = \int_\clZ \bbI[z \in f^{-1}(\clX)] \pi^\bbZ(\ud z) = \pi^\bbZ(\clZ \cap f^{-1}(\clX))$.
  Particularly, $\pi^\bbX(\clX) = \pi(\clX\times\bbZ) = \pi^\bbZ(f^{-1}(\clX))$, \ie $\pi^\bbX$ is the transformed (pushed-forward) distribution from $\pi^\bbZ$ by measurable function $f$~\citep[p.196]{billingsley2012probability}.
  On the other hand, the equality to the second integral means that $\pi^\bbZ(\clZ \cap f^{-1}(\clX)) = \int_\clX \nu(\clZ|x) \pi^\bbX(\ud x) = \int_\clX \nu(\clZ|x) \pi^\bbZ(f^{-1}(\ud x))$.
  This means that $\nu(\clZ|x)$ is the R-N derivative of $\clX \mapsto \pi^\bbZ(\clZ \cap f^{-1}(\clX))$ w.r.t $\clX \mapsto \pi^\bbZ(f^{-1}(\clX))$.
  Taking $\beta$ as $\pi^\bbZ$, which is a distribution on $(\bbZ,\scZ)$, yields the necessary condition.

  \textbf{``If'' (sufficiency):}
  For any measurable rectangle $\clX\times\clZ \in \scX\times\scZ$, define
  $\pi(\clX\times\clZ) := \int_\clZ \mu(\clX|z) \beta(\ud z)$ and $\pit(\clX\times\clZ) := \beta(\clZ \cap f^{-1}(\clX))$.
  Since for any $z \in \bbZ$, $f(z) \in \bbX$, so $\pi(\bbX\times\bbZ) = \int_\bbZ \mu(\bbX|z) \beta(\ud z) = \int_\bbZ \beta(\ud z) = 1$.
  Since $f^{-1}(\bbX) = \bbZ$, we have $\pit(\bbX\times\bbZ) = \beta(\bbZ) = 1$.
  So both $\pi$ and $\pit$ are finite thus sigma-finite.
  Moreover, for any $\clX\times\clZ \in \scX\times\scZ$, we have
  $\pi(\clX\times\clZ) = \int_\clZ \bbI[f(z) \in \clX] \beta(\ud z) = \int_\clZ \bbI[z \in f^{-1}(\clX)] \beta(\ud z) = \int_{\clZ \cap f^{-1}(\clX)} \beta(\ud z) = \beta(\clZ \cap f^{-1}(\clX)) = \pit(\clX\times\clZ)$,
  \ie $\pi$ and $\pit$ agree on the pi-system $\scX\times\scZ$.
  So by \citet[Thm.~10.3]{billingsley2012probability}, $\pi$ and $\pit$ extend to the same distribution (probability measure) on $(\bbX\times\bbZ, \scX\otimes\scZ)$.

  On the other hand, we have $\pi^\bbZ(\clZ) = \pi(\bbX\times\clZ) = \int_\clZ \mu(\bbX|z) \beta(\ud z) = \int_\clZ \beta(\ud z) = \beta(\clZ)$,
  and furthermore from this, $\mu$ is a conditional distribution of $\pi$ due to its construction and \eqref{eqn:cond-int-prod-indep}.
  Moreover, $\pit^\bbX(\clX) = \pit(\clX\times\bbZ) = \beta(\bbZ \cap f^{-1}(\clX)) = \beta(f^{-1}(\clX))$,
  and by the definition of $\nu(\clZ|x)$ as an R-N derivative, we have $\beta(\clZ \cap f^{-1}(\clX)) = \int_\clX \nu(\clZ|x) \beta(f^{-1}(\ud x))$,
  which is $\pit(\clX\times\clZ) = \int_\clX \nu(\clZ|x) \pit^\bbX(\ud x)$.
  So again due to \eqref{eqn:cond-int-prod-indep}, $\nu$ is a conditional distribution of $\pit$.
  Since $\pi$ and $\pit$ are the same distribution on $(\bbX\times\bbZ, \scX\otimes\scZ)$, we know that $\mu$ and $\nu$ are compatible.
\end{proof}

% \chang{Simplify/merge this part.}
\textbf{Key insights.} \hspace{4pt}
Let $\pi$ be a compatible joint of $\mu(\clX|z) := \delta_{f(z)}(\clX)$ and $\nu(\clZ|x)$.
For any $\clX\times\clZ \in \scX\times\scZ$, we have:
\begin{align}
  \pi(\clX\times\clZ) = \pi^\bbZ(\clZ \cap f^{-1}(\clX))
  = \int_\clX \nu(\clZ|x) \pi^\bbZ(f^{-1}(\ud x))
  = \int_{f^{-1}(\clX)} \nu(\clZ|f(z)) \pi^\bbZ(\ud z),
\end{align}
where the last equality holds due to the rule of change of variables~\citep[Thm.~16.13]{billingsley2012probability}.
Let $f^{-1}(\scX) := \sigma(\{f^{-1}(\clX) \mid \clX \in \scX\})$ be the pulled-back sigma-field from $\scX$ by $f$.
It is a sub-sigma-field of $\scZ$ as every $f^{-1}(\clX) \in \scZ$ since $f$ is measurable.
So the last equality means that:
\begin{align}
  \nu(\clZ|f(z)) = \frac{\ud \pi^\bbZ(\clZ \cap \cdot)}{\ud \pi^\bbZ(\cdot)} \bigg|_{f^{-1}(\scX)} (z).
\end{align}
The expression on the left makes sense since for all values of $z$ that yield the same value of $f(z)$, the R-N derivative is the same.
The second equality also gives:
\begin{align}
  \nu(\clZ|x) = \frac{\ud \pi^\bbZ(\clZ \cap f^{-1}(\cdot))}{\ud \pi^\bbZ(f^{-1}(\cdot))} \bigg|_\scX (x).
\end{align}

\subsection{Proof of Theorem~\ref{thm:compt-dirac}} \label{supp:proofs-compt-dirac}

\begin{proof}
  \textbf{``Only if'' (necessity):}
  Suppose that $\nu(\clZ|x)$ and $\mu(\clX|z) := \delta_{f(z)}(\clX)$ are compatible but for \emph{any} $x \in \bbX$, $\nu(f^{-1}(\{x\}) | x) < 1$.
  Consider the set $\clS := \{(f(z),z) \mid z \in \bbZ\}$.
  Since $f$ is $\scX$/$\scZ$-measurable, this set $\clS$ is $\scX\otimes\scZ$-measurable.
  It is also easy to verify that $\clS_z = \{f(z)\}$ and $\clS_x = f^{-1}(\{x\})$.
  Now let $\pi$ be \emph{any} of their compatible joint distribution.
  From \eqref{eqn:cond-int-prod-dep}, we know that
  $\pi(\clS) = \int_\bbZ \mu(\clS_z|z) \pi^\bbZ(\ud z) = \int_\bbZ \delta_{f(z)}(\{f(z)\}) \pi^\bbZ(\ud z) = \int_\bbZ \pi^\bbZ(\ud z) = \pi^\bbZ(\bbZ) = 1$.
  On the other hand, also from \eqref{eqn:cond-int-prod-dep} and due to the compatibility, we have
  $\pi(\clS) = \int_\bbX \nu(\clS_x|x) \pi^\bbX(\ud x) = \int_\bbX \nu(f^{-1}(\{x\}) | x) \pi^\bbX(\ud x) < \int_\bbX \pi^\bbX(\ud x) = \pi^\bbX(\bbX) = 1$,
  which leads to a contradiction.
  So if $\nu(\clZ|x)$ and $\mu(\clX|z)$ are compatible, then there is $x_0 \in \bbX$ such that $\nu(f^{-1}(\{x_0\}) | x_0) = 1$.

  \textbf{``If'' (sufficiency):}
  Let $\beta(\clZ) := \nu(f^{-1}(\{x_0\}) \cap \clZ | x_0)$ be a set function on $\scZ$.
  We can verify that this $\beta$ is a distribution (probability measure) on $(\bbZ,\scZ)$ since $\nu(\cdot|x_0)$ is.
  Particularly, since $f$ is $\scX$/$\scZ$-measurable and $\{x_0\} \in \scX$ due to the assumption, we know that $f^{-1}(\{x_0\})$ thus $f^{-1}(\{x_0\}) \cap \clZ$ for any $\clZ \in \scZ$ are in $\scZ$;
  $\beta(\emptyset) = \nu(\emptyset|x_0) = 0$;
  $\beta(\bbZ) = \nu(f^{-1}(\{x_0\}) | x_0) = 1$ according to the assumption;
  for any countable disjoint $\scZ$-sets $\clZ^{(1)}, \clZ^{(2)}, \cdots$, it holds that $\clZ^{(1)} \cap f^{-1}(\{x_0\}), \clZ^{(2)} \cap f^{-1}(\{x_0\}), \cdots$ are also disjoint $\scZ$-sets, so
  $\beta(\bigcup_{i=1}^\infty \clZ^{(i)}) = \nu(f^{-1}(\{x_0\}) \cap \bigcup_{i=1}^\infty \clZ^{(i)} | x_0)
  = \nu(\bigcup_{i=1}^\infty f^{-1}(\{x_0\}) \cap \clZ^{(i)} | x_0) = \sum_{i=1}^\infty \nu(f^{-1}(\{x_0\}) \cap \clZ^{(i)} | x_0) = \sum_{i=1}^\infty \beta(\clZ^{(i)})$.

  Now we prove that $\beta(\clZ \cap f^{-1}(\clX)) = \int_\clX \nu(\clZ|x) \beta(f^{-1}(\ud x)), \forall \clX\times\clZ \in \scX\times\scZ$ % \chang{Present this eq in the previous lemma and only ref here?} holds for this $\beta$,
  which is sufficient due to Lemma~\ref{lem:compt-dirac-exist}.
  For any $\clX\times\clZ \in \scX\times\scZ$, the l.h.s is
  $\beta(\clZ \cap f^{-1}(\clX)) = \nu(f^{-1}(\{x_0\}) \cap f^{-1}(\clX) \cap \clZ | x_0) = \nu(f^{-1}(\{x_0\}) \cap \clZ | x_0) \bbI[x_0 \in \clX]$,
  where the last equality holds since $z \in f^{-1}(\{x_0\}) \cap f^{-1}(\clX)$ if and only if $f(z) = x_0 \in \clX$.
  The integral on the r.h.s is $\int_\clX \nu(\clZ|x) \nu(f^{-1}(\{x_0\}) \cap f^{-1}(\ud x) | x_0)$.
  Since the measure $\clX \mapsto \nu(f^{-1}(\{x_0\}) \cap f^{-1}(\clX) | x_0)$ is zero on the set $\clX \setminus \{x_0\}$
  (if there is any $z \in f^{-1}(\{x_0\}) \cap f^{-1}(\clX \setminus \{x_0\})$, then we have $f(z) = x_0$ and $f(z) \in \clX \setminus \{x_0\}$, which is a contradiction),
  the integral can be reduced on $\{x_0\} \cap \clX$~\citep[Thm.~16.9]{billingsley2012probability}:
  $\int_{\{x_0\} \cap \clX} \nu(\clZ|x) \nu(f^{-1}(\{x_0\}) \cap f^{-1}(\ud x) | x_0) = \bbI[x_0 \in \clX] \nu(\clZ|x_0) \nu(f^{-1}(\{x_0\}) | x_0) = \nu(\clZ|x_0) \bbI[x_0 \in \clX]$.
  Moreover, $\nu(\clZ|x_0) = \nu(\clZ \cap f^{-1}(\{x_0\}) | x_0) + \nu(\clZ \setminus f^{-1}(\{x_0\}) | x_0)$
  where $\nu(\clZ \setminus f^{-1}(\{x_0\}) | x_0) \le \nu(\bbZ \setminus f^{-1}(\{x_0\}) | x_0) = 1 - \nu(f^{-1}(\{x_0\}) | x_0) = 0$,
  we have $\nu(\clZ \setminus f^{-1}(\{x_0\}) | x_0) = 0$ and $\nu(\clZ|x_0) = \nu(\clZ \cap f^{-1}(\{x_0\}) | x_0)$.
  So the integral on the r.h.s is $\nu(\clZ \cap f^{-1}(\{x_0\}) | x_0) \bbI[x_0 \in \clX]$, which is the same as the l.h.s.
  So the equality is verified.
  % \chang{Maybe simplify the presentation of the proof.}
\end{proof}

\setlength{\parskip}{\parskip-1pt}
\belowdisplayskip=4pt
\belowdisplayshortskip=4pt

\section{Topics on the Methods of \ourmodel} \label{supp:meth}

\subsection{Relation to other auto-encoder regularizations} \label{supp:meth-aereg}

There are methods that consider regularizing the standard auto-encoder (AE)~\citep{rumelhart1986learning, baldi1989neural} with deterministic encoder $g(x)$ and decoder $f(z)$ for certain robustness.
These regularizations are introduced in addition to the standard AE loss, \ie the reconstruction loss: $\bbE_{p^*(x)} \ell(x, f(g(x)))$,
where $\ell(x,x')$ is a measure of similarity between $x$ and $x'$.
If $\ell(x, f(z))$ can be treated as a (scaled) negative log-likelihood $-\log p(x|z)$ on $\bbX$ (\eg, squared 2-norm $\ell$ for a Gaussian $p(x|z)$, cross entropy $\ell$ for a Bernoulli/categorical $p(x|z)$),
then we can adopt a distributional view of the decoder as $p(x|z)$ and the encoder as $\delta_{g(x)}(z)$
\footnote{This is the notation of a Dirac's delta function, which is not a function in the usual sense. We adopt this form for the similarity to the DAE loss.},
and reformulate the reconstruction loss also under the distributional view: $\bbE_{p^*(x)} [-\log p(x|g(x))] = \bbE_{p^*(x) \delta_{g(x)}(z)} [-\log p(x|z)]$.

\textbf{Comparison with Jacobian norm regularizations.} \hspace{4pt}
Contractive AE (CAE)~\citep{rifai2011contractive, rifai2012generative} regularizes the Jacobian norm of the encoder, $\lambda \bbE_{p^*(x)} \lrVert{\nabla_x g\trs(x)}_F^2$ ($\lambda$ controls the scale),
in hope to encourage the robustness of the encoded representation against local changes around training data.
When it is combined with the reconstruction loss which preserves data variation in the representation for reconstruction, the robustness is confined to the orthogonal direction to the data manifold, which often does not reflect semantic meanings of interest.
In other words, the variation in this orthogonal direction is contracted in the representation, hence the name.
When applied to a linear encoder, this becomes the well-known weight-decay regularizer.
Note that CAE does not have a generative modeling utility, as it uses a deterministic encoder which leads to insufficient determinacy (Sec.~\ref{sec:determ-dirac}).

Denoising AE (DAE)~\citep{vincent2008extracting, bengio2013generalized, bengio2014deep} considers the robustness to random corruption/perturbation on data,
so its encoding process is $z = g(x + \epsilon_e)$ where $\epsilon_e \sim \clN(0, \sigma_e^2 I_{d_\bbX})$
(or any other distribution with $\bbE[\epsilon_e] = 0$ and $\Var[\epsilon_e] = \sigma_e^2 I_{d_\bbX}$), which defines a probabilistic encoder $q(z|x)$ (note that this is different from an additive Gaussian encoder).
The goal for training a DAE is thus to try to reconstruct the input under the random corruption, by minimizing the DAE loss:
\begin{align}
  \bbE_{p^*(x) q(z|x)} [-\log p(x|z)],
  \label{eqn:dae}
\end{align}
which resembles the distributional form of the standard reconstruction loss.
For infinitesimal corruption variance $\sigma_e^2$ and squared 2-norm $\ell$, the DAE loss \eqref{eqn:dae} is roughly equivalent to regularizing the standard reconstruction loss with
$\sigma_e^2 \bbE_{p^*(x)} \lrVert{\nabla_x (f \circ g)\trs(x)}_F^2$, \ie the Jacobian norm of the reconstruction function~\citep{rifai2011contractive, alain2014regularized}.
So DAE can be viewed to promote the robustness of reconstruction while CAE of the representation~\citep{rifai2011contractive}.

In contrast, for additive Gaussian decoder (\ie, squared 2-norm $\ell$) and encoder, our compatibility regularization \eqref{eqn:compt-obj-jac} is $\bbE_{\rho(x,z)} \lrVert{ \frac{1}{\sigma_d^2} \big( \nabla_z f\trs(z) \big)\trs - \frac{1}{\sigma_e^2} \nabla_x g\trs(x) }_F^2$,
which is different from CAE and DAE regularizations.
Ideologically, the compatibility loss is an intrinsic constraint to make use of the distributional nature of the encoder and decoder, and is not motivated from the additional requirement of robustness in some sense.

\textbf{Comparison with a more accurate DAE reformulation.} \hspace{4pt}
In fact, the analysis in~\citep{rifai2011contractive, alain2014regularized} for DAE as a regularization of the reconstruction loss is inaccurate.
Key ingredients for the analysis are the Taylor expansions:
$\lrVert{x + \varepsilon}_2^2 = \lrVert{x}_2^2 + 2 x\trs \varepsilon + \varepsilon\trs \varepsilon$,
$\exp\{x + \varepsilon\} = \exp\{x\} (1 + \varepsilon + \frac{1}{2} \varepsilon^2) + o(\varepsilon^2)$, and
$\log(1 + \varepsilon) = \varepsilon - \frac{1}{2} \varepsilon^2 + o(\varepsilon^2)$.
In the following, we consider $\ell(x,x') = \lrVert{x-x'}_2^2$, corresponding to an additive Gaussian decoder $p(x|z)$.

First consider the additive Gaussian encoder, $q(z|x) = \clN(z | g(x), \sigma_e^2 I_{d_\bbZ})$, or $z = g(x) + \epsilon_e, \epsilon_e \sim \clN(0, \sigma_e^2 I_{d_\bbZ})$.
Consider the case for infinitesimal $\sigma_e$.
For the DAE loss \eqref{eqn:dae}, we have (omitting the expectation over $p^*(x)$):
\begin{align}
  & \bbE_{q(z|x)} [-\log p(x|z)] = \bbE_{q(z|x)} [\ell(x, f(z))] = \bbE_{q(z|x)} \lrVert{x - f(z)}_2^2 = \bbE_{p(\epsilon_e)} \lrVert{x - f(g(x) + \epsilon_e)}_2^2 \\
  ={} & \bbE_{p(\epsilon_e)} \lrbrack{ \lrVert{x - f(g(x)) - (\nabla f\trs)\trs \epsilon_e - \frac{1}{2} \epsilon_e\trs (\nabla^2 f) \epsilon_e + o(\epsilon_e^2)}_2^2 } \\
  ={} & \bbE_{p(\epsilon_e)} \Big[ \lrVert{x - f(g(x))}_2^2 - 2 \big( x - f(g(x)) \big)\trs \Big( (\nabla f\trs)\trs \epsilon_e + \frac{1}{2} \epsilon_e\trs (\nabla^2 f) \epsilon_e \Big) \\*
  & {}+ \epsilon_e\trs (\nabla f\trs) (\nabla f\trs)\trs \epsilon_e + o(\epsilon_e^2) \Big] \\
  ={} & \ell \big( x, f(g(x)) \big) - \sigma_e^2 \big( x - f(g(x)) \big)\trs \Delta f + \sigma_e^2 \lrVert{\nabla f\trs}_F^2 + o(\sigma_e^2),
  \label{eqn:dae-addgauss}
\end{align}
where $(\nabla f\trs)\trs$ is the Jacobian of $f$, $(\epsilon_e\trs (\nabla^2 f) \epsilon_e)_i := \sum_{j,k = 1..d_\bbZ} (\epsilon_e)_i (\epsilon_e)_j \partial_{z_i} \partial_{z_j} f_i$,
and $(\Delta f)_i := \sum_{j = 1..d_\bbZ} \partial_{z_j} \partial_{z_j} f_i$,
and they are evaluated at $z = g(x)$.
Note that in addition to the Jacobian norm regularization term discovered in~\citep{rifai2011contractive, alain2014regularized},
there is a second regularization term $- \sigma_e^2 \big( x - f(g(x)) \big)\trs \Delta f$ that DAE imposes.

For the data-fitting loss of \ourmodel \eqref{eqn:mle}, a similar approximation can be derived (again, omitting the expectation over $p^*(x)$):
\begin{align}
  & \log \bbE_{q(z|x)} [1 / p(x|z)] = \log \bbE_{q(z|x)} \exp\{-\log p(x|z)\} = \log \bbE_{q(z|x)} \exp\{\ell(x, f(z))\} \\
  ={} & \log \bbE_{q(z|x)} \exp\{ \lrVert{x - f(z)}_2^2 \} = \log \bbE_{p(\epsilon_e)} \exp\{ \lrVert{x - f(g(x) + \epsilon_e)}_2^2 \} \\
  ={} & \log \bbE_{p(\epsilon_e)} \exp\lrbrace{ \lrVert{x - f(g(x)) - (\nabla f\trs)\trs \epsilon_e - \frac{1}{2} \epsilon_e\trs (\nabla^2 f) \epsilon_e + o(\epsilon_e^2)}_2^2 } \\
  ={} & \log \bbE_{p(\epsilon_e)} \exp\Big\{ \lrVert{x - f(g(x))}_2^2
    - 2 \big( x - f(g(x)) \big)\trs \Big( (\nabla f\trs)\trs \epsilon_e + \frac{1}{2} \epsilon_e\trs (\nabla^2 f) \epsilon_e \Big) \\*
  & {}+ \epsilon_e\trs (\nabla f\trs) (\nabla f\trs)\trs \epsilon_e + o(\epsilon_e^2) \Big\} \\
  ={} & \log \bbE_{p(\epsilon_e)} \Big[ \exp\{\lrVert{x - f(g(x))}_2^2\} \Big( 1
      - 2 \big( x - f(g(x)) \big)\trs \Big( (\nabla f\trs)\trs \epsilon_e + \frac{1}{2} \epsilon_e\trs (\nabla^2 f) \epsilon_e \Big) \\*
      & {}+ \epsilon_e\trs (\nabla f\trs) (\nabla f\trs)\trs \epsilon_e
      + 2 \Big( \big( x - f(g(x)) \big)\trs (\nabla f\trs)\trs \epsilon_e \Big)^2 + o(\epsilon_e^2)
  \Big) \Big] \\
  ={} & \log \Big[ \exp\{\lrVert{x - f(g(x))}_2^2\} \Big( 1
      - \sigma_e^2 \big( x - f(g(x)) \big)\trs \Delta f \\*
      & {}+ \sigma_e^2 \lrVert{\nabla f\trs}_F^2
      + 2 \sigma_e^2 \lrVert{(\nabla f\trs) \big( x - f(g(x)) \big)}_2^2 + o(\sigma_e^2)
  \Big) \Big] \\
  ={} & \ell\big( x - f(g(x)) \big) - \sigma_e^2 \big( x - f(g(x)) \big)\trs \Delta f + \sigma_e^2 \lrVert{\nabla f\trs}_F^2
  + 2 \sigma_e^2 \lrVert{(\nabla f\trs) \big( x - f(g(x)) \big)}_2^2 + o(\sigma_e^2).
\end{align}
This is different from the regularization interpretation of DAE \eqref{eqn:dae-addgauss} as a third regularization term $2 \sigma_e^2 \lrVert{(\nabla f\trs) \big( x - f(g(x)) \big)}_2^2$ is presented.

The compatibility loss \eqref{eqn:compt-obj-jac} in \ourmodel becomes: $\bbE_{\rho(x,z)} \lrVert{ \frac{1}{\sigma_d^2} \big( \nabla_z f\trs(z) \big)\trs - \frac{1}{\sigma_e^2} \nabla_x g\trs(x) }_F^2$,
where $\sigma_d^2$ is the Gaussian variance of the decoder $p(x|z)$ (inverse scale for $\ell(\cdot,\cdot)$).
When $\rho(x,z) = p^*(x) q(z|x)$, this can be further reduced to (omitting the expectation over $p^*(x)$):
\begin{align}
  & \bbE_{q(z|x)} \lrVert{ \frac{1}{\sigma_d^2} \big( \nabla_z f\trs(z) \big)\trs - \frac{1}{\sigma_e^2} \nabla_x g\trs(x) }_F^2
  = \bbE_{p(\epsilon_e)} \lrVert{ \frac{1}{\sigma_d^2} \big( \nabla f\trs( g(x) + \epsilon_e ) \big)\trs - \frac{1}{\sigma_e^2} \nabla g\trs }_F^2 \\
  ={} & \bbE_{p(\epsilon_e)} \lrVert{ \frac{1}{\sigma_d^2} \Big( (\nabla f\trs)\trs + (\nabla^2 f\trs)\trs \epsilon_e + \frac{1}{2} \epsilon_e\trs (\nabla^3 f\trs)\trs \epsilon_e + o(\epsilon_e^2) \Big) - \frac{1}{\sigma_e^2} \nabla g\trs }_F^2 \\
  ={} & \lrVert{\frac{1}{\sigma_d^2} (\nabla f\trs)\trs - \frac{1}{\sigma_e^2} \nabla g\trs}_F^2 + \frac{\sigma_e^2}{\sigma_d^4} (\nabla f\trs) : (\nabla \Delta f\trs) + \frac{\sigma_e^2}{\sigma_d^4} \lrVert{\nabla^2 f\trs}_F^2 + o(\sigma_e^2),
\end{align}
where $\big( (\nabla^2 f\trs)\trs \epsilon_e \big)_{ij} := \sum_{k = 1..d_\bbZ} (\epsilon_e)_k \partial_{z_k} \partial_{z_j} f_i$,
$\big( \epsilon_e\trs (\nabla^3 f\trs)\trs \epsilon_e \big)_{ij} := \sum_{k,k' = 1..d_\bbZ} (\epsilon_e)_k (\epsilon_e)_{k'} \partial_{z_k} \partial_{z_{k'}} \partial_{z_j} f_i$,
and $(\nabla f\trs) : (\nabla \Delta f\trs) := \sum_{\substack{i = 1..d_\bbX, \\ j,k = 1..d_\bbZ}} (\partial_{z_j} f_i) (\partial_{z_j} \partial_{z_k} \partial_{z_k} f_i)$,
$\lrVert{\nabla^2 f\trs}_F^2 := \sum_{\substack{i = 1..d_\bbX, \\ j,k = 1..d_\bbZ}} \big( \partial_{z_k} \partial_{z_j} f_i \big)^2$,
and all terms are evaluated at $z = g(x)$.
This is different from the regularization of CAE and the regularization explanation of DAE.

For the corruption encoder, $z = g(x + \epsilon_e), \epsilon_e \sim \clN(0, \sigma_e^2 I_{d_\bbX})$, approximations of the DAE loss \eqref{eqn:dae} and the data-fitting loss of \ourmodel \eqref{eqn:mle} (\ie, negative data likelihood loss) are similar to the above expansions except that derivatives of $f$ are replaced with those of $f \circ g$.
Particularly, from \eqref{eqn:dae-addgauss}, we find that the conclusion in~\citep{rifai2011contractive, alain2014regularized} missed the term $- \sigma_e^2 \big( x - f(g(x)) \big)\trs \Delta (f \circ g)$ that is also of order $\sigma_e^2$.
For the compatibility loss, as there is no explicit expression of $\log q(z|x)$ (unless $g(x)$ is invertible), the above expression does not hold.
But anyway, it is different from CAE and DAE regularizations.

\textbf{Relation to the tied weights trick.} \hspace{4pt}
The compatibility loss also explains the ``tied weights'' trick in AE, which is widely adopted and is vital for the success of AE~\citep{ranzato2007sparse, vincent2008extracting, vincent2011connection, rifai2011contractive, alain2014regularized}.
The trick is considered when components of $x$ and $z$ are binary, and a one-layer, product-of-Bernoulli encoder $q(z|x) = \prod_{j=1}^{d_\bbZ} \Bern(z_j | s((W_e)_{j,:} x + (b_e)_j))$ and decoder $p(x|z) = \prod_{i=1}^{d_\bbX} \Bern(x_i | s((W_d)_{i,:} z + (b_d)_i))$ are used,
where $s(l) := 1 / (1 + \exp\{-l\})$ denotes the sigmoid activation function.
For the encoder, we have $q(z|x) = \frac{\exp\{z\trs (W_e x + b_e)\}}{\prod_{j=1}^{d_\bbZ} ( 1 + \exp\{(W_e)_{j,:} x + (b_e)_j\} )}$ thus $\log q(z|x) = z\trs (W_e x + b_e) - \sum_{j=1}^{d_\bbZ} \log (1 + \exp\{(W_e)_{j,:} x + (b_e)_j\})$,
so $\nabla_x \nabla_z\trs \log q(z|x) = W_e\trs$.
Similarly for the decoder, we have $\nabla_x \nabla_z\trs \log p(x|z) = W_d$.
So the compatibility loss \eqref{eqn:compt-obj-jac} in this case is $\lrVert{W_d - W_e\trs}_F^2$, which leads to $W_d = W_e\trs$ when it is zero.
This recovers the tied weight trick.

In this Bernoulli case, the CAE regularizer is $\bbE_{p^*(x)} \sum_{j=1}^{d_\bbZ} \frac{\exp\{-2((W_e)_{j,:} x + (b_e)_j)\} \sum_{i=1}^{d_\bbX} (W_e)_{ji}^2}{\big( 1 + \exp\{-((W_e)_{j,:} x + (b_e)_j)\} \big)^4}
= \bbE_{p^*(x)} \sum_{j=1}^{d_\bbZ} s((W_e)_{j,:} x + (b_e)_j)^2 \big( 1 - s((W_e)_{j,:} x + (b_e)_j) \big)^2 \sum_{i=1}^{d_\bbX} (W_e)_{ji}^2$
and DAE does not have the Jacobian-norm regularization explanation, so they are different from the compatibility loss.

\subsection{Gradient estimation for flow-based models without tractable inverse} \label{supp:meth-flownoinv}

\textbf{Flow-based density models.} \hspace{4pt}
As the insight we draw from the analysis on Gaussian VAE in Sec.~\ref{sec:meth-compt}, it is inappropriate to implement both conditionals $p_\theta(x|z)$, $q_\phi(z|x)$ using additive Gaussian models.
So we need more flexible and expressive probabilistic models that also allow explicit density evaluation (so implicit models like GANs are not suitable).
Flow-based models~\citep{dinh2015nice, papamakarios2017masked, kingma2018glow, behrmann2019invertible, chen2019residual} are a good choice.
They also allow direct sampling with reparameterization for efficiently estimating and optimizing the data-fitting loss \eqref{eqn:mle} (for which energy-based models are costly),
and have been used as the inference model $q_\phi(z|x)$ of VAEs~\citep{rezende2015variational, kingma2016improved, van2018sylvester, grathwohl2019ffjord}.
For a connection to these prior works, we use a flow-based model also for the inference model $q_\phi(z|x)$.
An additive-Gaussian likelihood model $p_\theta(x|z)$ is then allowed for learning nonlinear representations.

To define the distribution $q_\phi(z|x)$, a flow-based model uses a parameterized \emph{invertible} differentiable transformation $z = T_\phi(e|x)$ to map a random seed $e$ (of the same dimension $d_\bbZ$) following a simple base distribution $p(e)$ \footnote{
  Although some flow-based models (\eg, the Sylvester flow~\citep{van2018sylvester}) also incorporate the dependency on $x$ in the base distribution $\ppt_\phi(\eet|x)$ (\eg, $\clN(\eet | \mu_\phi(x), \Sigma_\phi(x))$),
  we can reparameterize this distribution as transformed from a ``more basic'' parameter-free base distribution $p(e)$ (\eg, $\clN(0,I_{d_\bbZ})$) by an $x$-dependent transformation (\eg, $\eet = \mu_\phi(x) + \Sigma_\phi(x)^{1/2} e$) and concatenate this transformation to the original one as $z = T_\phi(e|x)$.
}(\eg, a standard Gaussian) to $\bbZ = \bbR^{d_\bbZ}$.
By deliberately designed architectures, the transformation $T_\phi(\cdot|x)$ is guaranteed to be invertible, yet still being expressive, with some examples that are even universal approximators~\citep{teshima2020coupling}.
Benefited from the invertibility, the defined density can be explicitly given by the rule of change of variables~\citep[Thm.~17.2]{billingsley2012probability}:
\begin{align} \label{eqn:flow-density}
  q_\phi(z|x) = p(e = T^{-1}_\phi(z|x)) \lrvert{\nabla_z T^{-\top}_\phi(z|x)},
\end{align}
where $\lrvert{\nabla_z T^{-\top}_\phi(z|x)}$ is the absolute value of the determinant of the Jacobian of $T^{-1}_\phi(z|x)$ (w.r.t $z$).

\textbf{Problem for evaluating the compatibility loss.} \hspace{4pt}
Although $T_\phi(z|x)$ is guaranteed to be invertible, in common instances computing its inverse is intractable~\citep{rezende2015variational, kingma2016improved, van2018sylvester} or costly~\citep{grathwohl2019ffjord, behrmann2019invertible, chen2019residual}
(however, they all guarantee an easy calculation of the Jacobian determinant $\lrvert{\nabla_z T^{-\top}_\phi(z|x)}$ for efficient density evaluation).
This means that density estimation of $q_\phi(z|x)$ is intractable for an arbitrary $z$ value, but is only possible for a generated $z$ value, whose inverse $e$ is known in advance (the generated $z$ is computed from this $e$).
This however, introduces problems when computing the gradients $\nabla_x \log q_\phi(z|x)$, $\nabla_z \log q_\phi(z|x)$ for the compatibility loss (\eqref{eqn:compt-obj-jac} or \eqref{eqn:compt-obj-jac-hut}).

To see this, it is important to distinguish the ``formal arguments'' and ``actual arguments'' of a function.
It makes a difference when taking derivatives if the actual arguments are fed to formal arguments in an involved way.
What we need is the derivatives w.r.t the formal arguments, but automatic differentiation tools (\eg, the \texttt{autograd} utility in PyTorch~\citep{paszke2019pytorch}) could only compute the derivatives w.r.t the actual arguments.
We use capital subscripts for formal arguments and lowercase letters for actual arguments.
Following this rule, we denote $\log q^\phi_{Z|X}(z|x)$ for $\log q_\phi(z|x)$ above, so $\nabla_Z \log q^\phi_{Z|X}$ denotes the gradient function that differentiates the first formal argument $Z$ of $\log q^\phi_{Z|X}$, and similarly for $\nabla_X \log q^\phi_{Z|X}$. % , and $T_{E|X}(e|x)$ and $T^{-1}_{Z|X}(z|x)$ for the flow transformation and its inverse (parameter $\phi$ is temporarily omitted as its differentiation is straightforward using automatic differentiation).
Then at a generated value of $z = T_\phi(e|x)$ from a random seed $e$, the required gradients in the compatibility loss are w.r.t to the formal arguments $\nabla_Z \log q^\phi_{Z|X}(T_\phi(e|x) | x)$ and $\nabla_X \log q^\phi_{Z|X}(T_\phi(e|x) | x)$,
while automatic differentiation tools could only give the gradients w.r.t the actual arguments $\nabla_e \log q^\phi_{Z|X}(T_\phi(e|x) | x)$ and $\nabla_x \log q^\phi_{Z|X}(T_\phi(e|x) | x)$, which are not the desired gradients.
Note that we do not know the exact calculation rule of $\log q^\phi_{Z|X}(z|x)$ for arbitrary $z$ and $x$, but can only evaluate $h^\phi(e,x) := \log q^\phi_{Z|X}(T_\phi(e|x) | x)$ from a given $e$ and $x$.
Automatic differentiation could only evaluate the gradients of this $h^\phi(e,x)$ but not of $\log q^\phi_{Z|X}(z|x)$.

\textbf{Solution.} \hspace{4pt}
An explicit deduction is thus required for an expression of the correct gradients in terms of what automatic differentiation could evaluate.
From the chain rule, we have:
\begin{align}
  \nabla_e h^\phi(e,x) = \nabla_e \log q^\phi_{Z|X}(T_\phi(e|x) | x)
  ={} & \big( \nabla_e T_\phi\trs(e|x) \big) \big( \nabla_Z \log q^\phi_{Z|X}(T_\phi(e|x) | x) \big),
  \label{eqn:grad-deduc-ht-e} \\
  \nabla_x h^\phi(e,x) = \nabla_x \log q^\phi_{Z|X}(T_\phi(e|x) | x)
  ={} & \big( \nabla_x T_\phi\trs(e|x) \big) \big( \nabla_Z \log q^\phi_{Z|X}(T_\phi(e|x) | x) \big) \\*
  & {}+ \nabla_X \log q^\phi_{Z|X}(T_\phi(e|x) | x).
  \label{eqn:grad-deduc-ht-x}
\end{align}
The first equation gives one of the desired gradients:
$\nabla_Z \log q^\phi_{Z|X}(T_\phi(e|x) | x) = \big( \nabla_e T_\phi\trs(e|x) \big)^{-1} \big( \nabla_e h^\phi(e,x) \big)$.
The term $\nabla_e h^\phi(e,x)$ can be evaluated using automatic differentiation, as mentioned.
The other term, \ie the Jacobian $\nabla_e T_\phi\trs(e|x)$, can also use automatic differentiation by tracking the forward flow computation $z = T_\phi(e|x)$,
but it is often available in closed-form for flow-based models, as flow-based models need to evaluate its determinant anyway so the architecture is designed to give its closed-form expression.
% The other term $\big( \nabla_e T_\phi\trs(e|x) \big)^{-1}$ can be reformulated using the inverse function theorem as $\nabla_Z T^{-\top}_\phi(T_\phi(e|x) | x)$, which is available for flow-based models
% (note that flow-based models need to evaluate the Jacobian determinant $\lrvert{ \nabla_Z T^{-\top}_\phi(T_\phi(e|x) | x) }$ anyway for density evaluation, so they design the flow architecture so that the Jacobian $\nabla_Z T^{-\top}_\phi(T_\phi(e|x) | x)$ is known in closed form).

The second equation gives an expression of the other desired gradient:
$\nabla_X \log q^\phi_{Z|X}(T_\phi(e|x) | x) = \nabla_x h^\phi(e,x) - \big( \nabla_x T_\phi\trs(e|x) \big) \big( \nabla_Z \log q^\phi_{Z|X}(T_\phi(e|x) | x) \big)$.
Again, the first term $\nabla_x h^\phi(e,x)$ can be evaluated using automatic differentiation.
The term $\big( \nabla_Z \log q^\phi_{Z|X}(T_\phi(e|x) | x) \big)$ can be evaluated using the expression we just derived above.
For the rest term, \ie the Jacobian $\nabla_x T_\phi\trs(e|x)$, it can also be evaluated using automatic differentiation by tracking the forward flow computation $z = T_\phi(e|x)$.
For computation efficiency, this can be implemented by taking the gradient of $z = T_\phi(e|x)$ w.r.t $x$ with the \texttt{grad\_outputs} argument of \texttt{torch.autograd.grad} fed by $\nabla_Z \log q^\phi_{Z|X}(T_\phi(e|x) | x)$
(gradients w.r.t $x$ will not be back-propagated through this $\nabla_Z \log q^\phi_{Z|X}(T_\phi(e|x) | x)$).
This reduces computation complexity from $O(d_\bbX d_\bbZ)$ down to $O(d_\bbX + d_\bbZ)$.
In summary, the desired gradients can be computed via the following expressions:
\begin{align}
  \nabla_Z \log q^\phi_{Z|X}(T_\phi(e|x) | x)
  ={} & \big( \nabla_e T_\phi\trs(e|x) \big)^{-1} \big( \nabla_e h^\phi(e,x) \big),
  \label{eqn:grad-deduc-h-z} \\
  \nabla_X \log q^\phi_{Z|X}(T_\phi(e|x) | x)
  ={} & \nabla_x h^\phi(e,x) - \big( \nabla_x T_\phi\trs(e|x) \big) \big( \nabla_Z \log q^\phi_{Z|X}(T_\phi(e|x) | x) \big).
  \label{eqn:grad-deduc-h-x}
\end{align}
Second-order differentiations for the compatibility loss can be done in a similar way.

\textbf{A simplified compatibility loss.} \hspace{4pt}
For the compatibility loss \eqref{eqn:compt-obj-jac-hut} in the form of Hutchinson's trace estimator, a further simplification is possible.
The loss is given by:
\begin{align}
  C(\theta,\phi) = \bbE_{\rho(x,z)} \bbE_{p(\eta_x)} \lrVert{
    \nabla_Z \big( \eta_x\trs \nabla_X \log p^\theta_{X|Z}(x|z) - \eta_x\trs \nabla_X \log q^\phi_{Z|X}(z|x) \big)
  }_2^2,
\end{align}
with any random vector $\eta_x$ satisfying $\bbE[\eta_x] = 0, \Var[\eta_x] = I_{d_\bbX}$.
The reference distribution $\rho(x,z)$ can be taken as $p^*(x) q_\phi(z|x)$ for practical sampling for estimating the expectation.
For a flow-based $q_\phi(z|x)$, sampling from $(x,z) \sim p^*(x) q_\phi(z|x)$ is equivalent to $(x, T_\phi(e|x)), e \sim p(e)$.
So the loss can be reformulated as:
\begin{align}
  C(\theta,\phi) = \bbE_{p^*(x) p(e)} \bbE_{p(\eta_x)} \lrVert{
    \nabla_Z \big( \eta_x\trs \nabla_X \log p^\theta_{X|Z}(x | T_\phi(e|x)) - \eta_x\trs \nabla_X \log q^\phi_{Z|X}(T_\phi(e|x) | x) \big)
  }_2^2.
\end{align}
Note from \eqref{eqn:grad-deduc-h-z}, the gradient w.r.t $Z$ is an invertible linear transformation of the gradient w.r.t $e$,
so its norm equals zero if and only if the gradient w.r.t $e$ has a zero norm.
So to avoid this matrix inversion, we consider a simpler loss that achieves the same optimal solution:
\begin{align} \label{eqn:compt-obj-jac-hut-simp}
  \Ct(\theta,\phi) := \bbE_{p^*(x) p(e)} \bbE_{p(\eta_x)} \lrVert{
    \nabla_e \big( \eta_x\trs \nabla_X \log p^\theta_{X|Z}(x | T_\phi(e|x)) - \eta_x\trs \nabla_X \log q^\phi_{Z|X}(T_\phi(e|x) | x) \big)
  }_2^2.
\end{align}
For additive Gaussian $p^\theta_{X|Z}$, its gradient $\nabla_X \log p^\theta_{X|Z}$ is available in closed-form.
For $\nabla_X \log q^\phi_{Z|X}(T_\phi(e|x) | x)$ in the second term, it can be estimated using \eqref{eqn:grad-deduc-h-x} we just developed.
The subsequent gradient w.r.t $e$ can be evaluated by automatic differentiation.
So this loss is tractable to estimate and optimize.

\section{Experiment Details} \label{supp:expm}

\subsection{Baseline Methods} \label{supp:expm-baselines}

We compare our proposed \ourmodel with bi-directional models (composed of both a likelihood model and an inference model) including Denoising Auto-Encoder (DAE), Variational Auto-Encoder (VAE) and BiGAN.
Sketches of these models are introduced below.

\textbf{DAE}~\citep{vincent2008extracting} \hspace{4pt}
first corrupts a real input data $x$ with a local noise and then pass it through an encoder to define $q_\phi(z|x)$.
The latent code $z$ is then decoded to the data space by a decoder $p_\theta(x|z)$.
The objective is to minimize the reconstruction error (\eqref{eqn:dae}): $\bbE_{p^*(x)} \bbE_{q_\phi(z|x)} [-\log p_\theta(x|z)]$.
Compared with VAE, it can be seen as the version with $\beta = 0$, \ie it does not involve a prescribed prior $p(z)$.
Nevertheless, optimizing the objective w.r.t $\phi$ may lead to undesired results.
Particularly, for any given $x$, it may drive $q_\phi(z|x)$ to only concentrate on $z$ values that maximizes $\log p_\theta(x|z)$.
This renders incompatibility and an insufficient determinacy (see Sec.~\ref{sec:meth-data}).

\textbf{VAE}~\citep{kingma2014auto} \hspace{4pt}
defines a joint distribution $p_\theta(x,z) = p(z) p_\theta(x|z)$ using a specified prior $p(z)$.
It learns $p_\theta(x|z)$ to match data distribution $p^*(x)$ with the help of an inference model $q_\phi(z|x)$, using the Evidence Lower BOund (ELBO) objective:
\begin{align} \label{eqn:elbo}
  \min_{\theta, \phi} \bbE_{p^*(x)} \bbE_{q_\phi(z|x)} \left[ - \log p_\theta(x|z) \right]
  + \beta \bbE_{p^*(x)} [\KL(q_\phi(z|x) \Vert p(z))].
\end{align}
When $\beta = 1$, the negative objective is a lower bound of the data likelihood (evidence) $\bbE_{p^*(x)} [\log p_\theta(x)]$ where $p_\theta(x) := \int_\bbZ p(z) p_\theta(x|z) \ud z$, hence the name.
Optimizing it w.r.t $\phi$ also drives $q_\phi(z|x)$ to the true posterior $p_\theta(z|x)$ and also makes the bound tighter.
A $\beta$ other than 1 is considered when there is some desideratum on the latent variable, \eg, disentanglement~\citep{higgins2017beta}.

\textbf{BiGAN}~\citep{donahue2017adversarial, dumoulin2017adversarially}. \hspace{4pt}
In addition to learning the data distribution $p^*(x)$ using GAN~\citep{goodfellow2014generative}, BiGAN also aims to learn a representation extractor, so it introduces an inference model $q_\phi(z|x)$ which is often deterministic (\ie, a Dirac distribution).
The likelihood model (generator) $p_\theta(x|z)$ defines a joint $p(z) p_\theta(x|z)$ with the help of a prescribed prior $p(z)$, while the inference model also defines a joint $p^*(x) q_\phi(z|x)$.
Samples from both distributions can be easily drawn, so BiGAN seeks to match them using the GAN loss (Jensen-Shannon divergence) with the help of a discriminator $D(x,z)$.
In each training step, the discriminator $D(x,z)$ is first updated on a mini-batch of $p^*(x) q_\phi(z|x)$ data $x^+ \sim p^*(x),  z^+ \sim q_\phi(\cdot|x^+)$ with positive labels $y^+=1$ and a mini-batch of $p(z) p_\theta(x|z)$ data $z^- \sim p(z), x^- \sim p_\theta(\cdot|z^-)$ with negative labels $y^-=0$.
The goal of the training the discriminator is to minimize the binary cross entropy loss $\text{BCE}(D(x^+, z^+), y^+) + \text{BCE}(D(x^-, z^-), y^-)$.
The conditional models $q_\phi(z|x)$ and $p_\theta(x|z)$ are then updated to maximize the same loss $\text{BCE}(D(x^+, z^+), y^+) + \text{BCE}(D(x^-, z^-), y^-)$.

\textbf{GibbsNet}~\citep{lamb2017gibbsnet} \hspace{4pt}
is also considered, which is similar to BiGAN, except that BiGAN's prior-driven joint sample generation $z^- \sim p(z), x^- \sim p(x|z^-)$ is modified to run through multiple cycles:
$z_0 \sim p(z), x_0 \sim p(x|z_0), z_1 \sim q(z|x_0), x_1 \sim p(x|z_1), \cdots, z^- \sim q(z|x_{K-1}), x^- \sim p(x|z^-)$.
This resembles a Gibbs chain, and is considered in GibbsNet for reducing the influence of a specified prior, as the stationary distribution of the Markov chain does not rely on the initial distribution but is determined by the two conditional models (see Sec.~\ref{sec:relw} (paragraph~2) for its limitation).
But this iterated application of the likelihood and inference models makes gradient back-propagation involved.
The gradient accumulates with the cycling iteration, resulting in a scale much larger than usual.
This makes gradient-based optimization unstable and even leads to numerical instability.
We did not find a reasonable result in any experiment using the same architecture so we omit the comparison.

\subsection{Model Architecture} \label{supp:expm-arch}

\begin{figure}[h]
  \centering
  \includegraphics[width=.85\textwidth]{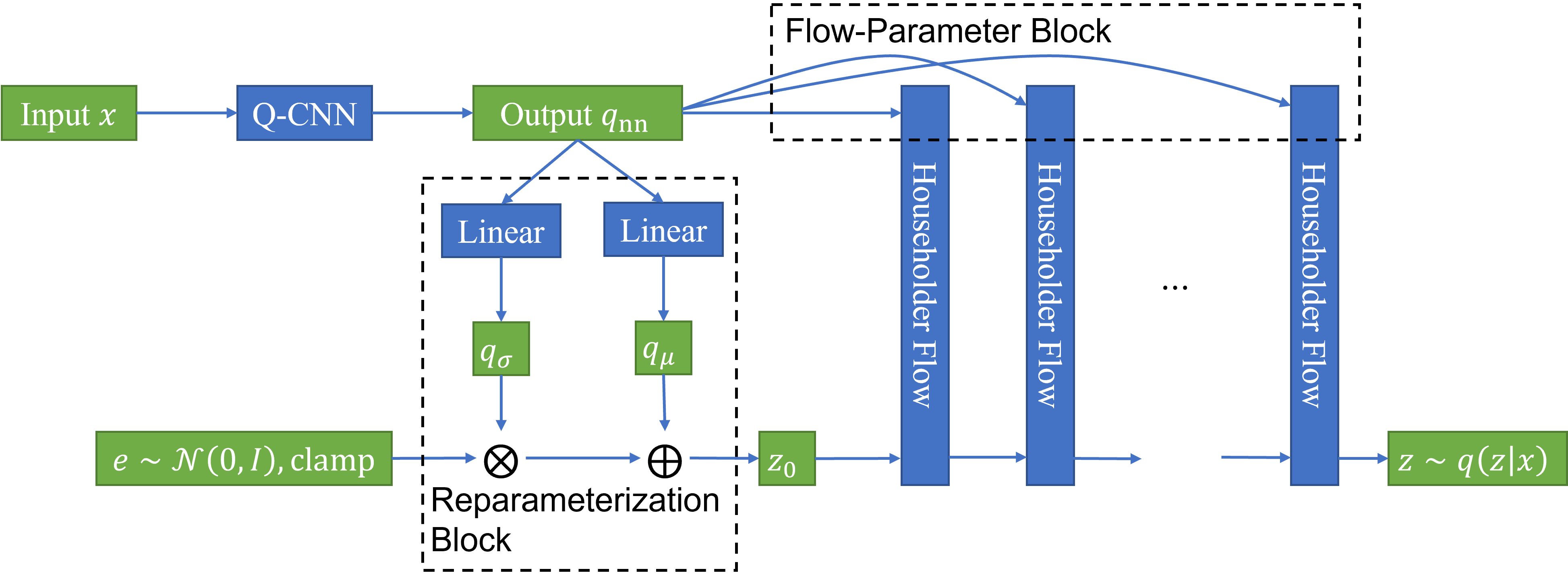}
  \caption{Flow architecture of the inference model $q_\phi(z|x)$.
    See Table~\ref{tab:arch} for detailed specification.
  }
  \vspace{-6pt}
  \label{fig:flow-arch}
\end{figure}

Our code is developed based on the repositories of
the Sylvester flow\footnote{\url{https://github.com/riannevdberg/sylvester-flows}}~\citep{van2018sylvester} and
FFJORD\footnote{\url{https://github.com/rtqichen/ffjord}}~\citep{grathwohl2019ffjord} for the task environment and flow architectures.
VAE, DAE and \ourmodel share the same architecture of $p_\theta(x|z)$ and of $q_\phi(z|x)$, which are detailed in Table~\ref{tab:arch}.
The inference model $q_\phi(z|x)$ adopts the architecture of Sylvester flow~\citep{van2018sylvester}, illustrated in Fig.~\ref{fig:flow-arch}.
It consists of a neural network (denoted as C-QNN) that outputs $q_\text{nn}$, a reparameterization module, and a set of consecutive $N$ flows.
The outputs $q_\mu$ and $q_\sigma$ are used to parameterize the diagonal Gaussian distribution for initializing $z_0$, the input to the flows.
For implementation simplicity, we choose the Householder version of the Sylvester flow.
For each flow layer, the output $\mathbf{z}_t$ of the flow given input $\mathbf{z}_{t-1}$ is:
\begin{align}
  \mathbf{z}_t = \mathbf{z}_{t-1} + \mathbf{A}_t h(\mathbf{B}_t \mathbf{z}_{t-1} + \mathbf{b}_t),
\end{align}
where $\mathbf{A}_t = \mathbf{Q}_t \mathbf{R}_t$, $\mathbf{B}_t = \tilde{\mathbf{R}}_t \mathbf{Q}_t$ and $\mathbf{b}_t$ are parameters of the $t$-th flow and $h$ is the hyperbolic-tangent activation function.
Let $\mathbf{A} = \mathbf{Q} \mathbf{R}$, $\mathbf{B} = \tilde{\mathbf{R}} \mathbf{Q}\trs$, where $\mathbf{R}$ and $\tilde{\mathbf{R}}$ are upper triangular matrices, and $\mathbf{Q} = \prod_{i=1}^H (\mathbf{I} - \frac{2\mathbf{v}_i \mathbf{v}_i\trs}{\mathbf{v}_i\trs \mathbf{v}_i})$ is a sequence of $H = 8$ Householder transformations.
All the flow parameters, \ie $\mathbf{v}_{1:H}$, $\mathbf{b}$, $\mathbf{R}$ and $\tilde{\mathbf{R}}$, depend on $q_\text{nn}$ via a flow-parameter block.

\subsection{Synthetic Experiments} \label{supp:expm-synth}

The synthetic datasets (``pinwheel'' in the main text and ``8gaussians'' in this appendix) are adopted from the above mentioned repositories of the Sylvester flow and FFJORD.
The dimension of the data space is $d_\bbX = 2$, and we take the latent space to be of the same dimension $d_\bbZ = 2$.

For the inference model $q_\phi(z|x)$, we use a three-layer MLP for the C-QNN component, with 8 hidden nodes in each layer.
After the reparameterization block, consecutive $N = 32$ Householder flow layers are concatenated.
Each flow layer has $H = 2$ Householder transformations\footnote{To make an invertible transformation, the number of Householder transformations $H$ needs to be no larger than the dimension of $z$, which is 2 in this synthetic experiment}.
For the likelihood model $p_\theta(x|z)$, it is implemented as an additive Gaussian model.
Its mean function is a three-layer MLP with 16 hidden nodes in each layer, and its variance is taken isotropic with fixed scale $0.01$.

For training, we use the Adam optimizer~\citep{kingma2014adam} with batch-size 1000 and weight decay parameter $1\e{-5}$ for all methods.
All methods use a learning rate of $1\e{-3}$ except for DAE which uses $1\e{-4}$.
For BiGAN, the generator is updated once per 128 updates of the discriminator using step size $1\e{-4}$.
For \ourmodel, conditional models $p_\theta(x|z)$ and $q_\phi(z|x)$ are trained by minimizing: $1\e{-5} \times {}$ compatibility loss \eqref{eqn:compt-obj-jac-hut-simp} ${}+{}$ data-fitting loss \eqref{eqn:mle},
where the expectation in the data-fitting loss is estimated using 16 samples from $q_\phi(z|x)$ with reparameterization~\citep{kingma2014auto}.
For the version \ourmodelpt with PreTraining, the conditional models are first pretrained as in a VAE by minimizing the ELBO objective \eqref{eqn:elbo} (with $\beta = 1$) using the standard Gaussian prior for 1000 epochs,
and are then trained as in \ourmodel by minimizing the above objective with a $10$-times smaller learning rate for the likelihood model $p_\theta(x|z)$ (same learning rate for the inference model $q_\phi(z|x)$).
DAE is also pretrained in this way.

For data generation, VAE and BiGAN use ancestral sampling: first draw a sample of $z$ from the standard Gaussian prior $p(z)$, and then draw a data sample $x$ from the likelihood model $p_\theta(x|z)$.
For DAE and \ourmodel, they do not have a prior model for ancestral sampling.
They use MCMC methods, including Gibbs sampling and SGLD (see Sec.~\ref{sec:meth-gen}).
One difference for the synthetic experiment is that the SGLD version is done by passing through the likelihood model $p_\theta(x|z)$ with prior samples drawn via SGLD in the latent space $\bbZ$ similar to \eqref{eqn:langevin-px}:
{\abovedisplayskip=2pt
\begin{align} \label{eqn:langevin-pz}
  \textstyle \!\!
  z^{(t+1)} = z^{(t)} + \varepsilon \nabla_{z^{(t)}} \log \frac{q_\phi(z^{(t)}|x^{(t)})}{p_\theta(x^{(t)}|z^{(t)})} + \sqrt{2 \varepsilon} \, \eta^{(t)}_z, \text{where }
  x^{(t)} \! \sim p_\theta(x|z^{(t)}),
  \eta^{(t)}_z \! \sim \clN(0,I_{d_\bbZ}),
\end{align} }%
and $\varepsilon$ is a step size parameter, taken as $3\e{-4}$.
Both Gibbs sampling and SGLD are run for 100 iterations (transition steps).
Their comparison for \ourmodel is shown in Fig.~\ref{tab:pinw-gibbs-compt}, where SGLD is much better.
For DAE, using Gibbs sampling and using SGLD produce similar data generation results.

We show more results next. \vspace{-1pt}

\begin{table}[t]
  \vspace{-10pt}
  \centering
  \setlength{\tabcolsep}{0.6pt}
  \begin{tabular}{c@{\hspace{4pt}}|@{\hspace{4pt}}c@{\hspace{4pt}}cccc@{\hspace{1.5pt}}|@{\hspace{1.5pt}}c}
    \multirow{5}{*}{
      \begin{minipage}{\elemfigwidth}
        \centering
        iteration 1000 \\[1.3cm]
        VAE pretraining \\[2pt]
        \raisebox{-.5\height}{\includegraphics[width=\elemfigwidth]{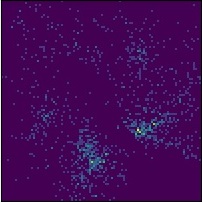}} \\[2pt]
        compt. loss $1.6\e{4}$
      \end{minipage}
    }
    &
    iteration &
    1100 & 1200 & 1300 & 1400 & 30000
    \\ \cmidrule(lr){2-7}
    &
    \textbf{\ourmodelpt} &
    \begin{minipage}{\elemfigwidth}
      \centering
      \raisebox{-.5\height}{\includegraphics[width=\elemfigwidth]{synth/\ourfigspt/xhist-langv-z-1100.jpg}}
    \end{minipage}
    &
    \begin{minipage}{\elemfigwidth}
      \centering
      \raisebox{-.5\height}{\includegraphics[width=\elemfigwidth]{synth/\ourfigspt/xhist-langv-z-1200.jpg}}
    \end{minipage}
    &
    \begin{minipage}{\elemfigwidth}
      \centering
      \raisebox{-.5\height}{\includegraphics[width=\elemfigwidth]{synth/\ourfigspt/xhist-langv-z-1300.jpg}}
    \end{minipage}
    &
    \begin{minipage}{\elemfigwidth}
      \centering
      \raisebox{-.5\height}{\includegraphics[width=\elemfigwidth]{synth/\ourfigspt/xhist-langv-z-1400.jpg}}
    \end{minipage}
    &
    \begin{minipage}{\elemfigwidth}
      \centering
      \raisebox{-.5\height}{\includegraphics[width=\elemfigwidth]{synth/\ourfigspt/xhist-langv-z-30000.jpg}}
    \end{minipage}	
    \\ \addlinespace[2pt]
    &
    compt. loss &
    $7.0\e{3}$ & $5.4\e{3}$ & $7.7\e{3}$ & $6.2\e{3}$ & $4.6\e{3}$
    \\ \cmidrule(lr){2-7}
    &
    \makecell[c]{\textbf{\ourmodelpt} w/o \\ compt. loss} &
    \begin{minipage}{\elemfigwidth}
      \centering
      \raisebox{-.5\height}{\includegraphics[width=\elemfigwidth]{synth/\ourfigsptnocompt/xhist-langv-z-1100.jpg}}
    \end{minipage}
    &
    \begin{minipage}{\elemfigwidth}
      \centering
      \raisebox{-.5\height}{\includegraphics[width=\elemfigwidth]{synth/\ourfigsptnocompt/xhist-langv-z-1200.jpg}}
    \end{minipage}
    &
    \begin{minipage}{\elemfigwidth}
      \centering
      \raisebox{-.5\height}{\includegraphics[width=\elemfigwidth]{synth/\ourfigsptnocompt/xhist-langv-z-1300.jpg}}
    \end{minipage}
    &
    \begin{minipage}{\elemfigwidth}
      \centering
      \raisebox{-.5\height}{\includegraphics[width=\elemfigwidth]{synth/\ourfigsptnocompt/xhist-langv-z-1400.jpg}}
    \end{minipage}
    &
    \begin{minipage}{\elemfigwidth}
      \centering
      \raisebox{-.5\height}{\includegraphics[width=\elemfigwidth]{synth/\ourfigsptnocompt/xhist-langv-z-30000.jpg}}
    \end{minipage}
    \\ \addlinespace[2pt]
    &
    compt. loss &
    $1.1\e{5}$ & $1.6\e{5}$ & $2.6\e{5}$ & $8.6\e{5}$ & $1.2\e{8}$
    \\ \cmidrule(lr){2-7}
    &
    DAE &
    \begin{minipage}{\elemfigwidth}
      \centering
      \raisebox{-.5\height}{\includegraphics[width=\elemfigwidth]{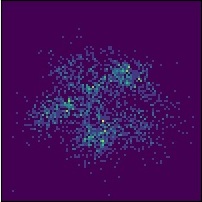}}
    \end{minipage}
    &
    \begin{minipage}{\elemfigwidth}
      \centering
      \raisebox{-.5\height}{\includegraphics[width=\elemfigwidth]{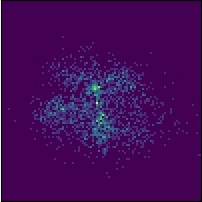}}
    \end{minipage}
    &
    \begin{minipage}{\elemfigwidth}
      \centering
      \raisebox{-.5\height}{\includegraphics[width=\elemfigwidth]{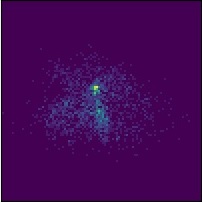}}
    \end{minipage}
    &
    \begin{minipage}{\elemfigwidth}
      \centering
      \raisebox{-.5\height}{\includegraphics[width=\elemfigwidth]{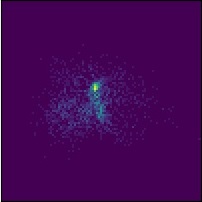}}
    \end{minipage}
    &
    \begin{minipage}{\elemfigwidth}
      \centering
      \raisebox{-.5\height}{\includegraphics[width=\elemfigwidth]{synth/\daefigs/xhist-langv-z-9200.jpg}}
    \end{minipage}
    \\ \addlinespace[2pt]
    &
    compt. loss &
    $6.3\e{3}$ & $2.2\e{3}$ & $1.9\e{3}$ & $9.7\e{2}$ & $2.2\e{2}$
  \end{tabular}
  \vspace{6pt}
  \captionsetup[table]{name=Figure}
  \captionof{table}{Generated data using $\bbZ$-space SGLD (\eqref{eqn:langevin-pz}) along the training process after VAE pretraining (iter. 1000)
    using \ourmodelpt, \ourmodelpt without compatibility loss, and DAE.
    The last DAE result is at iteration 9200, after which numerical overflow occurs.
  }
  \label{tab:pinw-compt-dae}
  \vspace{-20pt}
\end{table}

\textbf{Impact of the compatibility loss.} \hspace{4pt}
Fig.~\ref{tab:pinw-compt-dae} shows the training process of \ourmodelpt, \ourmodelpt without compatibility loss, and DAE, all after pretrained as a VAE (iter. 1000; leftmost panel), in terms of generated data distribution. % (as the histogram of data samples).
We see that the normal \ourmodelpt behaves stably to the end and well approximates the data distribution along the training process.
Its compatibility loss is indeed decreasing.
On the other hand, \ourmodelpt without the compatibility loss diverges eventually, with an exploding compatibility loss.
Although it well optimizes the data-fitting loss \eqref{eqn:mle}, if compatibility is not enforced, the loss is not the data likelihood that we want to optimize.

\vspace{-1pt}
Note that \ourmodelpt without compatibility loss improves generation quality upon the VAE-pretrained model in the first few training iterations (\eg, at iter. 1100).
This is because the ELBO objective (\eqref{eqn:elbo}) of VAE also drives $q_\phi(z|x)$ towards the true posterior $p_\theta(z|x) \propto p(z) p_\theta(x|z)$ (defined with a specified prior $p(z)$) so compatibility approximately holds in the first few iterations, which makes the data-fitting loss (\eqref{eqn:mle}) effective.

\textbf{The collapse process of DAE.} \hspace{4pt}
Fig.~\ref{tab:pinw-compt-dae} (rows~1,3) also shows the comparison with DAE.
We see that after pretraining, DAE quickly shrinks its data distribution, ending up with a collapsed data distribution, and even finally comes to a numerical problem.
This is due to the mode-collapse behavior of the inference model $q_\phi(z|x)$ from minimizing the DAE loss \eqref{eqn:dae} and the subsequent insufficient determinacy, as explained in Sec.~\ref{sec:meth-data}.
Although its compatibility loss is also decreasing, this comes at the cost of the insufficient determinacy which hinders capturing the data distribution and also makes the training process unstable.

\begin{wraptable}{r}{.42\textwidth}
  \vspace{-20pt}
  \centering
  \setlength{\tabcolsep}{1.6pt}
  \begin{tabular}{ccc}
    VAE & \textbf{\ourmodel} & \textbf{\ourmodelpt} \\
    \includegraphics[width=\elemfigwidth]{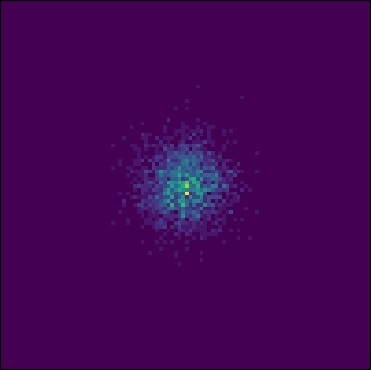} &
    \includegraphics[width=\elemfigwidth]{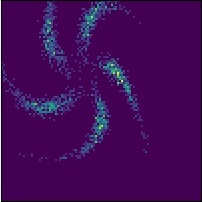} &
    \includegraphics[width=\elemfigwidth]{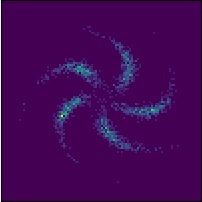}
  \end{tabular}
  \vspace{-2pt}
  \captionsetup[table]{name=Figure}
  \captionof{table}{Prior distributions $p(z)$ of VAE, \ourmodel (without pretraining), and \ourmodelpt (with VAE pretraining).
    Prior samples of \ourmodel/\ourmodelpt are drawn using $\bbZ$-space SGLD (\eqref{eqn:langevin-pz}).
  }
  \vspace{-12pt}
  \label{tab:pinw-prior}
\end{wraptable}

\textbf{Incorporating knowledge into the conditionals.} \hspace{4pt}
We plot the prior distributions in Fig.~\ref{tab:pinw-prior} of VAE, \ourmodel, and \ourmodelpt with VAE pretraining, in the form of the histogram of the drawn $z$ samples.
For \ourmodel/\ourmodelpt, samples are drawn by $\bbZ$-space SGLD (\eqref{eqn:langevin-pz}) using the same step size $\varepsilon = 3\e{-4}$ and number of iterations 100.
Compared with VAE, the priors learned by \ourmodel and \ourmodelpt are more expressive. % and leverages knowledge of isotropic data distribution $x$.
For \ourmodel which is not subjected to any further constraints, there may be multiple $p_\theta(x|z)$ and $q_\phi(z|x)$ that are compatible and well match the given data distribution.
Using a standard Gaussian prior for pretraining successfully incorporates the knowledge of a centered and centrosymmetric prior into the conditional model $p_\theta(x|z)$.
The arbitrariness of possible $p_\theta(x|z)$ and $q_\phi(z|x)$ is largely mitigated in this way.
This observation meets the discussion in Sec.~\ref{sec:expm-synth} (paragraph~4) on the aggregated posteriors in Fig.~\ref{tab:pinw-gen-post}.

\begin{SCtable}[][t]
  \vspace{-4pt}
  \centering
  \setlength{\tabcolsep}{0.6pt}
  \begin{tabular}{cccc@{\hspace{1pt}}|@{\hspace{1pt}}cc}
    data & DAE & VAE & BiGAN & \textbf{\ourmodel} & \textbf{\ourmodelpt} \\
    % \makecell[r]{generated \\ data} &
    \begin{minipage}{\elemfigwidth}
      \centering
      \raisebox{-.5\height}{\includegraphics[width=\elemfigwidth]{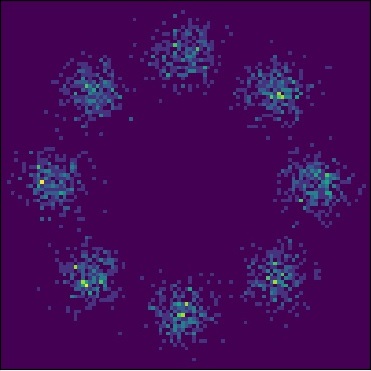}}
    \end{minipage}
    &
    \begin{minipage}{\elemfigwidth}
      \centering
      \raisebox{-.5\height}{\includegraphics[width=\elemfigwidth]{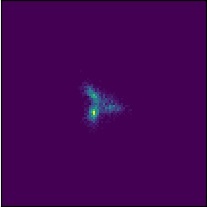}}
    \end{minipage}
    &
    \begin{minipage}{\elemfigwidth}
      \centering
      \raisebox{-.5\height}{\includegraphics[width=\elemfigwidth]{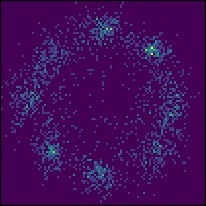}}
    \end{minipage}
    &
    \begin{minipage}{\elemfigwidth}
      \centering
      \raisebox{-.5\height}{\includegraphics[width=\elemfigwidth]{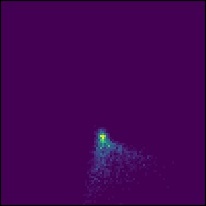}}
    \end{minipage}
    &
    \begin{minipage}{\elemfigwidth}
      \centering
      \raisebox{-.5\height}{\includegraphics[width=\elemfigwidth]{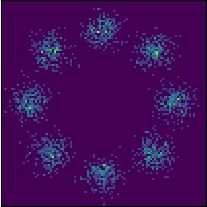}}
    \end{minipage}
    &
    \begin{minipage}{\elemfigwidth}
      \centering
      \raisebox{-.5\height}{\includegraphics[width=\elemfigwidth]{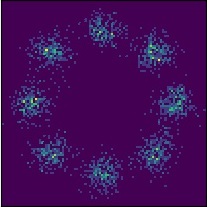}}
    \end{minipage}
    \\ \addlinespace[3pt]
    \makecell[r]{\small class-wise \\ \small aggregated \\ \small posterior} &
    \begin{minipage}{\elemfigwidth}
      \centering
      \raisebox{-.5\height}{\includegraphics[width=\elemfigwidth+\elemfigwidthamend]{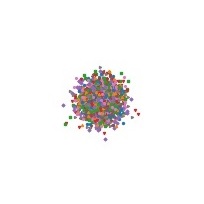}}
    \end{minipage}
    &
    \begin{minipage}{\elemfigwidth}
      \centering
      \raisebox{-.5\height}{\includegraphics[width=\elemfigwidth+\elemfigwidthamend]{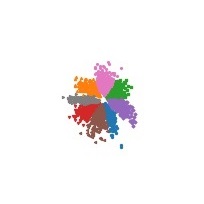}}
    \end{minipage}
    &
    \begin{minipage}{\elemfigwidth}
      \centering
      \raisebox{-.5\height}{\includegraphics[width=\elemfigwidth+\elemfigwidthamend]{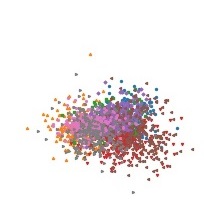}}
    \end{minipage}
    &
    \begin{minipage}{\elemfigwidth}
      \centering
      \raisebox{-.5\height}{\includegraphics[width=\elemfigwidth+\elemfigwidthamend]{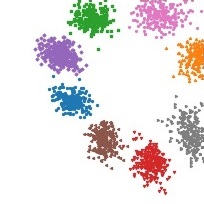}}
    \end{minipage}
    &
    \begin{minipage}{\elemfigwidth}
      \centering
      \raisebox{-.5\height}{\includegraphics[width=\elemfigwidth+\elemfigwidthamend]{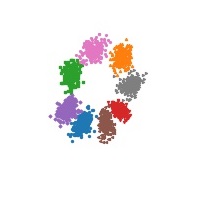}}
    \end{minipage}
  \end{tabular}
  \captionsetup[table]{name=Figure,font=small,skip=-2pt}
  \captionof{table}{Generated data (DAE and \ourmodel use $\bbZ$-space SGLD \eqref{eqn:langevin-pz}) and class-wise aggregated posteriors of DAE, VAE, BiGAN and \ourmodel,
    on the ``\textbf{8gaussians}'' dataset.
    Also shows results of \ourmodelpt that is PreTrained as a VAE.
    (Best view in color.)
  }
  \label{tab:8gauss-gen-post}
\end{SCtable}

\textbf{Results on another similar dataset (8gaussians).} \hspace{4pt}
We repeat all the settings above (for the ``pinwheel'' dataset) on another similar synthetic dataset ``8gaussians'' (Fig.~\ref{tab:8gauss-gen-post} top-left).
Its data distribution also has a (nearly) non-connected support, with 8 connected components.
The only difference in settings is that BiGAN uses a learning rate $1\e{-4}$ for updating its generator and $3\e{-5}$ for its discriminator.
We see similar observations from the results in Fig.~\ref{tab:8gauss-gen-post}.

For data generation, DAE again produces a collapsed data distribution.
VAE's data distribution is blurred and the 8 clusters are touching, making the support connected due to using the standard Gaussian prior.
BiGAN is unstable on this dataset and does not produce a reasonable data distribution, despite some parameter tuning.
In contrast, \ourmodel and \ourmodelpt still recover the data distribution faithfully, with clear 8 clusters.
Again, the advantage to overcome manifold mismatch is demonstrated.

For representation learning, DAE again collapses the class-wise aggregated posteriors of all classes and puts them in the same place.
VAE identifies the latent clusters but which are compressed to the origin and squeezed together to border each other.
BiGAN's aggregated posteriors of different classes largely overlap each other.
In contrast, \ourmodel and \ourmodelpt separate the latent clusters clearly.
\ourmodelpt additionally embodies the knowledge from the VAE-pretrained likelihood model that the prior hence the (all-class) aggregated posterior is centered and centrosymmetric,
without suffering fitting the data distribution.

In all, these observations again verify that \ourmodel achieves both superior generation and representation learning performances.

\subsection{Real-World Experiments} \label{supp:expm-real}

\textbf{Model architecture.} \hspace{4pt}
All methods use the same architecture of the inference model (encoder) and the likelihood model (decoder), illustrated in Fig.~\ref{fig:flow-arch} and detailed in Table~\ref{tab:arch}
(except for the reported results from~\citep{lamb2017gibbsnet} of BiGAN and GibbsNet listed in Table~\ref{tab:downstream}, which are around random guess using the same flow architecture).
The Gaussian variance of the likelihood model $p_\theta(x|z)$ is selected to be $0.01$ for all dimensions.
All methods use the Adam optimizer~\citep{kingma2014adam} with learning rate $1\e{-4}$ and batch-size 100 for 100 epochs.

\textbf{Data generation.} \hspace{4pt}
VAE uses ancestral sampling, and DAE uses its standard Gibbs sampling procedure initialized from $z_0 \sim \clN(0,I_{d_\bbZ})$.
\ourmodelpt generates data by running $\bbX$-space SGLD (\eqref{eqn:langevin-px}) with step size $\varepsilon = 1\e{-3}$ initialized from $x_0 \sim p_\theta(\cdot|z_0)$ where $z_0 \sim \clN(0,I_{d_\bbZ})$.

\textbf{Downstream classification.} \hspace{4pt}
For the downstream classification task on the latent space $\bbZ$, % after training the generative model,
we sample one latent representation $z$ for each data point $x$ directly from the learned inference model $q_\phi(z|x)$,
and then train a 2-layer MLP classifier with 10 hidden nodes on top of the latent representation.
Results of DAE, VAE and \ourmodelpt in Table~\ref{tab:downstream} are averaged over 10 random trials.
All downstream classifiers are trained for 100 epochs.

\textbf{Training strategy of our method.} \hspace{4pt}
\ourmodelpt first pretrains its likelihood and inference models as in a VAE using the ELBO objective \eqref{eqn:elbo} (with $\beta = 1$ on MNIST and $\beta = 0.01$ on SVHN) for 100 epochs, % (downstream classification accuracy after pretraining is around 95\% on MNIST and around 75\% on SVHN),
and then trains them using \ourmodel ($1\e{-3} \times$ compatibility loss \eqref{eqn:compt-obj-jac-hut-simp} ${}+{}$ data-fitting loss \eqref{eqn:mle}) with a $10$-times smaller learning rate for the likelihood model (same learning rate for the inference model).
On SVHN, we clamp the standard Gaussian random seed $e$ in the reparameterization step that initializes $z_0$ to be within the interval $[-0.1, 0.1]$ (element-wise).

\textbf{Remark on the VAE pretraining.} \hspace{4pt}
For real-world images, the optimization process of \ourmodel from a cold start is unstable,
possibly because the data distribution roughly concentrates on a low-dimensional space thus is nearly not absolutely continuous,
while the \ourmodel{}-defined distribution is absolutely continuous (see Lem.~\ref{lem:joint-ac} in Appx.~\ref{supp:proofs-joint-ac}).
Improved techniques to handle this issue are important future work.
Moreover, the VAE pretraining downweighs the effect of the prior (\ie, using a small $\beta$ on SVHN), so it is approximately a DAE, which does not show a reasonable result (Fig.~\ref{tab:realdata}).
So \ourmodel is the key to the high-quality results.

\begin{table}[t]
  \centering
  \caption{Inference and likelihood model architectures for MNIST and SVHN}
  \label{tab:arch}
  \begin{tabular}{c|c|c}
    \toprule
    Layers & In-Out Size & Stride \\
    \midrule
    \multicolumn{3}{c}{\textbf{Inference Model $q_\phi(z|x)$ Architecture for MNIST}-C-QNN} \\
    % \midrule
    Input $x$ &  $1 \times 28 \times 28$ & \\
    $5 \times 5$ GatedConv2d (32),  Sigmoid & $32 \times 28 \times 28$ & 1 \\
    $5 \times 5$ GatedConv2d (32),  Sigmoid & $32 \times 14 \times 14$ & 2 \\
    $5 \times 5$ GatedConv2d (64),  Sigmoid & $64 \times 14 \times 14$ & 1 \\
    $5 \times 5$ GatedConv2d (64),  Sigmoid & $64 \times  7 \times  7$ & 2 \\
    $5 \times 5$ GatedConv2d (64),  Sigmoid & $64 \times  7 \times  7$ & 1 \\
    $7 \times 7$ GatedConv2d (256), Sigmoid & $256 \times 1 \times 1$ & 1 \\
    Output $q_\text{nn}$, squeeze & 256 & \\
    \midrule
    \multicolumn{3}{c}{\textbf{Inference Model $q_\phi(z|x)$ Architecture for SVHN}-C-QNN} \\
    % \midrule
    Input $x$     &  $3 \times 32 \times 32$ & \\
    $5 \times 5$ Conv2d (32),  LReLU   & $32 \times 28 \times 28$ & 1 \\
    $4 \times 4$ Conv2d (64),  LReLU   & $64 \times 13 \times 13$ & 2 \\
    $4 \times 4$ Conv2d (128), LReLU   & $128 \times 10 \times 10$ & 1 \\
    $4 \times 4$ Conv2d (256), LReLU   & $256 \times 4 \times 4$ & 2 \\
    $4 \times 4$ Conv2d (512), LReLU   & $512 \times 1 \times 1$ & 1 \\
    $4 \times 4$ Conv2d (256), Sigmoid & $256 \times 1 \times 1$ & 1 \\
    Output $q_\text{nn}$, squeeze & 256 & \\
    \midrule
    \multicolumn{3}{c}{\textbf{Reparameterization Block for $q_\phi(z|x)$ for MNIST and SVHN}} \\
    % \midrule
    Input $q_\text{nn}$ & 256 & \\
    Output-1 $q_\mu$: Linear $256 \times 64$ & 64 & \\
    Draw $e \sim \clN(0, I_{d_\bbZ})$ and output $z_0 = q_\mu + e \odot q_\sigma$ & 64 & \\
    \midrule
    \multicolumn{3}{c}{\textbf{Flow-Parameter Block for $q_\phi(z|x)$ for MNIST and SVHN}} \\
    % \midrule
    Input $q_\text{nn}$ & 256 & \\
    Output-1 $\mathbf{v}_{1:8}$: Linear $256 \times 512$ & 512 & \\
    Output-2 $\mathbf{b}$: Linear $256 \times 8$ & 512 & \\
    Output-3 $\mathbf{R}$: Linear $256 \times (64 \times 64)$ & $(64 \times 64)$ & \\
    Output-4 $\tilde{\mathbf{R}}$: Linear $256 \times (64 \times 64)$ & $(64 \times 64)$ & \\
    \midrule
    \midrule
    \multicolumn{3}{c}{\textbf{Likelihood Model $p_\theta(z|x)$ Architecture for MNIST}} \\
    % \midrule
    Input $z$ & $64 \times 1 \times 1$ & \\
    $7 \times 7$ GatedConvT2d (64), Sigmoid & $64 \times 7 \times 7$ & 1 \\
    $5 \times 5$ GatedConvT2d (64), Sigmoid & $64 \times 7 \times 7$ & 1 \\
    $5 \times 5$ GatedConvT2d (64), Sigmoid & $64 \times 14 \times 14$ & 2 \\
    $5 \times 5$ GatedConvT2d (32), Sigmoid & $32 \times 14 \times 14$ & 1 \\
    $5 \times 5$ GatedConvT2d (32), Sigmoid & $32 \times 28 \times 28$ & 2 \\
    $5 \times 5$ GatedConvT2d (32), Sigmoid & $32 \times 28 \times 28$ & 1 \\
    $1 \times 1$ GatedConv2d (1), Sigmoid & $1 \times 28 \times 28$ & 2 \\
    Output $x$ & $1 \times 28 \times 28$ & \\
    \midrule
    \multicolumn{3}{c}{\textbf{Likelihood Model $p_\theta(z|x)$ Architecture for SVHN}} \\
    % \midrule
    Input $z$     &  $64 \times 1 \times 1$ & \\
    $4 \times 4$ ConvT2d (256), LReLU & $256 \times 4 \times 4$ & 1 \\
    $4 \times 4$ ConvT2d (128), LReLU & $128 \times 10 \times 10$ & 1 \\
    $4 \times 4$ ConvT2d (64), LReLU & $64 \times 13 \times 13$ & 1 \\
    $4 \times 4$ ConvT2d (32), LReLU & $32 \times 28 \times 28$ & 2 \\
    $5 \times 5$ ConvT2d (32), LReLU & $32 \times 32 \times 32$ & 1 \\
    $1 \times 1$ ConvT2d (32), LReLU & $32 \times 32 \times 32$ & 1 \\
    $1 \times 1$ Conv2d (32), Sigmoid & $32 \times 32 \times 32$ & 1 \\
    Output $x$ & $3 \times 32 \times 32$ & \\
    \bottomrule
  \end{tabular}
\end{table}

\end{document}